\newtheorem{theorem}{Theorem}
\newtheorem{corollary}{Corollary}
\newtheorem{lemma}{Lemma}
\DeclarePairedDelimiterX{\infdivx}[2]{(}{)}{%
  #1\;\delimsize|\delimsize|\;#2%
}
\newcommand{\kld}[2]{\ensuremath{D_\mathrm{KL}\infdivx*{#1}{#2}}}
\newcommand{\mink}[2][k]{\lfloor\min\rfloor_{#2}^{#1}}
\title{Model-Based Offline Reinforcement Learning with Pessimism-Modulated Dynamics Belief}
\author{%
  Kaiyang Guo\thanks{Corresponding to: \href{mailto:guokaiyang@huawei.com}{\texttt{guokaiyang@huawei.com}}} \qquad Yunfeng Shao \qquad Yanhui Geng \vspace{0.8mm} \\
  Huawei Noah's Ark Lab
}
\begin{document}

\maketitle

\begin{abstract}
Model-based offline reinforcement learning (RL) aims to find highly rewarding policy, by leveraging a previously collected static dataset and a dynamics model. While the dynamics model learned through reuse of the static dataset, its generalization ability hopefully promotes policy learning if properly utilized. To that end, several works propose to quantify the uncertainty of predicted dynamics, and explicitly apply it to penalize reward. However, as the dynamics and the reward are  intrinsically different factors in context of MDP, characterizing the impact of dynamics uncertainty through reward penalty may incur unexpected tradeoff between model utilization and risk avoidance. In this work, we instead maintain a belief distribution over dynamics, and evaluate/optimize policy through biased sampling from the belief. The sampling procedure, biased towards pessimism, is derived based on an alternating Markov game formulation of offline RL. We formally show that the biased sampling naturally induces an updated dynamics belief with policy-dependent reweighting factor, termed \emph{Pessimism-Modulated Dynamics Belief}. To improve policy, we devise an iterative regularized policy optimization algorithm for the game, with guarantee of monotonous improvement under certain condition. To make practical, we further devise an offline RL algorithm to approximately find the solution. Empirical results show that the proposed approach achieves state-of-the-art performance on a wide range of benchmark tasks.
\end{abstract}

\section{Introduction}
In typical paradigm of RL, the agent actively interacts with environment and receives feedback to promote policy improvement. The essential trial-and-error procedure can be costly, unsafe or even prohibitory in practice (e.g. robotics \cite{robotic}, autonomous driving \cite{driving}, and healthcare \cite{health}), thus constituting a major impediment to actual deployment of RL. Meanwhile, for a number of applications, historical data records are available to reflect the system feedback under a predefined policy. This raises the opportunity to learn policy in purely offline setting. 

In offline setting, as no further interaction with environment is permitted, the dataset provides a limited coverage in state-action space. Then, the policy that induces out-of-distribution (OOD) state-action pairs can not be well evaluated in offline learning phase, and deploying it online potentially attains terrible performance. Recent studies have reported that applying vanilla RL algorithms to offline dataset exacerbates such a distributional shift \cite{BCQ, BEAR, MOREL}, making them unsuitable for offline setting.

To tackle the distributional shift issue, a number of offline RL approaches have been developed. Specifically, one category of them  propose to directly constrain the policy close to the one collecting data \cite{BEAR, BCQ, BRAC, EMaQ}, or penalize Q-value towards conservatism for OOD state-action pairs \cite{CQL,Fisher,AlgaeDICE}. While they achieve remarkable performance gains, the policy regularizer and the Q-value penalty tightly restricts the produced policy within the data manifold. Instead, more recent works consider to quantify the uncertainty of Q-value with neural network ensembles \cite{Ensemble}, where the consistent Q-value estimates indicate high confidence and can be plausibly used during learning process, even for OOD state-action pairs \cite{EDAC, PBRL}. However, the uncertainty quantification over OOD region highly relies on how neural network generalizes \cite{BayesGenalization}. As the prior knowledge of Q-function is hard to acquire and insert into the neural network, the generalization is unlikely reliable to facilitate meaningful uncertainty quantification \cite{SWAG}. Notably, all these works are model-free.

Model-based offline RL optimizes policy based on a constructed dynamics model. Compared to the model-free approaches, one prominent advantage is that the prior knowledge of dynamics is easier to access.
First, the generic prior like smoothness widely exists in various domains \cite{smooth}.
Second, the sufficiently learned dynamics models for relevant tasks can act as a data-driven prior for the concerned task \cite{blockMDP, invariant, zeroshot}. With richer prior knowledge, the uncertainty quantification for dynamics is more trustworthy. 
Similar to the model-free approach, the dynamics uncertainty can be incorporated to find reliable policy beyond data coverage. However, an additional challenge is how to characterize the accumulative impact of dynamics uncertainty on the long-term reward, as the system dynamics is with entirely different meaning compared to the reward or Q-value.

Although existing model-based offline RL literature theoretically bounds the impact of dynamics uncertainty on final performance, their practical variants characterize the impact through reward penalty \cite{MOPO, MOREL, Revisit}. Concretely, the reward function is penalized by the dynamics uncertainty for each state-action pair \cite{MOPO}, or the agent is enforced to a low-reward absorbing state when the dynamics uncertainty exceeds a certain level \cite{MOREL}. While optimizing policy in these constructed MDPs stimulates anti-uncertainty behavior, the final policy tends to be over-conservative. For example, even the transition dynamics for a state-action pair is ambiguous among several possible candidates, these candidates may generate the states from which the system evolves similarly.\footnote{Or from these states, the system evolves differently but generates similar rewards.} Then, such a state-action pair should not be treated specially. 

Motivated by the above intuition, we propose pessimism-modulated dynamics belief for model-based offline RL. In contrast with the previous approaches, the dynamics uncertainty is not explicitly quantified. To characterize its impact, we maintain a belief distribution over system dynamics, and the policy is evaluated/optimized through biased sampling from it. The sampling procedure, biased towards pessimism, is derived based on an alternating Markov game (AMG) formulation of offline RL. We formally show that the biased sampling naturally induces an updated dynamics belief with policy-dependent reweighting factor, termed \emph{Pessimism-Modulated Dynamics Belief}. Besides, the degree of pessimism is monotonously determined by the hyperparameters in sampling procedure. 

The considered AMG formulation can be regarded as a generalization of robust MDP, which is proposed as a surrogate to optimize the percentile performance in face of dynamics uncertainty \cite{RobustControl, RMDP}. However, robust MDP suffers from two significant shortcomings: 1) The percentile criterion is over-conservative since it fixates on a single pessimistic dynamics instance \cite{SRRL,SRAC}; 2) Robust MDP is constructed based on an uncertainty set, and the improper choice of uncertainty set would further aggravate the degree of conservatism \cite{BCR, Percentile}. The AMG formulation is kept from these shortcomings. To solve the AMG, we devise an iterative regularized policy optimization algorithm, with guarantee of monotonous improvement under certain condition. To make practical, we further derive an offline RL algorithm to approximately find the solution, and empirically evaluate it on the offline RL benchmark D4RL. The results show that the proposed approach obviously outperforms previous state-of-the-art (SoTA) in 9 out of 18 environment-dataset configurations and performs competitively in the rest, without tuning hyperparameters for each task. The proof of theorems in this paper are presented in Appendix \ref{Appendix:proof}.

\section{Preliminaries} \label{section:preliminaries}
\paragraph{Markov Decision Process (MDP)} A MDP is depicted by the tuple $(\mathcal{S}, \mathcal{A}, T, r, \rho_0, \gamma)$, where $\mathcal{S}, \mathcal{A}$ are state and action spaces, $T(s'|s,a)$ is the transition probability, $r(s,a)$ is the reward function, $\rho_0(s)$ is the initial state distribution, and $\gamma$ is the discount factor. The goal of RL is to find the policy $\pi: s\rightarrow \Delta(\mathcal{A})$ that maximizes the cumulative discounted reward:
\begin{gather}
    J(\pi,T)=\mathbb{E}_{\rho_0, T, \pi}\left[\sum_{t=0}^\infty \gamma^t r(s_t,a_t)\right],
\end{gather}
where $\Delta(\cdot)$ denotes the probability simplex.
In typical RL paradigm, this is done via actively interacting with environment.

\paragraph{Offline RL} In offline setting, the environment is unaccessible, and only a static dataset $\mathcal{D}=\left\{\left(s,a,r,s'\right)\right\}$ is provided, containing the previously logged data samples under an unknown behavior policy. Offline RL aims to optimize the policy by solely leveraging the offline dataset. 

To simplify the presentation, we assume the reward function $r$ and initial state distribution $\rho_0$ are known. Then, the system dynamics is unknown only in terms of the transition probability $T$. Note that the considered formulation and the proposed approach can be easily extend to the general case without additional technical modification.

\paragraph{Robust MDP} With the offline dataset, a straightforward strategy is first learning a dynamics model $\tau(s'|s,a)$ and then optimizing policy via simulation. However, due to the limitedness of available data, the learned model is inevitably imprecise. Robust MDP \cite{RobustControl} is a surrogate to optimize policy with consideration of the ambiguity of dynamics.  Concretely, robust MDP is constructed by introducing an uncertainty set $\mathcal{T}=\{\tau\}$ to contain plausible transition probabilities. If the uncertainty set includes the true transition with probability of $(1-\delta)$, the performance of any policy $\pi$ in true MDP can be lower bounded by $\min_{\tau\in\mathcal{T}}J(\pi,\tau)$ with probability of at least $(1-\delta)$. Thus, the percentile performance for the true MDP can be optimized by finding a solution to
\begin{gather} \label{Eq:RMDP}
    \max_\pi\min_{\tau\in\mathcal{T}} J(\pi, \tau).
\end{gather}
Despite its popularity, Robust MDP suffers from two major shortcomings: First, the percentile criterion overly fixates on a single pessimistic transition instance, especially when there are multiple optimal policies for this transition but they lead to dramatically different performance for other transitions \cite{SRRL,SRAC}. This behavior results in unnecessarily conservative policy. 

Second, the level of conservatism can be further aggravated when the uncertainty set is inappropriately constructed \cite{BCR}. For a given policy $\pi$, the ideal situation is that $\mathcal{T}$ contains the $(1-\delta)$ proportion of transitions with which the policy achieves higher performance than with the other $\delta$ proportion. Then, $\min_{\tau\in\mathcal{T}}J(\pi,\tau)$ is exactly the $\delta$-quantile performance. This requires the uncertainty set to be policy-dependent, and during policy optimization the uncertainty set should change accordingly. Otherwise, if $\mathcal{T}$ is predetermined and fixed, it is possible to have $\tau'\notin\mathcal{T}$ with non-zero probability and satisfying $J(\pi^*,\tau')>\min_{\tau\in\mathcal{T}}J(\pi^*, \tau)$, where $\pi^*$ is the optimal policy for \eqref{Eq:RMDP}.
Then, adding $\tau'$ into $\mathcal{T}$ does not affect the optimal solution of the problem \eqref{Eq:RMDP}. This indicates that we are essentially optimizing a $\delta'$-quantile performance, where $\delta'$ can be much smaller than $\delta$. In literature, the uncertainty sets are mostly predetermined before policy optimization \cite{RobustControl, safe19, HCPE, fastRMDP}. 

\section{Pessimism-Modulated Dynamics Belief} \label{Section:Formulation}
In short, robust MDP is over-conservative due to the fixation on a single pessimistic transition instance and the predetermination of uncertainty set. In this work, we strive to take the entire spectrum of plausible transitions into account, and let the algorithm by itself determine which part deserves more attention. To this end, we consider an alternating Markov game formulation of offline RL, based on which the proposed offline RL approach is derived. 

\subsection{Formulation} \label{Section:AMG}

\paragraph{Alternating Markov game (AMG)} The AMG is a specialization of two-player zero-sum game, depicted by $(\mathcal{S}, \Bar{\mathcal{S}}, \mathcal{A}, \Bar{\mathcal{A}}, G, r, \rho_0, \gamma)$. The game starts from a state sampled from $\rho_0$, then two players alternatively choose actions $a\in\mathcal{A}$ and $\bar{a}\in\Bar{\mathcal{A}}$ under states $s\in\mathcal{S}$ and $\bar{s}\in\Bar{\mathcal{S}}$, along with the game transition defined by $G(\bar{s}|s,a)$ and $G(s|\bar{s},\bar{a})$. At each round, the primary player receives reward $r(s,a)$ and the secondary player receives its negative counterpart $-r(s,a)$.

\paragraph{Offline RL as AMG} We formulate the offline RL problem as an AMG, where the primary player optimizes a reliable policy for our concerned MDP in face of stochastic disturbance from the secondary player. The AMG is constructed by augmenting the original MDP. As both have the transition probability, we use game transition and system transition to differentiate them.

For the primary player, its state space $\mathcal{S}$, action space $\mathcal{A}$ and reward function $r(s,a)$ are same with those in the original MDP. After the primary player acts, the game emits a $N$-size set of system transition candidates $\mathcal{T}^{sa}$, which later acts as the state of secondary player. Formally, $\mathcal{T}^{sa}$ is generated according to 
\begin{gather} \label{Eq:candidate_set}
    G\left(\bar{s}=\mathcal{T}^{sa}|s,a\right)=\prod_{\tau^{sa}\in\mathcal{T}^{sa}}\mathbb{P}_T^{sa}(\tau^{sa}),
\end{gather}
where $\tau^{sa}(\cdot)$ re-denotes the plausible system transition $\tau(\cdot|s,a)$ for short, and $\mathbb{P}_T^{sa}$ is a given belief distribution over $\tau^{sa}$. According to \eqref{Eq:candidate_set}, the elements in $\mathcal{T}^{sa}$ are independent and identically distributed samples following $\mathbb{P}_T^{sa}$. The major difference to uncertainty set in robust MDP is that the set introduced here is unfixed and stochastic for each step. To distinguish with uncertainty set, we call it candidate set. The belief distribution $\mathbb{P}_T^{sa}$ can be chosen arbitrarily to incorporate knowledge of system transition. Particularly, when the prior distribution of system transition is accessible, $\mathbb{P}_T^{sa}$ can be obtained as the posterior by integrating the prior and the evidence $\mathcal{D}$ through Bayes' rule.

The secondary player receives the candidate set $\mathcal{T}^{sa}$ as state. Thus, its state space can be denoted by  $\Bar{\mathcal{S}}=\Delta^N(\mathcal{S})$, i.e., the n-fold Cartesian product of probability simplex over $\mathcal{S}$. Note that the state $\mathcal{T}^{sa}$ also takes the role of action space, i.e., $\Bar{\mathcal{A}}=\mathcal{T}^{sa}$, meaning that the action of secondary player is to choose a system transition from the candidate set. Given the chosen $\tau^{sa}\in\mathcal{T}^{sa}$, the game evolves by sampling $\tau^{sa}$, i.e.,
\begin{align}
    G\left(s'|\bar{s}=\mathcal{T}^{sa}, \bar{a}=\tau^{sa}\right) = \tau^{sa}(s'),
\end{align}
and the primary player receives $s'$ to continue the game. In the following, we use $\mathbb{P}_T^N(\mathcal{T}^{sa})$ to compactly denote the game transition $G\left(\bar{s}=\mathcal{T}^{sa}|s,a\right)$ in \eqref{Eq:candidate_set}, and omit the superscript $sa$ in $\tau^{sa}$, $\mathcal{T}^{sa}$ and $\mathbb{P}_T^{sa}$ when it is clear from the context.

For the above AMG, we consider a specific policy (explained below) for the secondary player, such that the cumulative discounted reward of the primary player with policy $\pi$ can be written as:
\begin{gather} \label{Eq:AMG}
    J(\pi) := \mathop{\mathbb{E}}_{\rho_0,\pi, \mathbb{P}_T^N}\mink{\tau_0\in\mathcal{T}_0}\left[\mathop{\mathbb{E}}_{\tau_0,\pi, \mathbb{P}_T^N}\mink{\tau_1\in\mathcal{T}_1}\cdots\left[\mathop{\mathbb{E}}_{\tau_\infty, \pi} \left[\sum_{t=0}^\infty \gamma^t r(s_t,a_t)\right]\right]\right],
\end{gather}
where the subscripts of $\tau$ and $\mathcal{T}$ denote time step, the expectation is over $s_0\sim\rho_0$,  $s_{t>0}\sim\tau_{t-1}(\cdot|s_{t-1},a_{t-1}), a_t\sim\pi(\cdot|s_t)$ and $\mathcal{T}_t\sim\mathbb{P}_T^N$, and the operator $\mink{x\in\mathcal{X}}f(x)$ denotes finding $k$th minimum of $f(x)$ over $x\in\mathcal{X}$. The policy of secondary player is implicitly defined by the operator $\mink{x\in\mathcal{X}}f(x)$. When changing $k\in \{1,2,\cdots,N\}$, the secondary player exhibits various degree of adversarial or aggressive disturbance to the future reward. From the view of original MDP, this behavior raises flexible tendency ranging from pessimism to optimism when evaluating policy $\pi$. 

The distinctions between the introduced AMG and the robust MDP are twofold: 1) With a belief distribution over transitions, robust MDP will select only part of its supports into uncertainty set, and the set elements are treated indiscriminatingly. It indicates that both the possibility of transitions out of the uncertainty set and the relative likelihood of transitions within the uncertainty set are discarded. However, in the AMG, the candidate set simply contains samples drawn from the belief distribution, implying no information drop in an average sense. Intuitively, by keeping richer knowledge of the system, the performance evaluation is more exact and away from excessive conservatism; 2) In robust MDP, the level of conservatism is expected to be controlled by its hyperparameter $\delta$. However, as illustrated in Section \ref{section:preliminaries}, a smaller $\delta$ does not necessarily corresponds to a more conservative performance evaluation, due to the extra impact from uncertainty set construction. In contrast, for the AMG, the degree of conservatism is adjusted by the size of candidate size $N$ and the order of minimum $k$. When changing values of $k$ or $N$, the impact to performance evaluation is ascertained, as formalized in Theorem \ref{Theo:Monotonicity}.

To evaluate $J(\pi)$, we define the following Bellman backup operator:
\begin{gather}\label{Eq:Bellman}
    \mathcal{B}^\pi_{N,k} Q(s,a) = r(s,a) + \gamma \mathbb{E}_{\mathbb{P}_T^N}\Big[\mink{\tau\in\mathcal{T}} \mathbb{E}_{\tau, \pi}\left[ Q(s',a')\right]\Big].
\end{gather}
As the operator depends on $N$, $k$ and we emphasize pessimism in offline RL, we call it $(N,k)$-pessimistic Bellman backup operator. Compared to the standard Bellman backup operator in Q-learning, $\mathcal{B}^\pi_{N,k}$ additionally includes the expectation over $\mathcal{T}\sim\mathbb{P}_T^N$ and the $k$-minimum operator over $\mathcal{T}$. Despite these differences, we prove that $\mathcal{B}^\pi_{N,k}$ is still a contraction mapping, based on which $J(\pi)$ can be easily evaluated.

\begin{theorem}[Policy Evaluation] \label{Theo:PE}
The $(N,k)$-pessimistic Bellman backup operator $\mathcal{B}^\pi_{N,k}$ is a contraction mapping. By starting from any function $Q: \mathcal{S}\times\mathcal{A}\rightarrow \mathbb{R}$ and repeatedly applying $\mathcal{B}^\pi_{N,k}$, the sequence converges to $Q^\pi_{N,k}$, with which we have $J(\pi)=\mathbb{E}_{\rho_0, \pi}\big[Q^\pi_{N,k}(s_0,a_0)\big]$.
\end{theorem}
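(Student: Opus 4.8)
The plan is to prove the statement in two stages: first establish that $\mathcal{B}^\pi_{N,k}$ is a $\gamma$-contraction in the sup-norm $\|\cdot\|_\infty$, so that Banach's fixed-point theorem delivers a unique fixed point $Q^\pi_{N,k}$ to which the iterates converge; and then identify $\mathbb{E}_{\rho_0,\pi}[Q^\pi_{N,k}(s_0,a_0)]$ with the nested expression defining $J(\pi)$ in \eqref{Eq:AMG}. For the contraction, I would fix two bounded functions $Q_1,Q_2$ and any state-action pair $(s,a)$. Subtracting the two backups in \eqref{Eq:Bellman}, the reward term cancels, and it remains to bound the discounted difference of the $\mathbb{E}_{\mathbb{P}_T^N}[\,\mink{\tau\in\mathcal{T}}\mathbb{E}_{\tau,\pi}[\cdot]\,]$ terms. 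I would peel this off layer by layer: each inner expectation $\mathbb{E}_{\tau_i,\pi}[\cdot]$ is an average, so writing $u_i=\mathbb{E}_{\tau_i,\pi}[Q_1(s',a')]$ and $v_i=\mathbb{E}_{\tau_i,\pi}[Q_2(s',a')]$ for the members $\tau_i$ of a fixed candidate set $\mathcal{T}$ gives $|u_i-v_i|\le \mathbb{E}_{\tau_i,\pi}|Q_1-Q_2|\le\|Q_1-Q_2\|_\infty$ for every $i$; likewise the outer average over $\mathcal{T}\sim\mathbb{P}_T^N$ cannot enlarge a uniform gap.

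The crux is therefore a single lemma: the $k$-th order statistic is non-expansive, i.e. if $u_{(k)}$ and $v_{(k)}$ denote $\mink{i}u_i$ and $\mink{i}v_i$, then $|u_{(k)}-v_{(k)}|\le\max_i|u_i-v_i|=:\epsilon$. I would prove this through the min-max characterization $u_{(k)}=\min_{|S|=k}\max_{i\in S}u_i$ (the $k$-th smallest value equals the least maximum over $k$-element index subsets, which one checks on the bottom-$k$ subset). Letting $S^\star$ attain $v_{(k)}=\max_{i\in S^\star}v_i$,
\[
u_{(k)}\ \le\ \max_{i\in S^\star}u_i\ \le\ \max_{i\in S^\star}\bigl(v_i+\epsilon\bigr)\ =\ v_{(k)}+\epsilon,
\]
and the reverse direction follows by symmetry. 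Chaining the three non-expansive steps and reinstating the factor $\gamma$ yields $\|\mathcal{B}^\pi_{N,k}Q_1-\mathcal{B}^\pi_{N,k}Q_2\|_\infty\le\gamma\|Q_1-Q_2\|_\infty$ with $\gamma<1$, giving both the contraction and convergence to $Q^\pi_{N,k}$.

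For the second stage, I would set $Q_0\equiv 0$ and $Q_{m+1}=\mathcal{B}^\pi_{N,k}Q_m$, so that $Q_m\to Q^\pi_{N,k}$ uniformly by the contraction. An induction on $m$ shows that $\mathbb{E}_{\rho_0,\pi}[Q_m(s_0,a_0)]$ equals the alternating expectation-minimum expression of \eqref{Eq:AMG} truncated at horizon $m$ with the tail rewards zeroed, because one application of $\mathcal{B}^\pi_{N,k}$ produces exactly one discounted reward together with one $\mathbb{E}_{\mathbb{P}_T^N}\mink{\tau}\mathbb{E}_{\tau,\pi}$ layer. Passing $m\to\infty$, the left side converges to $\mathbb{E}_{\rho_0,\pi}[Q^\pi_{N,k}(s_0,a_0)]$, while the discarded tail is bounded in absolute value by $\gamma^m R_{\max}/(1-\gamma)\to 0$ (with $R_{\max}=\sup|r|$), so the right side converges to $J(\pi)$, which is the desired identity.

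I expect the main obstacle to be the non-expansiveness of the $k$-th minimum: unlike plain $\min$ or $\max$, the $k$-th order statistic is not obviously $1$-Lipschitz, and the clean min-max characterization (or an equivalent direct sorting argument) is what makes the contraction factor come out to exactly $\gamma$ regardless of $k$ and $N$. The unrolling in the second stage is essentially bookkeeping, but some care is needed to align the alternating $\mathbb{E}$/$\mink{}$ nesting of \eqref{Eq:AMG} with the Bellman recursion and to justify interchanging the limit with the (uniformly convergent) iterates.
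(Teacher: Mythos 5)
Your proposal is correct, and its overall skeleton (sup-norm contraction via non-expansiveness of the $k$-th minimum, Banach fixed point, then unrolling the recursion to recover $J(\pi)$) is the same as the paper's. The genuine difference lies in how the crucial lemma is established. The paper first proves a two-sided sandwich bound,
\begin{equation*}
\min_i\left(x_i-y_i\right)\;\leq\;\mink{i}~x_i-\mink{i}~y_i\;\leq\;\max_i\left(x_i-y_i\right),
\end{equation*}
by a case analysis on the relative order of $y$ at the two argmin indices, with a pigeonhole/contradiction step, and then extracts the non-expansiveness $\big|\mink{i}x_i-\mink{i}y_i\big|\leq\max_i|x_i-y_i|$ as a corollary. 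You instead prove non-expansiveness directly from the min-max characterization $u_{(k)}=\min_{|S|=k}\max_{i\in S}u_i$, which yields a shorter and more symmetric argument; it is worth noting that your argument, run without absolute values, also recovers the paper's full two-sided bound (replace $\epsilon$ by $\max_i(u_i-v_i)$ to get the upper inequality, and symmetrize for the lower one), which matters because the paper reuses the one-sided lower bound later for the monotonicity lemma and for Theorem \ref{Theo:IRPO}, not just for this contraction. Conversely, your treatment of the second stage is more careful than the paper's: where the paper simply asserts that $J(\pi)=\mathbb{E}_{\rho_0,\pi}\big[Q^\pi_{N,k}(s_0,a_0)\big]$ follows "by recursively unfolding," you make the induction on truncated iterates explicit and control the tail by $\gamma^m R_{\max}/(1-\gamma)$, which requires (and you correctly flag) bounded rewards. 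Both routes are valid; yours trades the paper's reusable workhorse lemma for a cleaner self-contained proof of exactly what this theorem needs, plus a rigorous completion of the step the paper leaves informal.
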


\subsection{Pessimism-Modulated Dynamics Belief} \label{Section:PMDB}
With the converged Q-value, we are ready to establish a more direct connection between the AMG and the original MDP. The connection appears as the answer to a natural question: the calculation of \eqref{Eq:Bellman} encompasses biased samples from the dynamics belief distribution, can we treat these samples as the unbiased ones sampling from another belief distribution? We give positive answer in the following theorem.
\begin{theorem} [Equivalent MDP with Pessimism-Modulated Dynamics Belief] \label{Theo:Equivalence}
    The alternating Markov game in \eqref{Eq:AMG} is equivalent to the MDP with tuple $(\mathcal{S}, \mathcal{A}, \widetilde{T}, r, \rho_0, \gamma)$, where the transition probability $\widetilde{T}(s'|s,a)=\mathbb{E}_{\widetilde{\mathbb{P}}_T^{sa}}\left[\tau^{sa}(s')\right]$ is defined with the reweighted belief distribution $\widetilde{\mathbb{P}}_T^{sa}$:
    \begin{gather}
        \widetilde{\mathbb{P}}_T^{sa}(\tau^{sa}) \propto w\Big(\mathbb{E}_{\tau^{sa}, \pi}\big[ Q^\pi_{N,k}(s',a')\big]; k, N\Big)\mathbb{P}_T^{sa}(\tau^{sa}), \label{Eq:reweight}\\
        w(x; k, N)=\big[F(x)\big]^{k-1} \big[1-F(x)\big]^{N-k}, \label{Eq:w}
    \end{gather}
    and $F(\cdot)$ is cumulative density function. Furthermore, the value of $w(x; k, N)$ first increases and then decreases with $x$, and its maximum is obtained at the $\frac{k-1}{N-1}$ quantile, i.e., $x^*=F^{-1}\left(\frac{k-1}{N-1}\right)$.
\end{theorem}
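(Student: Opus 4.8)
The plan is to collapse the nested pessimistic backup in \eqref{Eq:Bellman} into a single order-statistics computation, and then lift that computation to the whole game using the contraction property of Theorem~\ref{Theo:PE}. Fix a state-action pair $(s,a)$ and abbreviate the one-step continuation value under a candidate transition as $g(\tau):=\mathbb{E}_{\tau,\pi}[Q^\pi_{N,k}(s',a')]$. Since by \eqref{Eq:candidate_set} the candidate set $\mathcal{T}^{sa}$ consists of $N$ i.i.d.\ draws $\tau_1,\dots,\tau_N\sim\mathbb{P}_T^{sa}$, the $N$ scalars $g(\tau_1),\dots,g(\tau_N)$ are i.i.d.\ with a common CDF $F$ (the pushforward of $\mathbb{P}_T^{sa}$ through $g$), and the operator $\mink{\tau\in\mathcal{T}}g(\tau)$ returns exactly the $k$-th order statistic $g_{(k)}$ of this sample. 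The inner expectation over $\mathbb{P}_T^N$ in \eqref{Eq:Bellman} is therefore precisely $\mathbb{E}[g_{(k)}]$, the mean of the $k$-th smallest of $N$ i.i.d.\ copies of $g(\tau)$.

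The central step is to invoke the classical order-statistics identity: the density of $g_{(k)}$ is $\frac{N!}{(k-1)!(N-k)!}[F]^{k-1}[1-F]^{N-k}$ times the density of $g(\tau)$. Changing variables back from the scalar $g$ to the underlying transition $\tau$, I would write
\begin{align*}
    \mathbb{E}_{\mathbb{P}_T^N}\!\Big[\mink{\tau\in\mathcal{T}}g(\tau)\Big]
    = \frac{N!}{(k-1)!(N-k)!}\,\mathbb{E}_{\mathbb{P}_T^{sa}}\!\Big[g(\tau)\,[F(g(\tau))]^{k-1}[1-F(g(\tau))]^{N-k}\Big],
\end{align*}
which is exactly $\mathbb{E}_{\widetilde{\mathbb{P}}_T^{sa}}[g(\tau)]$ under the reweighting \eqref{Eq:reweight}--\eqref{Eq:w}. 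To pin down the proportionality constant I would verify normalization directly: substituting $U=F(g(\tau))$, which is uniform on $[0,1]$ by the probability integral transform, reduces $\mathbb{E}_{\mathbb{P}_T^{sa}}[w(g(\tau);k,N)]$ to the Beta integral $\int_0^1 u^{k-1}(1-u)^{N-k}\,du=\frac{(k-1)!(N-k)!}{N!}$, so the combinatorial prefactor is precisely the normalizer that turns $w\cdot\mathbb{P}_T^{sa}$ into the probability distribution $\widetilde{\mathbb{P}}_T^{sa}$.

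With this single-step identity, I would close the equivalence through the fixed-point characterization of Theorem~\ref{Theo:PE}. Because $Q^\pi_{N,k}$ is the fixed point of $\mathcal{B}^\pi_{N,k}$, the identity turns its defining equation into $Q^\pi_{N,k}(s,a)=r(s,a)+\gamma\,\mathbb{E}_{\widetilde{\mathbb{P}}_T^{sa}}[g(\tau)]=r(s,a)+\gamma\,\mathbb{E}_{s'\sim\widetilde{T},\,a'\sim\pi}[Q^\pi_{N,k}(s',a')]$, i.e.\ the \emph{standard} Bellman equation of the MDP $(\mathcal{S},\mathcal{A},\widetilde{T},r,\rho_0,\gamma)$ with $\widetilde{T}(s'|s,a)=\mathbb{E}_{\widetilde{\mathbb{P}}_T^{sa}}[\tau^{sa}(s')]$. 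Uniqueness of the ordinary Bellman fixed point then forces $Q^\pi_{N,k}$ to be the Q-function of this reweighted MDP, so $J(\pi)=\mathbb{E}_{\rho_0,\pi}[Q^\pi_{N,k}(s_0,a_0)]$ coincides with its return, establishing the equivalence. For the shape of $w$, I would set $u=F(x)$ and differentiate $u^{k-1}(1-u)^{N-k}$, which factors through a term $(k-1)-(N-1)u$ that changes sign from positive to negative exactly at $u=\frac{k-1}{N-1}$; monotonicity of $F$ transfers this unimodality to $x$, with peak at $x^*=F^{-1}\!\big(\frac{k-1}{N-1}\big)$, the boundary cases $k=1$ and $k=N$ degenerating to purely monotone weights.

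The step I expect to be most delicate is the self-referential nature of the reweighted belief: both $F$ and $g$ depend on the converged value $Q^\pi_{N,k}$, so $\widetilde{\mathbb{P}}_T^{sa}$ cannot be formed before evaluation, and the result must be read as a consistency statement at the fixed point rather than a reweighting applied up front. I would also need a short argument that $F$ is continuous, or otherwise handle ties in the sample on a measure-zero set, in order to apply the order-statistics density formula and the uniformity of $F(g(\tau))$ cleanly.
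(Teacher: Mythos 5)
Your proof is correct and follows essentially the same route as the paper's: both treat $\mink{\tau\in\mathcal{T}}\mathbb{E}_{\tau,\pi}\big[Q^\pi_{N,k}(s',a')\big]$ as the $k$th order statistic of $N$ i.i.d.\ draws of $g(\tau)$, apply the order-statistics density formula, change variables from the scalar $g$ back to $\tau$ to identify the reweighted belief $\widetilde{\mathbb{P}}_T^{sa}$, and then read the pessimistic backup as the vanilla Bellman backup of the MDP with transition $\widetilde{T}$, with the unimodality of $w$ proved identically by differentiating in $u=F(x)$. Your extra touches (the Beta-integral normalization check, the explicit appeal to uniqueness of the ordinary Bellman fixed point, and the caveat about continuity of $F$) are refinements of, not departures from, the paper's argument.
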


In right-hand side of \eqref{Eq:reweight}, $\tau^{sa}$ itself is random following the belief distribution, thus $\mathbb{E}_{\tau^{sa}, \pi}\big[ Q^\pi_{N,k}(s',a')\big]$, as a functional of $\tau^{sa}$, is also a random variable, whose cumulative density function is determined by the belief distribution $\mathbb{P}_T^{sa}$. Intuitively, we can treat $\mathbb{E}_{\tau^{sa}, \pi}\big[ Q^\pi_{N,k}(s',a')\big]$ as a pessimism indicator for transition $\tau^{sa}$, with larger value indicating less pessimism. 

From Theorem \ref{Theo:Equivalence}, the maximum of $w$ is obtained at $\tau^* \!\!:\! F\!\left(\mathbb{E}_{\tau^*, \pi}\big[ Q^\pi_{N,k}(s',a')\big]\right)\!\!=\!\frac{k-1}{N-1}$, i.e., the transition with $\frac{k-1}{N-1}$-quantile pessimism indicator. Besides, when $\mathbb{E}_{\tau^{sa}, \pi}\big[ Q^\pi_{N,k}(s',a')\big]$ departs the $\frac{k-1}{N-1}$ quantile, the reweighting coefficient for its $\tau^{sa}$ decreases. Considering the effect of $w$ to $\widetilde{\mathbb{P}}_T^{sa}$ and the equivalence between the AMG and the refined MDP, we can say that $J(\pi)$ is a soft percentile performance. Compared to the standard percentile criteria, $J(\pi)$ is derived by reshaping belief distribution towards concentrating around a certain percentile, rather than fixating on a single percentile point. Due to this feature, we term $\widetilde{\mathbb{P}}_T^{sa}$ \emph{Pessimism-Modulated Dynamics Belief (PMDB)}.

Lastly, recall that all the above derivations are with hyperparameters $k$ and $N$, we present the monotonicity of $Q^\pi_{N,k}$ over them in Theorem \ref{Theo:Monotonicity}. Furthermore, by combining Theorem \ref{Theo:PE} with Theorem \ref{Theo:Monotonicity}, we conclude that $J(\pi)$ decreases with $N$ and increases with $k$. 

\begin{theorem}[Monotonicity] \label{Theo:Monotonicity}
    The converged Q-function $Q^\pi_{N,k}$ are with the following properties:
    \begin{itemize}
        \vspace{-1.8mm}
        \item Given any $k$, the Q-function $Q^\pi_{N,k}$ element-wisely decreases with $N\in\{k, k+1, \cdots\}$.
        \vspace{-1.8mm}
        \item Given any $N$, the Q-function $Q^\pi_{N,k}$ element-wisely increases with $k\in\{1,2,\cdots,N\}$.
        \vspace{-1.8mm}
        \item The Q-function $Q^\pi_{N,N}$ element-wisely increases with $N$.
    \end{itemize}
\end{theorem}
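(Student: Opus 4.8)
The plan is to reduce all three monotonicity claims to elementary facts about expected order statistics and then transfer those facts to the fixed points via a standard comparison principle for monotone contractions. The key observation is that, for a fixed $Q$, the inner quantity in \eqref{Eq:Bellman} depends on the candidate set $\mathcal{T}$ only through the scalar values $g(\tau):=\mathbb{E}_{\tau,\pi}[Q(s',a')]$, and $\mink{\tau\in\mathcal{T}}g(\tau)$ is exactly the $k$-th order statistic of the $N$ i.i.d.\ samples $g(\tau_1),\dots,g(\tau_N)$ generated under $\mathbb{P}_T$. Hence $\mathbb{E}_{\mathbb{P}_T^N}\big[\mink{\tau\in\mathcal{T}}g(\tau)\big]$ is the expected $k$-th smallest of $N$ i.i.d.\ copies of the real random variable $g(\tau)$, and the whole theorem follows once I control how this expectation moves with $N$ and $k$.

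First I would record a comparison lemma: if $\mathcal{B}$ and $\mathcal{B}'$ are contraction mappings (guaranteed by Theorem~\ref{Theo:PE}) with fixed points $Q^*$ and $\widehat{Q}$, if $\mathcal{B}$ is order-preserving, and if $\mathcal{B}Q\le\mathcal{B}'Q$ holds element-wise for every $Q$, then $Q^*\le\widehat{Q}$. This is proved by starting the iteration at $\widehat{Q}$: since $\mathcal{B}\widehat{Q}\le\mathcal{B}'\widehat{Q}=\widehat{Q}$, order-preservation of $\mathcal{B}$ makes the sequence $\mathcal{B}^n\widehat{Q}$ element-wise non-increasing, and by Theorem~\ref{Theo:PE} it converges to $Q^*$, whence $Q^*\le\widehat{Q}$. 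I would then check that $\mathcal{B}^\pi_{N,k}$ is order-preserving: if $Q\le\widetilde{Q}$ element-wise then $g(\tau)\le\widetilde{g}(\tau)$ for every $\tau$, the $k$-th order statistic is non-decreasing in each of its arguments, and taking the expectation over $\mathbb{P}_T^N$ preserves the inequality.

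With these in hand, the three claims reduce to three domination inequalities between operators, each obtained by a coupling argument on order statistics; write $g_{(k)}^{(N)}$ for the $k$-th smallest of $N$ i.i.d.\ copies of $g(\tau)$. For claim one (fixed $k$), couple the $N$- and $(N+1)$-sample problems by letting the first $N$ draws coincide; removing a sample can only raise the $k$-th smallest (it is either unchanged or replaced by the larger $(k+1)$-th value), so $g_{(k)}^{(N+1)}\le g_{(k)}^{(N)}$ pointwise, which gives $\mathcal{B}^\pi_{N+1,k}Q\le\mathcal{B}^\pi_{N,k}Q$ for every $Q$ and hence $Q^\pi_{N+1,k}\le Q^\pi_{N,k}$. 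For claim two (fixed $N$), within a single sample set $g_{(k)}^{(N)}\le g_{(k+1)}^{(N)}$ trivially, so $\mathcal{B}^\pi_{N,k}Q\le\mathcal{B}^\pi_{N,k+1}Q$ and $Q^\pi_{N,k}\le Q^\pi_{N,k+1}$. For claim three (the maxima), couple again so the maximum of $N+1$ draws dominates the maximum of its first $N$, giving $g_{(N)}^{(N)}\le g_{(N+1)}^{(N+1)}$ pointwise, hence $\mathcal{B}^\pi_{N,N}Q\le\mathcal{B}^\pi_{N+1,N+1}Q$ and $Q^\pi_{N,N}\le Q^\pi_{N+1,N+1}$.

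The only delicate point is making the couplings rigorous: I must confirm that under each coupling the marginal law of the smaller-sample order statistic is unchanged, which holds because the first $N$ of $N+1$ i.i.d.\ draws are themselves i.i.d.\ $\mathbb{P}_T$, so the pointwise inequalities legitimately transfer to the expectations that actually enter the two operators. Everything else—order-preservation of $\mathcal{B}^\pi_{N,k}$ and the comparison lemma—is routine. Combining each operator domination with the comparison lemma yields the three monotonicities, and pairing them with $J(\pi)=\mathbb{E}_{\rho_0,\pi}\big[Q^\pi_{N,k}(s_0,a_0)\big]$ from Theorem~\ref{Theo:PE} gives the stated monotonicity of $J(\pi)$, decreasing in $N$ and increasing in $k$.
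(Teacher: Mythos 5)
Your proposal is correct and follows essentially the same route as the paper's proof: the same operator-monotonicity lemma, the same pointwise order-statistic inequalities, and the same coupling that identifies the first $N$ of $N+1$ i.i.d.\ draws. The only cosmetic difference is the direction of the comparison step --- you iterate the dominated operator starting from the dominating operator's fixed point, whereas the paper iterates the dominating operator starting from the other fixed point --- which is a mirror-image of the same monotone-contraction argument.
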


\begin{wrapfigure}{r}{0.38\textwidth}
    \vspace{-13pt}
    \centering
    \includegraphics[width=0.35\textwidth]{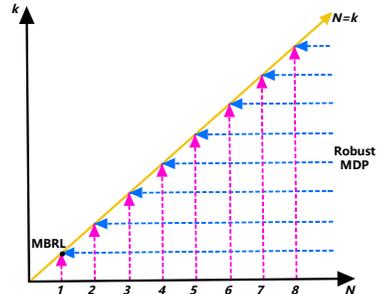}
    \vspace{-7pt}
    \caption{\small Monotonicity of Q-values. The arrows indicate the directions along which Q-values increase.}
    \vspace{-25pt}
\end{wrapfigure}
\textbf{Remark 1} (Special Cases). \emph{For $N=k=1$, we have $\widetilde{\mathbb{P}}^{sa}_T=\mathbb{P}_T^{sa}$. Then, the performance is evaluated through sampling the initial belief distribution. This resembles the common methodology in model-based RL (MBRL), with dynamics belief defined by the uniform distribution over dynamics model ensembles. For $k=\delta (N-1) + 1$ and $N\rightarrow\infty$, $\widetilde{\mathbb{P}}_T^{sa}$ asymptotically collapses to be a delta function. Then, $J(\pi)$ degrades to fixate on a single transition instance. It is equivalent to the robust MDP with the uncertainty set constructed as $\Big\{\tau^{sa}: \mathbb{P}_T^{sa}(\tau^{sa})>0,\mathbb{E}_{\tau^{sa}, \pi}\big[ Q^\pi_{N,k}(s',a')\big]\geq F^{-1}(\delta)\Big\}$. In this sense, the AMG is a successive interpolation between MBRL and robust MDP.}

\section{Policy Optimization with Pessimism-Modulated Dynamics Belief} \label{Section:PO_PMDB}
In this section, we optimize policy by maximizing $J(\pi)$. The major consideration is that the methodology should adapt well for both discrete and continuous action spaces. In continuous setting of MDP, the policy can be updated by following stochastic/deterministic policy gradient \cite{SPG,DPG}. However, for the AMG, evaluating $J(\pi)$ itself involves an inner dynamic programming procedure as in Theorem \ref{Theo:PE}. As each evaluation of $J(\pi)$ can only produce one exact gradient, it is inefficient to maximize $J(\pi)$ via gradient-based method. In this section, we consider a series of sub-problems with Kullback–Leibler (KL) regularization. Solving each sub-problem makes prominent update to the policy, and the sequence of solutions for sub-problems monotonously improve regarding $J(\pi)$. Based on this idea, we further derive offline RL algorithm to approximately find the solution.

\subsection{Iterative Regularized Policy Optimization}
Define the KL-regularized return for the AMG by
\begin{align}\label{Eq:PKLR}
    \Bar{J}(\pi;\mu) := \mathop{\mathbb{E}}_{\rho_0,\pi, \mathbb{P}_T^N}\mink{\tau_0\in\mathcal{T}_0}&\left[\mathop{\mathbb{E}}_{\tau_0,\pi, \mathbb{P}_T^N}\mink{\tau_1\in\mathcal{T}_1}\cdots\left[\mathop{\mathbb{E}}_{\tau_\infty, \pi} \left[\sum_{t=0}^\infty \gamma^t \bigg( r(s_t,a_t) \right.\right.\right. \nonumber \\
    &\qquad \qquad \qquad   \ \ \left.\left.\left. -\alpha \kld{\big. \pi(\cdot|s_t)}{\mu(\cdot|s_t)}\vphantom{\sum_1^2}\bigg)\right]\right]\right],
\end{align}
where $\alpha\geq0$ is the strength of regularization, and $\mu$ is a reference policy to keep close with.

KL-regularized MDP is considered in previous works to enhance exploration, improve robustness to noise  or insert expert knowledge \cite{SQL, SAC, RegularizedMDP, ISKL, Prior}. Here, the idea is to constrain the optimized policy in neighbour of a reference policy so that the inner problem is adequately evaluated for such a small policy region. 

To optimize $\bar{J}(\pi;\mu)$, we introduce the soft $(N,k)$-pessimistic Bellman backup operator:
\begin{align} \label{Eq:Optimal_Bellman}
    \Bar{\mathcal{B}}^*_{N,k} Q(s,a) 
    &= r(s,a) + \gamma\mathbb{E}_{\mathbb{P}_T^N}\left[\mink{\tau\in\mathcal{T}} \mathbb{E}_{\tau}\left[ \alpha\log\mathbb{E}_{\mu}\exp{\left(\frac{1}{\alpha}Q(s',a')\right)} \right]\right].
\end{align}

\begin{theorem}[Regularized Policy Optimization] \label{Theo:RPO}
The soft $(N,k)$-pessimistic Bellman backup operator $\Bar{\mathcal{B}}^*_{N,k}$ is a contraction mapping. By starting from any function $Q: \mathcal{S}\times\mathcal{A}\rightarrow \mathbb{R}$ and repeatedly applying $\Bar{\mathcal{B}}^*_{N,k}$, the sequence converges to $\Bar{Q}^*_{N,k}$, with which the optimal policy for $\bar{J}(\pi;\mu)$ is obtained as $\bar{\pi}^*(a|s)\propto\mu(a|s)\exp{\left(\frac{1}{\alpha}\bar{Q}^*_{N,k}(s,a)\right)}$.
\end{theorem}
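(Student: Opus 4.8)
The plan is to follow the two-part template of Theorem~\ref{Theo:PE}: first show that $\bar{\mathcal{B}}^*_{N,k}$ is a $\gamma$-contraction in the supremum norm, so that Banach's fixed-point theorem gives a unique limit $\bar{Q}^*_{N,k}$ reached from any initialization, and then verify that the greedy policy $\bar{\pi}^*(a|s)\propto\mu(a|s)\exp\!\big(\tfrac{1}{\alpha}\bar{Q}^*_{N,k}(s,a)\big)$ maximizes $\bar{J}(\pi;\mu)$. For the contraction I would decompose $\bar{\mathcal{B}}^*_{N,k}$ into the composition, applied to $Q$, of the soft-value map $L[Q](s'):=\alpha\log\mathbb{E}_{\mu}\exp\!\big(\tfrac{1}{\alpha}Q(s',a')\big)$, the inner expectation $\mathbb{E}_\tau$, the $k$-th order-statistic operator $\mink{\tau\in\mathcal{T}}$, and the outer expectation $\mathbb{E}_{\mathbb{P}_T^N}$, followed by scaling by $\gamma$ and adding $r$. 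Each factor is non-expansive in sup-norm: the expectations trivially; the order-statistic operator is $1$-Lipschitz (exactly the property already established in the proof of Theorem~\ref{Theo:PE}); and $L$ is non-expansive because it is monotone and shift-invariant, $L[Q+c]=L[Q]+c$, which together force $|L[Q_1]-L[Q_2]|\leq\|Q_1-Q_2\|_\infty$. Composing non-expansive maps and multiplying by $\gamma$ yields $\|\bar{\mathcal{B}}^*_{N,k}Q_1-\bar{\mathcal{B}}^*_{N,k}Q_2\|_\infty\leq\gamma\|Q_1-Q_2\|_\infty$.

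The characterization of $\bar{\pi}^*$ rests on the Gibbs variational identity
$$L[Q](s') = \max_{p\in\Delta(\mathcal{A})}\Big\{\mathbb{E}_{a'\sim p}\big[Q(s',a')\big]-\alpha\kld{p}{\mu(\cdot|s')}\Big\},$$
whose unique maximizer is $p(a')\propto\mu(a'|s')\exp\!\big(\tfrac{1}{\alpha}Q(s',a')\big)$. I would introduce the soft policy-evaluation operator $\bar{\mathcal{B}}^\pi_{N,k}$, obtained from $\bar{\mathcal{B}}^*_{N,k}$ by replacing $L[Q](s')$ with the bracketed expression evaluated at the fixed policy $\pi$; by the same composition argument it is a $\gamma$-contraction, and a policy-evaluation lemma parallel to Theorem~\ref{Theo:PE} shows that its fixed point $\bar{Q}^\pi_{N,k}$ satisfies $\bar{J}(\pi;\mu)=\mathbb{E}_{\rho_0}\big[\mathbb{E}_{a_0\sim\pi}\bar{Q}^\pi_{N,k}(s_0,a_0)-\alpha\kld{\pi(\cdot|s_0)}{\mu(\cdot|s_0)}\big]$. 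Because the bracketed term never exceeds $L[Q]$ pointwise and $\mathbb{E}_\tau$, $\mink{\tau\in\mathcal{T}}$, and $\mathbb{E}_{\mathbb{P}_T^N}$ are all order-preserving, we obtain $\bar{\mathcal{B}}^\pi_{N,k}Q\leq\bar{\mathcal{B}}^*_{N,k}Q$ for every $\pi$ and $Q$, with equality precisely when $\pi$ is the greedy policy of $Q$.

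Evaluating these relations at the fixed point gives $\bar{\mathcal{B}}^{\bar{\pi}^*}_{N,k}\bar{Q}^*_{N,k}=\bar{\mathcal{B}}^*_{N,k}\bar{Q}^*_{N,k}=\bar{Q}^*_{N,k}$, so $\bar{Q}^*_{N,k}$ is a fixed point of $\bar{\mathcal{B}}^{\bar{\pi}^*}_{N,k}$ and, by uniqueness, $\bar{Q}^{\bar{\pi}^*}_{N,k}=\bar{Q}^*_{N,k}$. For any competing $\pi$, starting the iteration of $\bar{\mathcal{B}}^\pi_{N,k}$ from $\bar{Q}^*_{N,k}$ and invoking the domination together with monotonicity of $\bar{\mathcal{B}}^\pi_{N,k}$ produces a non-increasing sequence converging to $\bar{Q}^\pi_{N,k}\leq\bar{Q}^*_{N,k}$. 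Feeding this back through the initial-state expression and applying the variational inequality once more at $t=0$ yields $\bar{J}(\pi;\mu)\leq\mathbb{E}_{\rho_0}\big[L[\bar{Q}^\pi_{N,k}]\big]\leq\mathbb{E}_{\rho_0}\big[L[\bar{Q}^*_{N,k}]\big]=\bar{J}(\bar{\pi}^*;\mu)$, which establishes optimality of $\bar{\pi}^*$.

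I expect the main obstacle to be the domination-and-monotonicity step rather than the contraction estimate or the variational identity, both of which are routine. The delicate point is justifying that the state-wise greedy policy stays simultaneously optimal after composition with the adversarial $k$-th-minimum operator: one must argue that raising the inner soft value pointwise at every $(s',\tau)$ can only increase each of the $N$ candidate values and hence their $k$-th order statistic, so that no coupling between the primary player's action choice and the secondary player's transition selection can make greediness suboptimal. This hinges exactly on the monotonicity of $\mink{\tau\in\mathcal{T}}$, which I would isolate as a short lemma, reusing the order-statistic machinery developed for Theorem~\ref{Theo:PE}.
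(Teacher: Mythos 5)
Your proposal is correct and takes essentially the same route as the paper's own proof: the contraction argument rests on the identical two ingredients (the order-statistic Lipschitz bound of Corollary~\ref{corollary:basic} and the non-expansiveness of the log-sum-exp map via monotonicity and shift-invariance), and optimality is established through the regularized policy-evaluation operator (Theorem~\ref{Theo:Regularized_PE}), the Gibbs variational identity, the monotonicity lemma (Lemma~\ref{lemma:Regularized_Mono}), and the fixed-point identification $\Bar{Q}^{\bar{\pi}^*}_{N,k}=\Bar{Q}^*_{N,k}$, exactly as in the appendix. The only cosmetic differences are mirror-image bookkeeping: you iterate $\Bar{\mathcal{B}}^{\pi}_{N,k}$ downward from $\Bar{Q}^*_{N,k}$ where the paper iterates $\Bar{\mathcal{B}}^*_{N,k}$ upward from $\Bar{Q}^{\pi'}_{N,k}$, and you route the final comparison through $\mathbb{E}_{\rho_0}\big[L[\Bar{Q}^{\pi}_{N,k}]\big]\leq\mathbb{E}_{\rho_0}\big[L[\Bar{Q}^*_{N,k}]\big]$ pointwise, which if anything states the paper's last chain of inequalities more cleanly.
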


Apparently, the solved policy $\bar{\pi}^*$ depends on the reference policy $\mu$, and setting $\mu$ arbitrarily aforehand can result in suboptimal policy for $J(\pi)$. In fact, we can construct a sequence of sub-problems, with $\mu$ chosen as an improved policy from last sub-problem. By successively solving them, the impact from the initial reference policy is gradually eliminated.

\begin{theorem}[Iterative Regularized Policy Optimization]\label{Theo:IRPO} By starting from any stochastic policy $\pi_0: s\rightarrow \Delta(\mathcal{A})$ and repeatedly finding $\pi_{i+1}: \bar{J}(\pi_{i+1};\pi_i) > \bar{J}(\pi_i;\pi_i)$,
the sequence of $\{\pi_i\}$ monotonically improves regarding $J(\pi)$, i.e., $J(\pi_{i+1})\geq J(\pi_i)$. Especially, when $\pi_0(a|s)>0,\forall  s,a$ and $\{\pi_i\}$ are obtained via regularized policy optimization in Theorem \ref{Theo:RPO}, we have $\frac{\pi_i(a|s)}{\pi_i(a'|s)}\rightarrow\infty$ for any $s, a, a'$ such that $\lim_{i\rightarrow\infty}Q_{N,k}^{\pi_i}(s,a)>\lim_{i\rightarrow\infty}Q_{N,k}^{\pi_i}(s, a')$.
\end{theorem}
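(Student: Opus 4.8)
Everything hinges on the identity $\bar{J}(\pi;\pi)=J(\pi)$, which is valid because taking $\mu=\pi$ annihilates each penalty $\kld{\pi(\cdot|s_t)}{\pi(\cdot|s_t)}$ in \eqref{Eq:PKLR}. For the monotone-improvement half I would first record that the nested functional defining $\bar{J}$ and $J$ is monotone in the per-step reward: expectations preserve order, and for each realized candidate set the order statistic $\mink{\tau\in\mathcal{T}}$ is monotone (if one ranked quantity dominates another pointwise, so does its $k$-th smallest value). Since $\kld{\pi(\cdot|s_t)}{\mu(\cdot|s_t)}\ge 0$, the integrand of $\bar{J}(\pi;\mu)$ never exceeds that of $J(\pi)$, so $\bar{J}(\pi;\mu)\le J(\pi)$ for every reference $\mu$. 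Chaining this with the selection rule and the identity gives
\begin{gather*}
J(\pi_{i+1})\ \ge\ \bar{J}(\pi_{i+1};\pi_i)\ >\ \bar{J}(\pi_i;\pi_i)\ =\ J(\pi_i),
\end{gather*}
which is the desired $J(\pi_{i+1})\ge J(\pi_i)$.

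For the ratio half, let $\bar{Q}_i$ denote the converged soft Q-value of the $i$-th subproblem (reference $\pi_i$) from Theorem \ref{Theo:RPO}, so that $\pi_{i+1}(a|s)\propto\pi_i(a|s)\exp\!\big(\tfrac1\alpha\bar{Q}_i(s,a)\big)$. The hypothesis $\pi_0(a|s)>0$ makes every $\pi_i(a|s)$ positive by induction, so taking logarithms and telescoping the multiplicative update gives
\begin{gather*}
\log\frac{\pi_n(a|s)}{\pi_n(a'|s)}=\log\frac{\pi_0(a|s)}{\pi_0(a'|s)}+\frac1\alpha\sum_{i=0}^{n-1}\big(\bar{Q}_i(s,a)-\bar{Q}_i(s,a')\big).
\end{gather*}
It then suffices to prove that the summand tends to the strictly positive limit $\lim_i Q^{\pi_i}_{N,k}(s,a)-\lim_i Q^{\pi_i}_{N,k}(s,a')$, since a tail bounded below by a positive constant forces the sum, and hence the ratio, to $+\infty$.

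The core step is the element-wise sandwich $Q^{\pi_i}_{N,k}\le \bar{Q}_i\le Q^{\pi_{i+1}}_{N,k}$. For the lower bound I would invoke the Gibbs identity $\alpha\log\mathbb{E}_{\pi_i}\exp(\tfrac1\alpha Q)=\max_{\pi'}\{\mathbb{E}_{\pi'}[Q]-\alpha\kld{\pi'}{\pi_i}\}$, which shows that $\bar{\mathcal{B}}^*_{N,k}$ dominates the regularized evaluation operator of any fixed policy; specializing to $\pi_i$ (zero KL) and comparing fixed points of monotone contractions yields $\bar{Q}_i\ge Q^{\pi_i}_{N,k}$. For the upper bound, $\bar{Q}_i$ is the regularized Q-value of the maximizer $\pi_{i+1}$, whose evaluation operator differs from the unregularized $\mathcal{B}^{\pi_{i+1}}_{N,k}$ only by subtracting the nonnegative $\alpha\kld{\pi_{i+1}}{\pi_i}$, giving $\bar{Q}_i\le Q^{\pi_{i+1}}_{N,k}$. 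The sandwich forces $Q^{\pi_i}_{N,k}\le Q^{\pi_{i+1}}_{N,k}$, so the evaluated Q-values increase element-wise; bounded by $\|r\|_\infty/(1-\gamma)$ they converge to a common limit $Q_\infty$, and squeezing gives $\bar{Q}_i\to Q_\infty$. Hence $\bar{Q}_i(s,a)-\bar{Q}_i(s,a')\to Q_\infty(s,a)-Q_\infty(s,a')>0$ and the telescoped sum diverges.

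The step I expect to be the main obstacle is making this sandwich rigorous. It requires that the monotonicity of the nested map $\mathbb{E}_{\mathbb{P}_T^N}[\mink{\tau\in\mathcal{T}}\mathbb{E}_\tau[\cdot]]$ be inherited by the variational characterization of the log-sum-exp inside $\bar{\mathcal{B}}^*_{N,k}$, combined with a fixed-point comparison lemma for monotone contractions (the contraction property coming from Theorems \ref{Theo:PE} and \ref{Theo:RPO}). Some care is also needed so that the monotone convergence is not disturbed when the limiting policy concentrates and some $\pi_i(a'|s)\to 0$; this is exactly the behavior being established, and it is harmless for the log-sum-exp because probability mass vanishes only on lower-value actions.
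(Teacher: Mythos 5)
Your proposal is correct and follows essentially the same route as the paper's proof: the first half is the paper's chain $J(\pi_{i+1})\geq \bar{J}(\pi_{i+1};\pi_i) > \bar{J}(\pi_i;\pi_i)=J(\pi_i)$, obtained from KL-nonnegativity together with monotonicity of the expectation/$k$-th-minimum structure (the paper phrases this at the Bellman-operator level via Lemma \ref{lemma} and Lemma \ref{lemma:Mono} rather than at the level of returns). The second half — the sandwich $Q^{\pi_i}_{N,k}\leq \bar{Q}^{\pi_{i+1}}\leq Q^{\pi_{i+1}}_{N,k}$ via the Gibbs variational identity and fixed-point comparison, monotone-bounded convergence with squeezing, and telescoping the multiplicative policy update against an eventual gap $\epsilon>0$ — is exactly the paper's argument.
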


Ideally, by combining Theorems \ref{Theo:RPO} and \ref{Theo:IRPO}, the policy for $J(\pi)$ can be continuously improved by infinitely applying soft pessimistic Bellman backup operator for each of the sequential sub-problems.

\textbf{Remark 2} (Iterative Regularized Policy Optimization as Expectation–Maximization with Structured Variational Posterior). \emph{According to Theorem \ref{Theo:Equivalence}, PMDB can be recovered with the converged Q-function $Q^{\pi^*}_{N,k}$. From an end-to-end view, we have an initial dynamics belief $\mathbb{P}_T^{sa}$, then via the calculation based on the belief samples and the reward function, we obtain the updated dynamics belief $\widetilde{\mathbb{P}}_T^{sa}$. It is likely that we are doing some form of posterior inference, where the evidence comes from the reward function. In fact, the iterative regularized policy optimization can be formally recast as an Expectation-Maximization algorithm for offline policy optimization, where the Expectation step correponds to a structured variational inference procedure for dynamics. We elaborate it in Appendix \ref{Appendix:EM}.}

\subsection{Offline Reinforcement Learning with Pessimism-Modulated Dynamics Belief} \label{Section:Offline_RL_PMDB}
\vspace{-1.5mm}
While solving each sub-problem makes prominent update to policy compared with the policy gradient method, we may need to construct several sub-problems before convergence, then exactly solving each of them incurs unnecessary computation. For practical consideration, next we introduce a smooth-evolving reference policy, with which the explicit boundary between sub-problems is blurred. Based on this reference policy, and by further adopting function approximator, we devise an offline RL algorithm to approximately maximize $J(\pi)$. 

The idea of smooth-evolving reference policy is inspired by the soft-updated target network in deep RL literature \cite{DQN, DDPG}. That is setting the reference policy as a slowly tracked copy of the policy being optimized. Formally, consider a parameterized policy $\pi_\phi$ with the parameter $\phi$. The reference policy is set as $\mu=\pi_{\phi'}$, where $\phi'$ is the moving average of $\phi: \phi' \leftarrow \omega_1\phi + (1-\omega_1)\phi'$. With small enough $\omega_1$, the Q-value of the state-action pairs induced by $\pi_{\phi'}$ (or its slight variant) can be sufficiently evaluated, before being used to update the policy. Next, we detail the loss functions to learn Q-value and policy with neural network approximators.

Denote the parameterized Q-function by $Q_\theta$ with the parameter $\theta$. It is trained by minimizing the Bellman residual of both the AMG and the empirical MDP:
\vspace{-1mm}
\begin{align}\label{Eq:Q_loss}
    \hspace{-1mm}L_Q\hspace{-0.3mm}(\theta) \hspace{-0.3mm}= \hspace{-0.5mm}\mathbb{E}_{(s,a, \mathcal{T})\sim\mathcal{D}'} \hspace{-1mm} \left[\left(Q_\theta(s,a) \hspace{-0.5mm} - \hspace{-0.5mm} \widehat{Q}_\text{AMG}(s,a)\right)^2 \right] \hspace{-0.5mm} + \hspace{-0.5mm} \mathbb{E}_{(s,a,s')\sim\mathcal{D}}\hspace{-1mm}\left[\left(Q_\theta (s,a) \hspace{-0.5mm} - \hspace{-0.5mm} \widehat{Q}_\text{MDP}(s,a)\right)^2\right] \hspace{-1mm},
\end{align}
\vspace{-2mm}
with
\vspace{-2mm}
\begin{gather}
    \widehat{Q}_\text{AMG}(s,a) = r(s,a) + \gamma\mink{\tau\in\mathcal{T}} \mathbb{E}_{\tau}\left[ \alpha\log\mathbb{E}_{\pi_{\phi'}}\exp{\left(\frac{1}{\alpha}Q_{\theta'}(s',a')\right)} \right], \\
    \widehat{Q}_\text{MDP}(s,a) = r(s,a) + \gamma\cdot\alpha\log\mathbb{E}_{\pi_{\phi'}}\exp{\left(\frac{1}{\alpha}Q_{\theta'}(s',a')\right)},
\end{gather}
where $Q_{\theta'}$ represent the target Q-value softly updated for stability \cite{DDPG}, i.e., $\theta'\leftarrow \omega_2\theta + (1-\omega_2)\theta'$, and $\mathcal{D}'$ is the on-policy data buffer for the AMG.
Since the game transition is known, the game can be executed with multiple counterparts in parallel, and the buffer only collects the latest sample for each of them. 
To promote direct learning from $\mathcal{D}$, we also include the Bellman residual of the empirical MDP in \eqref{Eq:Q_loss}.

As with policy update, Theorem \ref{Theo:RPO} states that the optimal policy for $\Bar{J}(\pi;\mu)$ is propotional to $\mu(a|s)\exp{\left(\frac{1}{\alpha}\bar{Q}^*_{N,k}(s,a)\right)}$. Then, we update $\pi_\phi$ by supervisedly learning this policy, with $\mu$ and $\Bar{Q}^*_{N,k}$ replaced by the smooth-evolving reference policy and the learned Q-value:
\vspace{-1mm}
\begin{align} \label{Eq:P_loss}
    L_P(\phi) &= \mathbb{E}_{s\sim \mathcal{D} \cup \mathcal{D}'}\left[\kld{\frac{\pi_{\phi'}(\cdot|s)\exp\left(\frac{1}{\alpha}Q_\theta(s,\cdot)\right)}{\mathbb{E}_{\pi_{\phi'}}\left[\exp\left(\frac{1}{\alpha}Q_\theta(s,a)\right)\right]}}{ \pi_\phi(\cdot|s)}\right] \nonumber  \\
    &=A \cdot \mathbb{E}_{\substack{s\sim \mathcal{D} \cup \mathcal{D}' \\ a\sim \pi_{\phi'}}} \left[\exp\left(\frac{1}{\alpha}Q_\theta(s,a)\right) \log \pi_\phi(a|s)\right] + B,
\end{align}
where $A$ and $B$ are constant terms. In general, \eqref{Eq:P_loss} can be replaced by any tractable function that measures the similarity of distributions. For example, when $\pi_\phi$ is Gaussian, we can apply the recent proposed $\beta$-NLL \cite{Pitfall}, in which each data point’s contribution to the negative log-likelihood loss is weighted by the $\beta$-exponentiated variance to improve learning heteroscedastic behavior.

To summarize, the algorithm alternates between collecting on-policy data samples in AMG and updating the function approximators. In detail, the latter procedure includes updating the Q-value with \eqref{Eq:Q_loss}, updating the optimized policy with \eqref{Eq:P_loss}, and updating the target Q-value as well as the reference policy with the moving-average rule. The complete algorithm is listed in Appendix \ref{Appendix:Algorithm}.

\vspace{-0.8mm}
\section{Experiments} \label{Section:experiment}
\vspace{-1.mm}
Through the experiments, we aim to answer the following questions: 1) How does the proposed approach compared to the previous SoTA offline RL algorithms on standard benchmark? 2) How does the learning process in the AMG connect with the performance change in the original MDP?  3) Section \ref{Section:Formulation} presents the monotonicity of $J(\pi)$ over $N$ and $k$ for any specified $\pi$, and it is easy to verify that this statement also holds when considering optimal policy for each setting of $(N,k)$. However, with neural network approximator, our proposed offline RL algorithm approximately solves the AMG. Then, how is the monotonicity satisfied in this case?

We consider the Gym domains in the D4RL benchmark \cite{D4RL} to answer these questions. As PMDB relies on the initial dynamics belief, inserting additional knowledge into the initial dynamics belief will result in unfair comparison. To avoid that, we consider an uniform distribution over dynamics model ensembles as the initial belief. The dynamics model ensembles are trained in supervised manner with the offline dataset. This is similar to previous model-based works \cite{MOPO,MOREL}, where the dynamics model ensembles are considered for dynamics uncertainty quantification. Since hyperparameter tuning for offline RL algorithms requires extra online test for each task, we purposely keep the same hyperparameters for all tasks, except when answering the last question. Especially, the hyperparameters in sampling procedure are $N=10$ and $k=2$. The more detailed setup for experiments and hyperparameters can be found in Appendix \ref{Appendix:setup}. The code is available online\footnote{Code is released at \url{https://github.com/huawei-noah/HEBO/tree/master/PMDB} and \url{https://gitee.com/mindspore/models/tree/master/research/rl/pmdb}.}

\vspace{-1.mm}
\subsection{Performance Comparison}
\vspace{-1.mm}
We compare the proposed offline RL algorithm with the baselines including: BEAR \cite{BEAR} and BRAC \cite{BRAC}, the model-free approaches based on policy constraint, CQL \cite{CQL}, the model-free approach by penalizing Q-value, EDAC \cite{EDAC}, the previous SoTA on the D4RL benchmark, MOReL \cite{MOREL}, the model-based approach which terminates the trajectory if the dynamics uncertainty exceeds a certain degree, and BC, the behavior cloning method. These approaches are evaluated on a total of eighteen domains involving three environments (hopper, walker2d, halfcheetah) and six dataset types (random, medium, expert, medium-expert, medium-replay, full-replay) per environment. We use the v2 version of each dataset.

The results are summarized in Table \ref{Table:D4RL}. Our approach PMDB obviously improves over the previous SoTA on 9 tasks and performs competitively in the rest. Although EDAC achieves better performance in walker2d with several dataset types, its hyperparameters are tuned individually for each task. The later experiments on the impact of hyperparameters indicate that larger $k$ or smaller $N$ could generate better results for walker2d and halfcheetah. We also find that PMDB significantly outperforms MOReL, another model-based approach. It is encouraging that our model-based approach achieves competitive or better performance compared with the SoTA model-free approach, as model-based approach naturally has better support for multi-task learning and transfer learning, where the offline data from relevant tasks can be further leveraged.

\scriptsize
\vspace{-4mm}
\begin{longtable}[c]{l|rrrrrrrrr} 
  \toprule
  \textbf{Task Name} & \textbf{BC} & \textbf{BEAR} & \textbf{BRAC} &  \textbf{CQL} & \textbf{MOReL}  & \textbf{EDAC} & \textbf{PMDB} \\
  \midrule
  hopper-random        & 3.7$\pm$0.6   & 3.6$\pm$3.6   & 8.1$\pm$0.6   & 5.3$\pm$0.6   & \textbf{38.1$\pm$10.1} & 25.3$\pm$10.4 & 32.7$\pm$0.1  \\
  hopper-medium        & 54.1$\pm$3.8  & 55.3$\pm$3.2  & 77.8$\pm$6.1  & 61.9$\pm$6.4  & 84.0$\pm$17.0 & 101.6$\pm$0.6 & \textbf{106.8$\pm$0.2} \\
  hopper-expert        & 107.7$\pm$9.7 & 39.4$\pm$20.5 & 78.1$\pm$52.6 & 106.5$\pm$9.1 & 80.4$\pm$34.9 & \textbf{110.1$\pm$0.1} & \textbf{111.7$\pm$0.3} \\
  hopper-medium-expert & 53.9$\pm$4.7  & 66.2$\pm$8.5  & 81.3$\pm$8.0  & 96.9$\pm$15.1 & 105.6$\pm$8.2 & \textbf{110.7$\pm$0.1} & \textbf{111.8$\pm$0.6} \\
  hopper-medium-replay & 16.6$\pm$4.8  & 57.7$\pm$16.5 & 62.7$\pm$30.4 & 86.3$\pm$7.3  & 81.8$\pm$17.0 & 101.0$\pm$0.5 & \textbf{106.2$\pm$0.6} \\
  hopper-full-replay   & 19.9$\pm$12.9 & 54.0$\pm$24.0 & 107.4$\pm$0.5 & 101.9$\pm$0.6 & 94.4$\pm$20.5 & 105.4$\pm$0.7 & \textbf{109.1$\pm$0.2} \\
  \midrule
  walker2d-random        & 1.3$\pm$0.1   & 4.3$\pm$1.2    & 1.3$\pm$1.4   & 5.4$\pm$1.7   & 16.0$\pm$7.7  & 16.6$\pm$7.0 & \textbf{21.8$\pm$0.1} \\
  walker2d-medium        & 70.9$\pm$11.0 & 59.8$\pm$40.0  & 59.7$\pm$39.9 & 79.5$\pm$3.2  & 72.8$\pm$11.9 & 92.5$\pm$0.8 & \textbf{94.2$\pm$1.1} \\
  walker2d-expert        & 108.7$\pm$0.2 & 110.1$\pm$0.6  & 55.2$\pm$62.2 & 109.3$\pm$0.1 & 62.6$\pm$29.9 & \textbf{115.1$\pm$1.9} & \textbf{115.9$\pm$1.9} \\
  walker2d-medium-expert & 90.1$\pm$13.2 & 107.0$\pm$2.9  & 9.3$\pm$18.9  & 109.1$\pm$0.2 & 107.5$\pm$5.6 & \textbf{114.7$\pm$0.9} & 111.9$\pm$0.2 \\
  walker2d-medium-replay & 20.3$\pm$9.8  & 12.2$\pm$4.7   & 40.1$\pm$47.9 & 76.8$\pm$10.0 & 40.8$\pm$20.4 & \textbf{87.1$\pm$2.3} & 79.9$\pm$0.2 \\
  walker2d-full-replay   & 68.8$\pm$17.7 & 79.6$\pm$15.6  & 96.9$\pm$2.2  & 94.2$\pm$1.9  & 84.8$\pm$13.1 & \textbf{99.8$\pm$0.7} & 95.4$\pm$0.7\\
  \midrule
  halfcheetah-random        & 2.2$\pm$0.0  & 12.6$\pm$1.0 & 24.3$\pm$0.7  & 31.3$\pm$3.5 & \textbf{38.9$\pm$1.8}  & 28.4$\pm$1.0 & \textbf{37.8 $\pm$ 0.2} \\
  halfcheetah-medium        & 43.2$\pm$0.6 & 42.8$\pm$0.1 & 51.9$\pm$0.3  & 46.9$\pm$0.4 & 60.7$\pm$4.4  & 65.9$\pm$0.6 & \textbf{75.6$\pm$ 1.3} \\
  halfcheetah-expert        & 91.8$\pm$1.5 & 92.6$\pm$0.6 & 39.0$\pm$13.8 & 97.3$\pm$1.1 & 8.4$\pm$11.8  & \textbf{106.8$\pm$3.4} &\textbf{105.7$\pm$ 1.0} \\
  halfcheetah-medium-expert & 44.0$\pm$1.6 & 45.7$\pm$4.2 & 52.3$\pm$0.1  & 95.0$\pm$1.4 & 80.4$\pm$11.7 & 106.3$\pm$1.9 &\textbf{108.5$\pm$0.5} \\
  halfcheetah-medium-replay & 37.6$\pm$2.1 & 39.4$\pm$0.8 & 48.6$\pm$0.4  & 45.3$\pm$0.3 & 44.5$\pm$5.6  & 61.3$\pm$1.9 &\textbf{71.7$\pm$1.1} \\
  halfcheetah-full-replay   & 62.9$\pm$0.8 & 60.1$\pm$3.2 & 78.0$\pm$0.7  & 76.9$\pm$0.9 & 70.1$\pm$5.1  & 84.6$\pm$0.9 & \textbf{90.0$\pm$0.8} \\
  \midrule
  Average                   &49.9          & 52.4         &54.0           &73.7          & 65.1                  & 85.2 & \textbf{88.2} \\
  \bottomrule
  \caption{\small Results for D4RL datasets. Each result is the normalized score computed as (score $-$ random policy score) $/$ (expert policy score $-$ random policy score), $\pm$ standard deviation. The score of our proposed approach is averaged over 4 random seeds, and the results of the baselines are taken from \cite{EDAC}.}\label{Table:D4RL}
\end{longtable}
\normalsize

\vspace{-5.mm}
\subsection{Learning in Alternating Markov Game}
\vspace{-1.mm}
Figure \ref{Fig:return} presents the learning curves in the AMG, as well as the received return when deploying the policy being learned in true MDP. The performance in the AMG closely tracks the true performance from the lower side, implying that it can act as a reasonable surrogate to evaluate/optimize performance for the true MDP. Besides, the performance in the AMG improves nearly monotonously, verifying the effectiveness of the proposed algorithm to approximately solve the game.

\begin{figure}[!htb]
    \centering
    \vspace{-1mm}
    \begin{subfigure}[t]{0.32\textwidth}
    \vspace*{0pt}
        \centering
        \includegraphics[width=\textwidth]{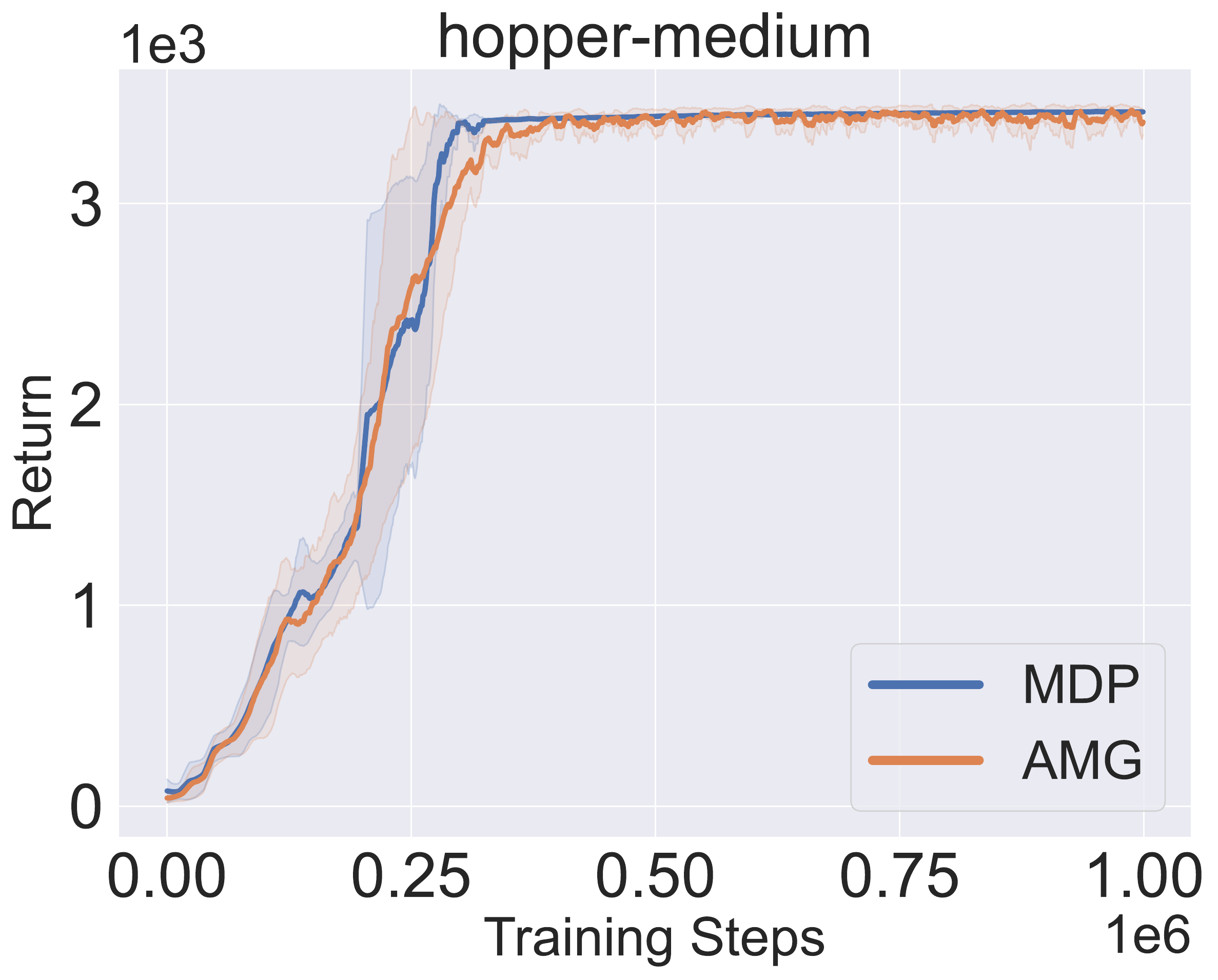}
        \label{fig:skizze_ort_unten}
    \end{subfigure}
    \begin{subfigure}[t]{0.305\textwidth}
    \vspace*{0pt}
        \centering
        \includegraphics[width=\textwidth]{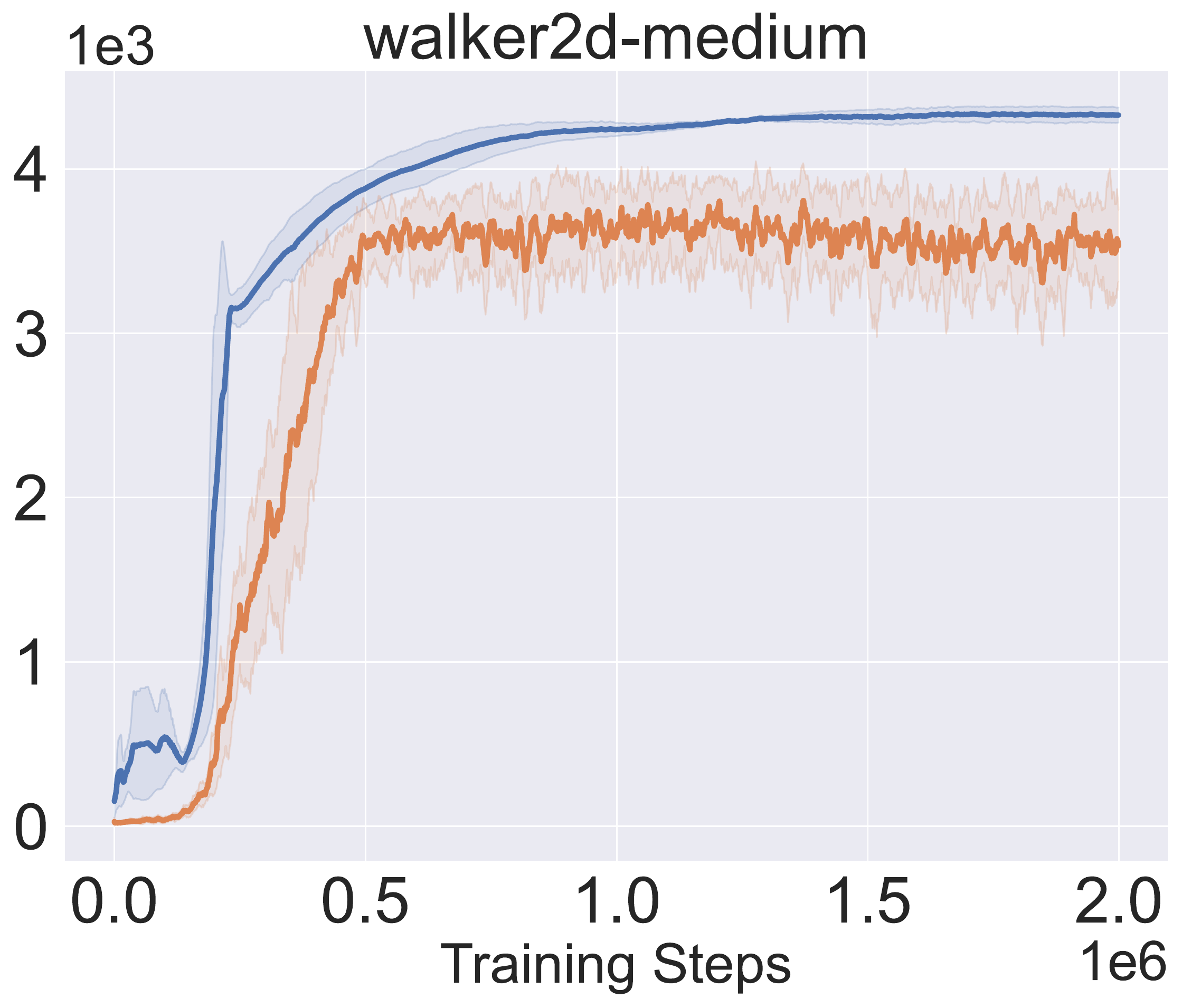}
    \end{subfigure}
    \begin{subfigure}[t]{0.318\textwidth}
    \vspace*{0pt}
        \includegraphics[width=\textwidth]{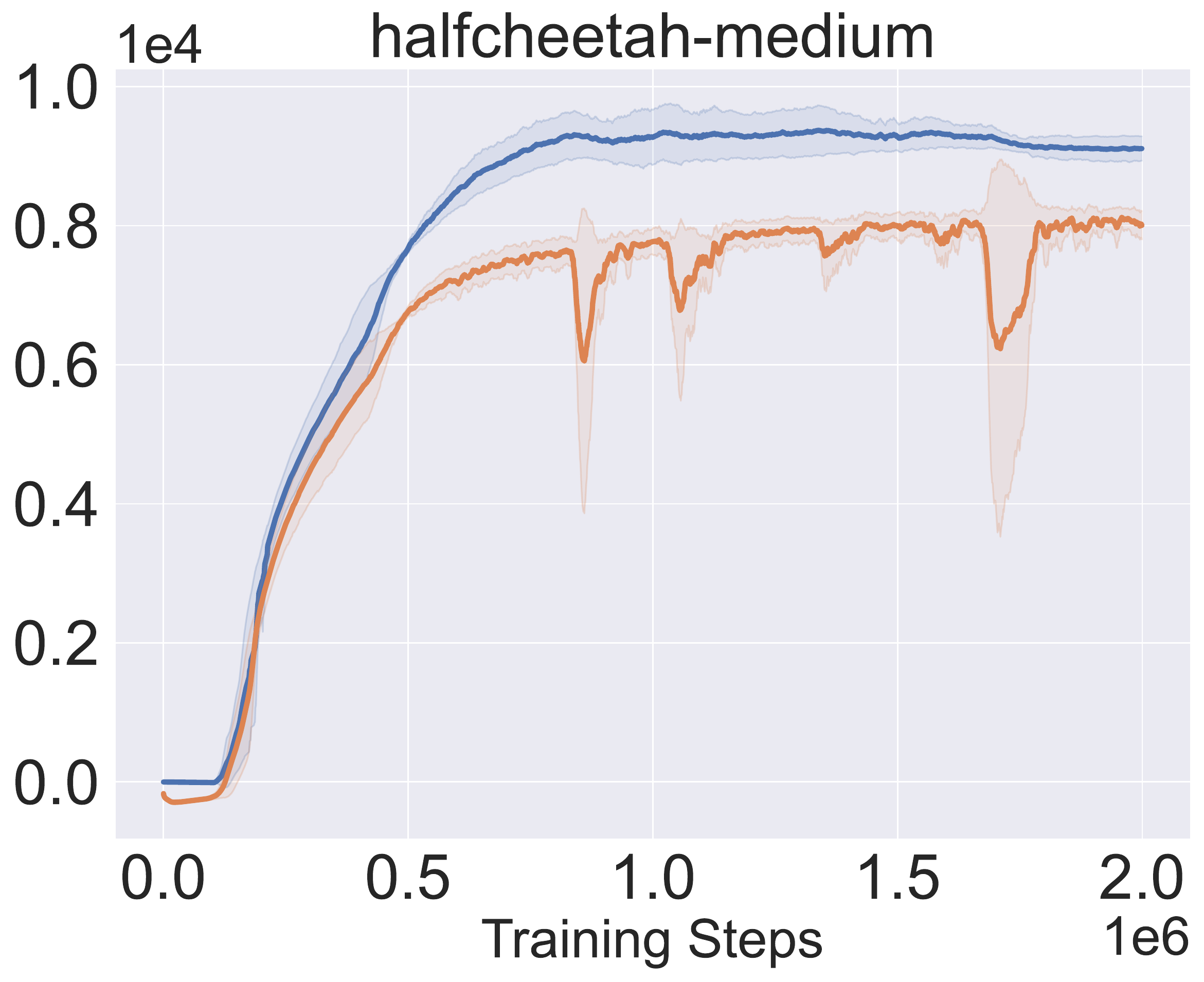}
    \end{subfigure}
    \vspace{-6.4mm}
    \caption{\small Learning and test curves for medium datasets.}
    \label{Fig:return}
    \vspace{-2mm}
\end{figure}

Recall that PMDB does not explicitly quantify dynamics uncertainty to penalize return, Figure \ref{Fig:uncertainty} checks how the dynamics uncertainty and the Q-value of visited state-action pairs change during the learning process. The uncertainty is measured by the logarithm of standard deviation of the predicted means from the $N$ dynamics samples, i.e., $\log\left(\text{std}\left(\mathbb{E}_{\tau}[s'];\tau\in\mathcal{T}\right)\right)$. The policy being learned is periodically tested in the AMG for ten trials, and we collect the whole ten trajectories of state-action pairs. The solid curves in Figure \ref{Fig:uncertainty} denote the mean uncertainty and Q-value over the collected pairs, and shaded regions denote the standard deviation. From the results, the dynamics uncertainty first sharply decreases and then keeps a slowly increasing trend. Besides, in the long-term view, the Q-value is correlated with the degree of uncertainty negatively in the first phase and positively in the second phase. This indicates that the policy first moves to the in-distribution region and then tries to get away by resorting to the generalization of dynamics model.

\begin{figure}[!htb]
    \centering
    \vspace{-1mm}
    \begin{subfigure}[t]{0.33\textwidth}
    \vspace*{0pt}
        \centering
        \includegraphics[width=\textwidth]{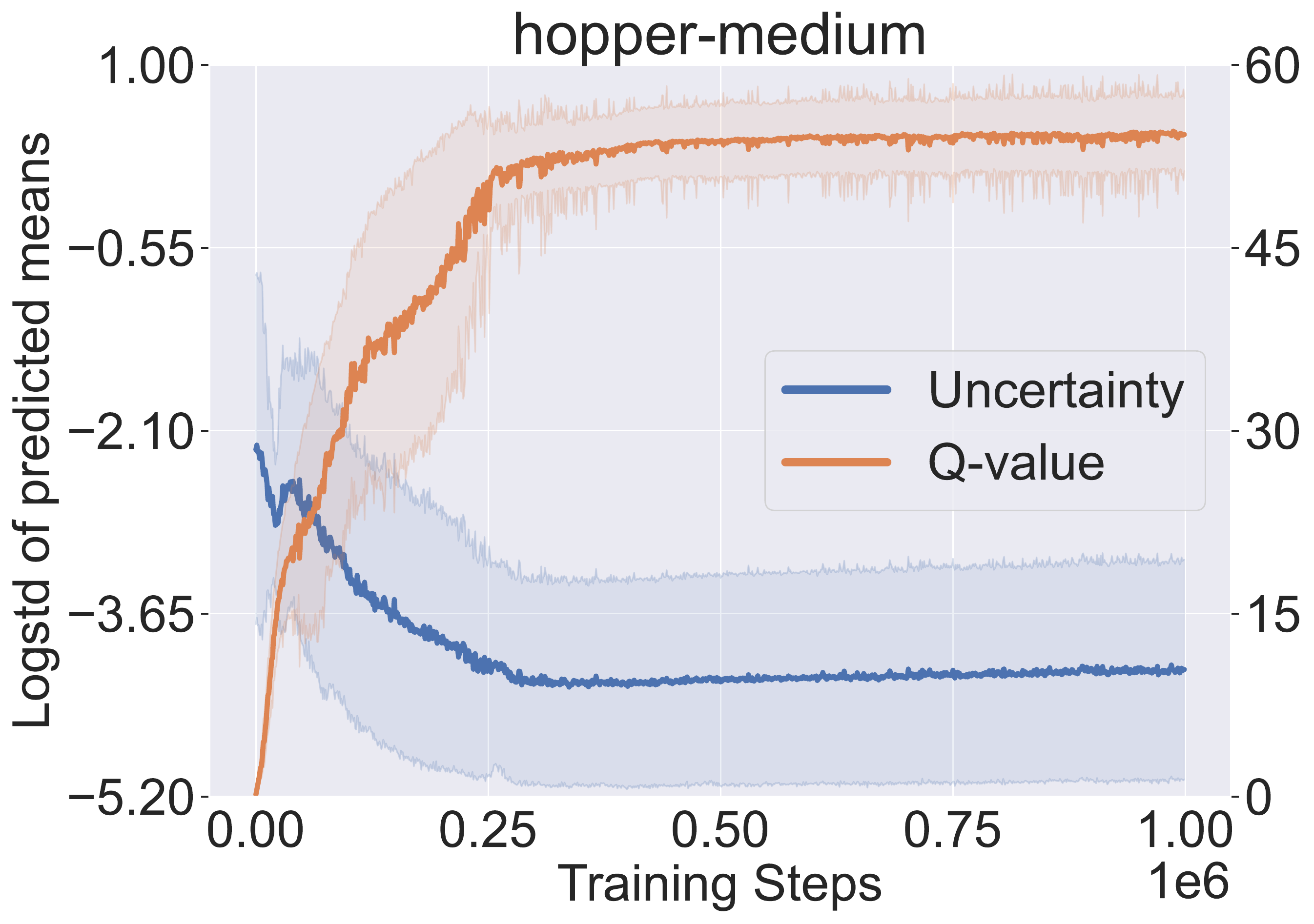}
        \label{fig:skizze_ort_unten}
    \end{subfigure}
    \begin{subfigure}[t]{0.31\textwidth}
    \vspace*{0pt}
        \centering
        \includegraphics[width=\textwidth]{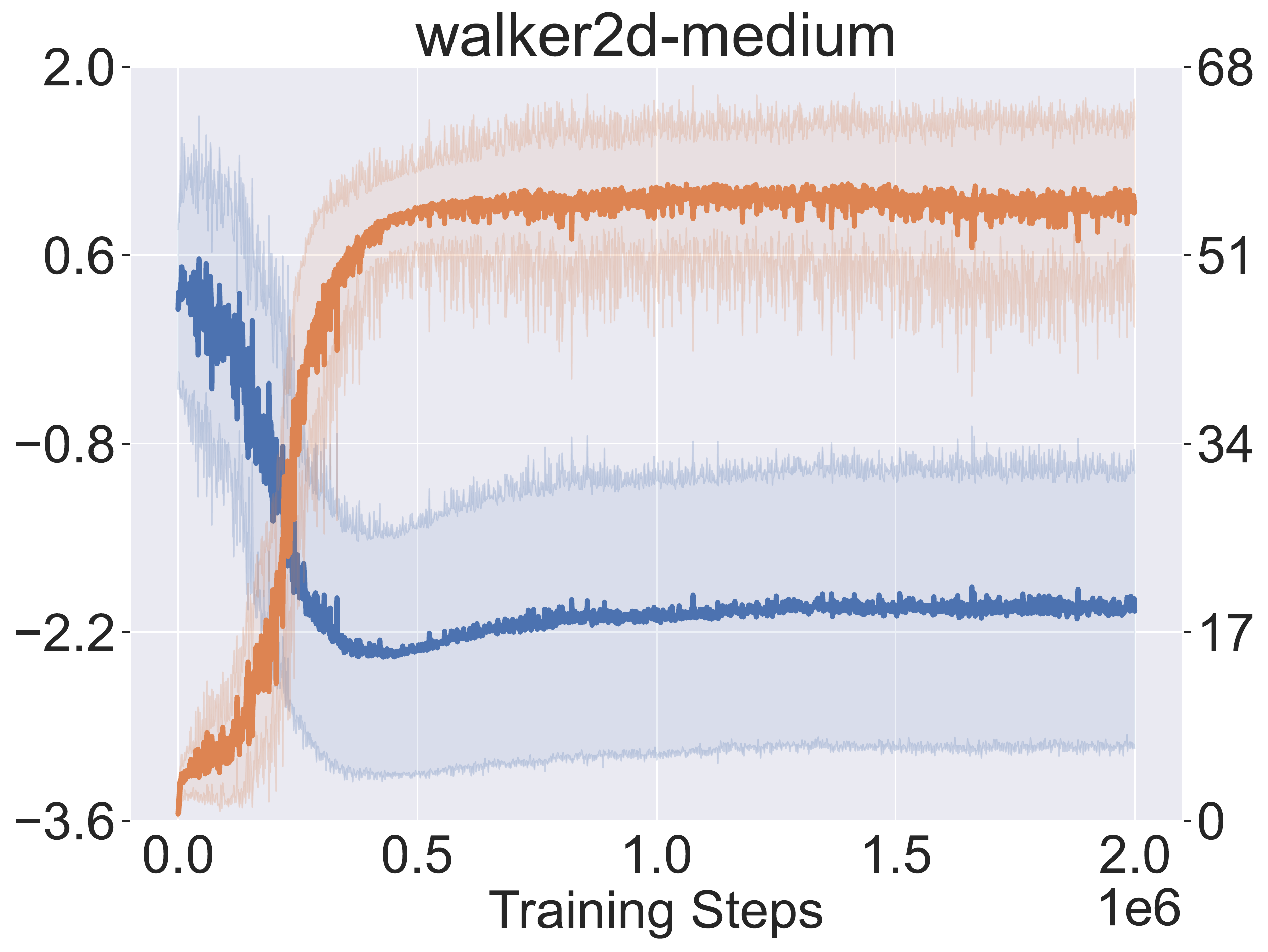}
    \end{subfigure}
    \begin{subfigure}[t]{0.339\textwidth}
    \vspace*{0pt}
        \centering
        \includegraphics[width=\textwidth]{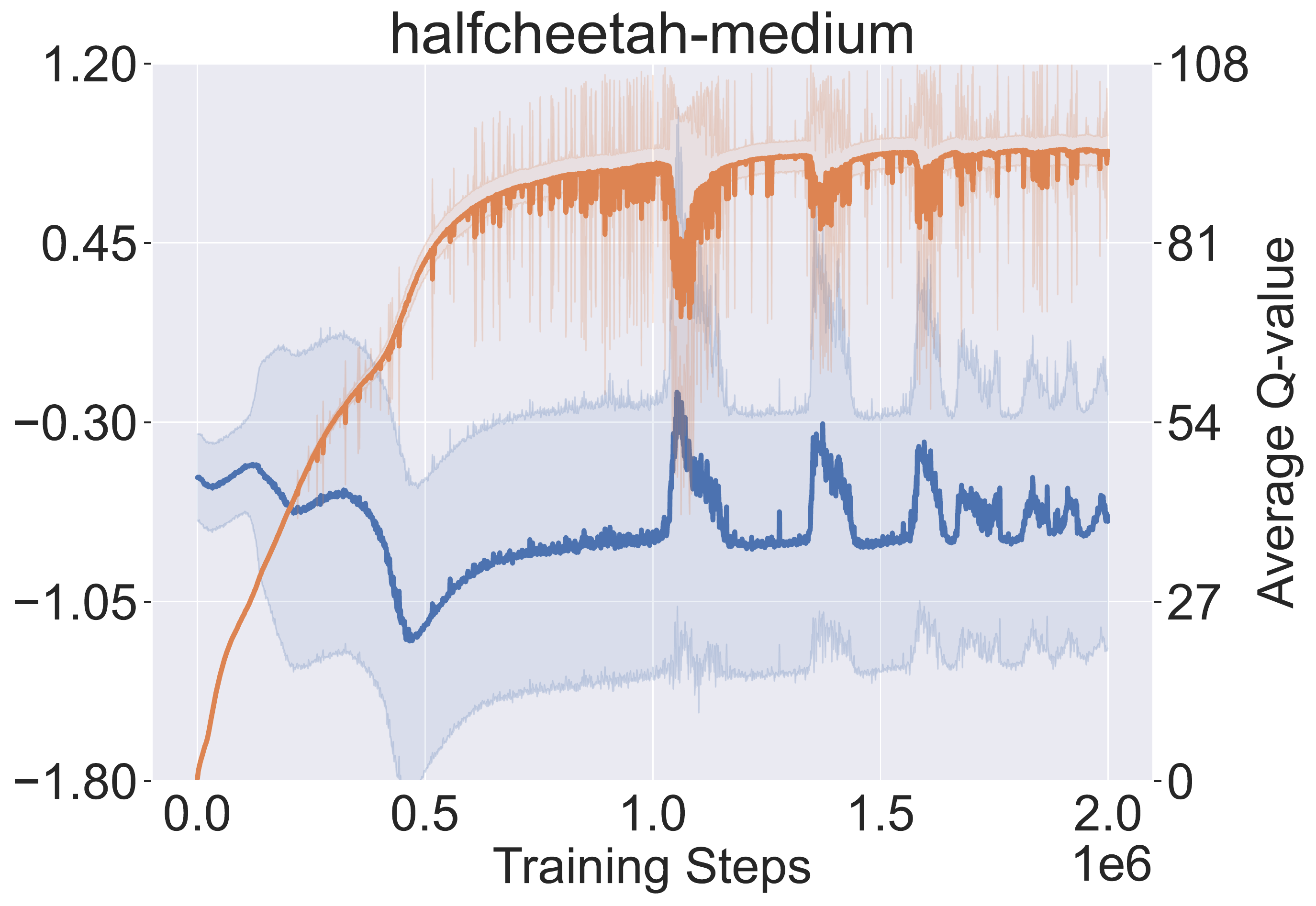}
    \end{subfigure}
    \vspace{-6.4mm}
    \caption{\small Change on the dynamics uncertainty and Q-value of the encountered state-action pairs during learning process. The dynamics uncertainty of state-action pair $(s,a)$ is measured by $\log\left(\text{std}\left(\mathbb{E}_{\tau(\cdot|s,a)}[s'];\tau\in\mathcal{T}\right)\right)$.}
    \label{Fig:uncertainty}
    \vspace{-1mm}
\end{figure}

In Figure \ref{Fig:uncertainty}, we also notice that the large dip of Q-value is accompanied with the sudden raise of dynamics uncertainty. We suspect this is due to the optimized policy being far from the dataset. We try to verify by checking the maximal return covered by the offline dataset. It shows that the maximal normalized returns provided by offline datasets are 99.6, 92.0 and 45.0 respectively for hopper, walker2d and halfcheetah, while the proposed approach achieves 106.8, 94.2 and 75.6. The policy optimization is more significant for halfcheetah (where we observe the large dip), indicating the policy should move further from the dataset.

The above finding also explains why the AMG performance in Figure \ref{Fig:return} runs into large dip only for halfcheetah: with the larger dynamics uncertainty, the secondary player can choose the more pessimistic transition. However, we want to highlight that it is normal behavior of the proposed algorithm and does not mean instability, as we are handling the alternating Markov game, a specialization of zero-sum game. Besides, we can see that even when the AMG performance goes down the MDP performance is still stable.

\subsection{Practical Impact of Hyperparameters in Sampling Procedure}
Table \ref{Table:hyperpara_k} lists the impact of $k$. In each setting, we evaluate the learned policy in both the true MDP and the AMG. The performance in the AMGs improve when increasing $k$. This is consistent with the theoretical result, even that we approximately solve the game. Regarding the performance in true MDPs, we notice that $k=2$ corresponds to the best performance for hopper, but for the others $k=3$ is better. This indicates that tuning hyperparameter online can further improve the performance. The impact of $N$ is presented in Appendix \ref{Section:impact}, suggesting the opposite monotonicity.

\begin{table}[h]\centering
\small
\vspace{-.5mm}
\begin{tabular}{ccccccc}
      \toprule
      &\multicolumn{2}{c}{hopper-medium}
      &\multicolumn{2}{c}{walker2d-medium} &\multicolumn{2}{c}{halfcheetah-medium} \\
      \cmidrule(lr){2-3}\cmidrule(lr){4-5}\cmidrule(lr){6-7}
      $k$ & MDP & AMG & MDP & AMG & MDP & AMG \\
      \midrule
      1 & 106.2$\pm$0.2   & 91.6$\pm$2.2   & 82.6$\pm$0.5   & 33.3$\pm$2.6   & 70.7$\pm$0.8 & 63.1$\pm$0.2 \\
      2 & 106.8$\pm$0.2   & 105.2$\pm$1.6   & 94.2$\pm$1.1   & 77.2$\pm$3.7   & 75.6$\pm$1.3 & 67.3$\pm$1.1 \\
      3 & 90.8$\pm$17.5   & 106.6$\pm$2.1   & 105.1$\pm$0.2   & 82.5$\pm$0.5    & 77.3$\pm$0.5 & 70.1$\pm$0.2 \\
      \bottomrule
\end{tabular}%
\caption{\small Impact of $k$, with $N=10$.}
\label{Table:hyperpara_k}
\vspace{-4.5mm}
\end{table}
\normalsize

\section{Related Works}
\vspace{-2mm}
Inadequate data coverage is the root of challenge in offline RL.  Existing works differ in their methodology to reacting in face of limited system knowledge. 

\vspace{-2mm}
\paragraph{Model-free offline RL} The prevalent idea is to find policy within the data manifold through model-free learning. Analogous to online RL, both policy-based and value-based approaches are devised to this end. Policy-based approaches directly constrain the optimized policy close to the behavior policy that collects data, via various measurements such as KL divergence \cite{BRAC}, MMD \cite{BEAR} and action deviation \cite{BCQ,EMaQ}. Value-based approaches instead reflect the policy regularization through value function. For example, CQL enforces small Q-value for OOD state-action pairs \cite{CQL}, AlgaeDICE penalizes return with the $f$-divergence between optimized and offline state-action distributions \cite{AlgaeDICE}, and Fisher-BRC proposes a novel parameterization of the Q-value to encourage the generated policy close to the data \cite{Fisher}. Our proposed approach is more relevant to the value-based scope, and the key difference to existing works is that our Q-value is penalized through an adversarial choice of transition from plausible candidates. 

Learning within the data manifold limits the degree to which the policy improves, and recent works attempt to relieve the restriction. Along the model-free line, EDAC \cite{EDAC} and PBRL \cite{PBRL} quantify uncertainty of Q-value via neural network ensemble, and assign penalty to Q-value depending on the uncertainty degree. In this way, the OOD state-action pairs are touchable if they pose low uncertainty on Q-value. However, the uncertainty quantification over OOD region highly relies on how neural network generalizes \cite{BayesGenalization}. As the prior knowledge of Q-function is hard to acquire and insert into the neural network, the generalization is unlikely reliable to facilitate meaningful uncertainty quantification \cite{SWAG}.

\vspace{-2mm}
\paragraph{Model-based offline RL}
Model-based approach is widely recognized due to its superior data efficiency. However, directly optimizing policy based on an offline learned model is vulnerable to model exploitation \cite{MBPO,Revisit}. A line of works improve the dynamics learning for seek of robustness \cite{PL} or adaptation \cite{WME} to distributional shift. In terms of policy learning, several works extend the idea from model-free approaches, and constrain the optimized policy close to the behavior policy when applying the dynamics model for planing \cite{MBOP} or policy optimization \cite{BREMEN,COMBO}. There are also recent works incorporating uncertainty quantification of dynamics model to learn policy beyond data coverage. Especially, MOPO \cite{MOPO} and MOReL \cite{MOREL} perform policy improvement in states that may not directly occur in the static offline dataset, but can be predicted by leveraging the power of generalization. Compared to them, our approach does not explicitly characterize the dynamics uncertainty as reward penalty. There are also relevant works dealing with model ambiguity in light of Bayesian decision theory, which are discussed in Appendix \ref{Section:related}.

\vspace{-1mm}
\section{Discussion} \label{Section:discussion}
\vspace{-2mm}
We proposed model-based offline RL with Pessimism-Modulated Dynamics Belief (PMDB), a framework to reliably learn policy from offline dataset, with the ability of leveraging dynamics prior knowledge. Empirically, the proposed approach outperforms the previous SoTA in a wide range of D4RL tasks. Compared to the previous model-based approaches, we characterize the impact of dynamics uncertainty through biased sampling from the dynamics belief, which implicitly induces PMDB.  As PMDB is with the form of reweighting an initial dynamics belief, it provides a principled way to insert prior knowledge via the belief to boost policy learning. However, posing a valuable dynamics belief for arbitrary task is challenging, as the expert knowledge is not always available. Besides, an over-aggressive belief may still incur high-risk behavior in reality. Encouragingly, recent works have done active research to learn data-driven prior from relevant tasks. We believe that integrating them as well as developing safe criterion to design/learn dynamics belief would further promote practical deployment of offline RL.

\bibliographystyle{unsrt}
\bibliography{ref}

\newpage
\appendix
\section{Additional Related Works} \label{Section:related}
\paragraph{Robust MDP and CVaR Criterion} Bayesian decision theory provides a principled formalism for decision making under uncertainty. Robust MDP \cite{RobustControl, RMDP}, Conditional Value at Risk (CVaR) \cite{CVaR} and our proposed criterion can be deemed as the specializations of Bayesian decision theory, but derived from different principles and with different properties. 

Robust MDP is proposed as a surrogate to optimize the percentile performance. Early works mainly focus on the algorithmic design and theoretical analysis in the tabular case \cite{RobustControl, RMDP} or under linear function approximation \cite{LinearRMDP}. Recently, it has also been extended to continuous action spaces and nonlinear cases, by integrating the advances of deep RL \cite{SPI, RMPO}. Meanwhile, a variety of works generalize the uncertainty in regard to system disturbance \cite{RARL} and action disturbance \cite{ARRL, DRRL}. Although robust MDP produces robust policy, it purely focuses on the percentile performance, and ignoring the other possibilities is reported to be over-conservative \cite{SRRL, SRAC, BCR}. 

CVaR instead considers the average performance of the worst $\delta$-fraction possibilities. Despite involving more information about the stochasticity, CVaR is still solely from the pessimistic view. Recent works propose to improve by maximizing the convex combination of mean performance and CVaR \cite{SRRL}, or maximizing mean performance under CVaR constraint \cite{ICAPS}. However, they are intractable regarding policy optimization, i.e., proved as an NP-hard problem or relying on heuristic. As comparison, the proposed AMG formulation presents an alternative way to tackle the entire spectrum of plausible transitions while also give more attention on the pessimistic parts. Besides, the policy optimization is with theoretical guarantee.

Apart from offline RL, Bayesian decision theory is also applied in other RL settings. Particularly, Bayesian RL considers that new observations are continually received and utilized to make adaptive decision. The goal of Bayesian RL is to fast \textbf{explore} and adapt, while that of offline RL is to sufficiently \textbf{exploit} the offline dataset to generate the best-effort policy supported or surrounded by the dataset. Recently, Bayesian robust RL \cite{BRRL} integrates the idea of robust MDP in the setting of Bayesian RL, where the uncertainty set is constructed to produce robust policy, and will be updated upon new observations to alleviate the degree of conservativeness. Besides, CVaR criterion is also considered in Bayesian RL \cite{Bayesian_CVaR}.

\section{Theorem Proof}\label{Appendix:proof}
We first present and prove the fundamental inequalities applied to prove the main theorem, and then present the proofs for Sections \ref{Section:Formulation} and \ref{Section:PO_PMDB} respectively. For conciseness, the subscripts $N,k$ are omitted in Q-value and Bellman backup operator when clear from the context.

\subsection{Preliminaries}

\begin{lemma} \label{lemma}
    Let $\mink{i} ~ x_i$ denote the $k$th minimum in $\left\{x_i\right\}$, then
    \begin{equation*}
        \min_i\left(x_i - y_i\right) \leq \mink{i} ~ x_i -\mink{i} ~ y_i \leq \max_i \left(x_i-y_i\right), \quad \forall k =1,2,\cdots,N,
    \end{equation*}
    where $N$ is the size of both $\{x_i\}$ and $\{y_i\}$.
\end{lemma}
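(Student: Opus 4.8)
The plan is to reduce both inequalities to two elementary properties of the $k$-th-minimum operator: shift-invariance and monotonicity under element-wise domination. First I would record the shift-invariance, $\mink{i}(y_i + c) = \mink{i} y_i + c$ for any constant $c$, which is immediate since adding $c$ to every entry preserves the relative order of the entries and translates all order statistics by the same amount. Next I would establish monotonicity: if $a_i \le b_i$ for every $i$, then $\mink{i} a_i \le \mink{i} b_i$.

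The cleanest route to monotonicity is the minimax characterization of order statistics, $\mink{i} x_i = \min_{S\subseteq\{1,\dots,N\},\,|S|=k}\ \max_{i\in S} x_i$. Taking $S$ to be the indices of the $k$ smallest entries attains the value $\mink{i} x_i$, while any size-$k$ subset has maximum at least the $k$-th smallest value: a set of $k$ indices cannot consist entirely of entries strictly below it, since there are at most $k-1$ such entries. Given this identity, monotonicity follows in one line, because $a_i \le b_i$ for all $i$ implies $\max_{i\in S} a_i \le \max_{i\in S} b_i$ for every fixed $S$, and hence the minimum over $S$ is ordered the same way.

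With these two facts in hand, both bounds follow by choosing the right comparison sequence. For the upper bound, set $c = \max_i(x_i - y_i)$; then $x_i \le y_i + c$ for all $i$, so monotonicity gives $\mink{i} x_i \le \mink{i}(y_i + c)$, and shift-invariance rewrites the right-hand side as $\mink{i} y_i + c$, which is exactly $\mink{i} x_i - \mink{i} y_i \le \max_i(x_i - y_i)$. For the lower bound, set $c = \min_i(x_i - y_i)$ so that $y_i + c \le x_i$ for all $i$; the same two-step argument yields $\mink{i} y_i + c \le \mink{i} x_i$, i.e. $\min_i(x_i - y_i) \le \mink{i} x_i - \mink{i} y_i$.

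The only step demanding a genuine argument is the monotonicity of the $k$-th minimum under element-wise domination, and I expect the minimax characterization to dispatch it cleanly. The sole subtlety is the handling of ties in establishing that characterization, but because every inequality in the statement is non-strict, counting entries that are \emph{strictly} below the $k$-th smallest absorbs ties without difficulty. Everything else is bookkeeping with the shift and comparison sequences.
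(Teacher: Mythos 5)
Your proof is correct, and it takes a genuinely different route from the paper's. The paper works directly with the minimizing indices: setting $i^*=\arg\mink{i}~x_i$ and $j^*=\arg\mink{i}~y_i$, it proves the lower bound by splitting into the cases $y_{i^*}\geq y_{j^*}$ and $y_{i^*}<y_{j^*}$, resolving the latter by a counting contradiction (among the indices of the $k-1$ smallest $x$-entries, some $\bar{s}$ must satisfy $y_{\bar{s}}\geq y_{j^*}$, whence $x_{i^*}-y_{j^*}\geq x_{\bar{s}}-y_{\bar{s}}$), and then deduces the upper bound by negating both sequences, which turns the $k$th minimum into an $(N-k+1)$th minimum. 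You instead isolate two structural properties of the operator --- translation equivariance and monotonicity under element-wise domination --- and obtain both bounds symmetrically by comparing $\{x_i\}$ with the shifted sequence $\{y_i+c\}$. Your monotonicity step, via the characterization $\mink{i}~x_i=\min_{|S|=k}\max_{i\in S}x_i$, is sound, and the tie issue you flag is indeed harmless: at most $k-1$ entries lie strictly below the $k$th smallest value, so every size-$k$ index set contains an entry at least that large. What your route buys: no case split or contradiction, full symmetry between the two bounds, and no index bookkeeping in a negation step (the paper's identity $\mink{i}~(-x_i)=-\mink[N-k]{i}~x_i$ actually carries an off-by-one --- it should be $\mink[N-k+1]{i}~x_i$ --- which is immaterial there only because $k$ is universally quantified, and which your argument never encounters); moreover, monotonicity of the $k$th minimum is precisely the fact the paper needs again for Lemma \ref{lemma:Mono}, so your intermediate lemma does double duty. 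What the paper's route buys: it is entirely self-contained index-chasing and never needs the minimax characterization of order statistics.
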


\begin{proof}[Proof of Lemma \ref{lemma}]
    Denote $i^*=\arg\mink{i} ~ x_i$ and $j^*=\arg\mink{i} ~ y_i$. Next, we prove the first inequality. The proof is done by dividing into two cases.
    
    \fbox{Case 1: $y_{i^*}\geq y_{j^*}$} 
    
    It is easy to check 
    \begin{align*}
        \mink{i}~ x_i - \mink{i}~ y_i = x_{i^*} - y_{j^*} \geq x_{i^*} - y_{i^*} \geq \min_i\left(x_i - y_i\right).
    \end{align*}
    
    \fbox{Case 2: $y_{i^*}< y_{j^*}$}
    
    We prove by contradiction.  Let $\mathcal{S}_x=\Big\{\arg\mink[l]{i} ~x_i~\Big|~l=1,2,\cdots,k-1\Big\}$. Assume 
\begin{equation*}
    y_s < y_{j^*}, \quad \forall s\in\mathcal{S}_x. 
\end{equation*}
Since $y_{j^*}$ is the $k$th minimum, the above assumption implies $\mathcal{S}_x \subseteq \mathcal{S}_y$. Meanwhile, according to the condition of case 2, $i^*\in\mathcal{S}_y$. Put these together, we have $\{i^*\}\cup\mathcal{S}_x \subseteq \mathcal{S}_y$. According to the definition of $i^*$, we know $i^*\notin\mathcal{S}_x$. This concludes that $\mathcal{S}_y$ has at least $k$ elements, contradicting with its definition. 

Thus,
 \begin{equation*}
     \exists \bar{s}\in\mathcal{S}_x:  y_{\bar{s}}\geq y_{j^*}.
 \end{equation*}
By applying the above inequality and $x_{\bar{s}}\leq x_{i^*}$, we have
\begin{align*}
     \mink{i} ~ x_i -  \mink{i} ~ y_i = x_{i^*} - y_{j^*} \geq x_{\bar{s}} - y_{\bar{s}}  \geq \min_i\left(x_i - y_i\right).
\end{align*}
In summary, we have $\min_i\left(x_i - y_i\right) \leq \mink{i} ~ x_i -\mink{i} ~ y_i$ for both cases.

The second inequality can be proved by resorting to the first one. By respectively replacing $x_i$ and $y_i$ with $-x_i$ and $-y_i$ in first inequality, we obtain
\begin{align*}
    \min_i\left(-x_i + y_i\right) \leq \mink{i} ~ (-x_i) -\mink{i} ~ (-y_i),
\end{align*}
which can be rewritten as
\begin{align*}
    \max_i\left(x_i - y_i\right) &\geq -\Big(\mink{i} ~ (-x_i) -\mink{i} ~ (-y_i)\Big) \\
    & = \mink[N-k]{i} ~ x_i -\mink[N-k]{i} ~ y_i,
\end{align*}
where the last equation is due to $\mink{i} ~ (-x_i) = - \mink[N-k]{i} ~ x_i$. As the above inequalities holds for any $k\in \{1,2,\cdots,N\}$, we can replace $N-k$ by $k$, and this is right the second inequality in Lemma \ref{lemma}.
\end{proof}

\begin{corollary}\label{corollary:basic}
    \begin{equation*}
        \Big|\mink{i} ~ x_i - \mink{i} ~ y_i \Big| \leq \max_i |x_i-y_i|, \quad \forall k=1, 2, \cdots, N.
    \end{equation*}
\end{corollary}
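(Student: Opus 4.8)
The plan is to derive this directly from Lemma \ref{lemma}, which is precisely the two-sided bound we need. Corollary \ref{corollary:basic} is essentially the absolute-value repackaging of that lemma, so no fresh combinatorial reasoning about the $k$th-minimum operator will be required; the entire content of the argument has already been done in establishing the lemma.

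First I would invoke Lemma \ref{lemma} to write
\[
\min_i(x_i - y_i) \leq \mink{i} x_i - \mink{i} y_i \leq \max_i(x_i - y_i).
\]
Then I would bound each of the outer expressions by the quantity on the right-hand side of the corollary. For the upper bound, the coordinate-wise inequality $x_i - y_i \leq |x_i - y_i|$ yields $\max_i(x_i - y_i) \leq \max_i |x_i - y_i|$ after taking maxima. For the lower bound, I would rewrite $\min_i(x_i - y_i) = -\max_i(y_i - x_i)$ and use $\max_i(y_i - x_i) \leq \max_i |x_i - y_i|$, so that $\min_i(x_i - y_i) \geq -\max_i |x_i - y_i|$.

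Combining the two estimates gives
\[
-\max_i |x_i - y_i| \leq \mink{i} x_i - \mink{i} y_i \leq \max_i |x_i - y_i|,
\]
which is exactly the claim $\big|\mink{i} x_i - \mink{i} y_i\big| \leq \max_i |x_i - y_i|$. Because Lemma \ref{lemma} is stated for every $k \in \{1, 2, \dots, N\}$, the same range of $k$ carries over to the corollary without further work. I do not anticipate any obstacle here: all the genuine difficulty resides in Lemma \ref{lemma} itself (in particular its proof-by-contradiction in Case 2), and the corollary follows routinely by symmetrizing that bound into an absolute value.
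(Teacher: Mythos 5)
Your proposal is correct and follows essentially the same route as the paper: both invoke Lemma \ref{lemma} for the two-sided bound and then dominate $\max_i(x_i-y_i)$ and $\min_i(x_i-y_i)$ by $\pm\max_i|x_i-y_i|$, differing only in trivial presentation (you rewrite the lower bound via $\min_i(x_i-y_i)=-\max_i(y_i-x_i)$, while the paper bounds it by $\min_i(-|x_i-y_i|)$). No gaps; nothing further is needed.
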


\begin{proof}[Proof of Corollary \ref{corollary:basic}]
    The inequality can be attained through simple derivation based on Lemma \ref{lemma}, i.e.,
    \begin{equation*}
        \mink{i} ~ x_i - \mink{i} ~ y_i \geq \min_i\left(x_i - y_i\right) \geq \min_i\left(-\left|x_i-y_i\right|\right) = -\max_i\left|x_i-y_i\right|
    \end{equation*}
    and
    \begin{equation*}
        \mink{i} ~ x_i - \mink{i} ~ y_i \leq \max_i \left(x_i-y_i\right) \leq \max_i\left|x_i-y_i\right|.
    \end{equation*}

    Put them together, we obtain
    \begin{equation*}
        \Big|\mink{i} ~ x_i - \mink{i} ~ y_i \Big| \leq \max_i \left|x_i-y_i\right|.
    \end{equation*}
\end{proof}
\vspace{5mm}

\subsection{Proofs for Section \ref{Section:Formulation}}
\begin{proof}[Proof of Theorem \ref{Theo:PE}]
Let $Q_1$ and $Q_2$ be two arbitrary Q function, then
\begin{align*}
    &\left\lVert\mathcal{B}^\pi Q_1 - \mathcal{B}^\pi Q_2\right\rVert_\infty \\ 
    &= \gamma\max_{s,a}\Bigg|\mathbb{E}_{\mathbb{P}_T^N}\bigg[\mink{\tau\in\mathcal{T}} \mathbb{E}_{\tau, \pi}\big[ Q_1(s',a')\big]\bigg] - \mathbb{E}_{\mathbb{P}_T^N}\bigg[\mink{\tau\in\mathcal{T}} \mathbb{E}_{\tau, \pi}\big[ Q_2(s',a')\big]\bigg] \Bigg| \nonumber\\
    &= \gamma\max_{s,a}\Bigg|\mathbb{E}_{\mathbb{P}_T^N}\bigg[\mink{\tau\in\mathcal{T}} \mathbb{E}_{\tau, \pi}\big[ Q_1(s',a')\big] - \mink{\tau\in\mathcal{T}} \mathbb{E}_{\tau, \pi}\big[ Q_2(s',a')\big]\bigg] \Bigg| \nonumber\\
    &\leq \gamma\max_{s,a}\Bigg(\mathbb{E}_{\mathbb{P}_T^N} \bigg|\mink{\tau\in\mathcal{T}} \mathbb{E}_{\tau, \pi}\big[Q_1(s',a')\big] - \mink{\tau\in\mathcal{T}} \mathbb{E}_{\tau, \pi}\big[Q_2(s',a')\big] \bigg|\Bigg) \\
    &\leq \gamma\max_{s,a} \Bigg(\mathbb{E}_{\mathbb{P}_T^N}\bigg[ \max_{\tau\in\mathcal{T}}\Big|\mathbb{E}_{\tau,\pi}\big[Q_1(s',a')-Q_2(s',a')\big]\Big|\bigg]\Bigg) \\
    &\leq \gamma\max_{s,a} \left(\mathbb{E}_{\mathbb{P}_T^N} \lVert Q_1 - Q_2 \rVert_\infty\right) \\
    &=\gamma \lVert Q_1 - Q_2 \rVert_\infty,
\end{align*}
where the second inequality is due to Corollary \ref{corollary:basic}. Thus, the pessimistic Bellman update operator $\mathcal{B}^\pi$ is a contraction mapping.

After convergence, it is easy to check $J(\pi)=\mathbb{E}_{\rho_0,\pi}\Big[Q^\pi(s_0,a_0)\Big]$ by recursively unfolding Q-function.
\end{proof}

\begin{proof}[Proof of Theorem \ref{Theo:Equivalence}]
    For conciseness, in this proof we drop the superscript $sa$ in $\tau^{sa}, \mathbb{P}_T^{sa}$ and $\widetilde{\mathbb{P}}_T^{sa}$. 
    
    The proof is based on the definition and the probability density function of order statistic \cite{order_statistics}. For any random variables $X_1, X_2, \cdots, X_N$, their $k$th order statistic is defined as $\mink{n\in\{1,\cdots,N\}} X_n$, which is another random variable. Particularly, when $X_1, X_2, \cdots, X_N$ are independent and identically distributed following a probability density function $\mathbb{P}(x)$, the order statistic is with the probability density function
    \begin{align*}
        \mathbb{P}_{N,k}(x)=\underbrace{\frac{N!}{(k-1)!(N-k)!}}_{C}
        \mathbb{P}(x)\big[F(x)\big]^{k-1}\big[1-F(x)\big]^{N-k},
    \end{align*}
    where $F(x)$ is the cumulative distribution corresponding to $\mathbb{P}(x)$.
    
    Let $g(\tau)=\mathbb{E}_{\tau,\pi}\big[Q^\pi(s',a')\big]$ for short. As $\tau$ is random following the belief distribution, $g$ as the functional of $\tau$ is also a random variable. Its sample can be drawn by
    \begin{align*}
        g=g(\tau), \quad \tau\sim\mathbb{P}_T(\tau).
    \end{align*}
    As the elements in $\mathcal{T}$ are independent and identically distributed samples from $\mathbb{P}_T(\tau)$, the elements in $\mathcal{G}=\{g(\tau)\mid \tau\in\mathcal{T}\}$ are also independent and identically distributed. Thus, $\mink{g\in\mathcal{G}}~ g$ is their $k$th order statistic, and we have
    \begin{align*}
        \mathbb{E}_{\mathbb{P}^N_T}\Big[\mink{\tau\in\mathcal{T}} g(\tau)\Big] &= \mathbb{E}_{\mathbb{P}^N_T}\Big[\mink{g\in\mathcal{G}}~ g\Big] = \int_{-\infty}^{\infty} \mathbb{P}_{N,k}(g)g dg \\
        &= C \int_{-\infty}^{\infty}\mathbb{P}(g)\big[F(g)\big]^{k-1}\big[1-F(g)\big]^{N-k} g dg, \\
        &= C \bigintsss_{-\infty}^{\infty}\left[\int_{\tau:g(\tau)=g}\mathbb{P}_T(\tau)d\nu(\tau)\right]\big[F(g)\big]^{k-1}\big[1-F(g)\big]^{N-k} g dg, \\
        &= C \bigintsss_{-\infty}^{\infty}\bigintsss_{\tau:g(\tau)=g}\mathbb{P}_T(\tau)\big[F(g)\big]^{k-1}\big[1-F(g)\big]^{N-k} g d\nu(\tau)dg, \\
        &\overset{(*)}{=} C \bigintsss_{\tau}\bigintsss_{g=g(\tau)}\mathbb{P}_T(\tau)\big[F(g)\big]^{k-1}\big[1-F(g)\big]^{N-k} g dgd\nu(\tau), \\
        &= \bigintsss_{\tau} \underbrace{C~\mathbb{P}_T(\tau)\Big[F\big(g(\tau)\big)\Big]^{k-1}\Big[1-F\big(g(\tau)\big)\Big]^{N-k}}_{\widetilde{\mathbb{P}}_T(\tau)} g(\tau) d\nu(\tau) \\
        &= \mathbb{E}_{\widetilde{\mathbb{P}}_T}[g(\tau)],
    \end{align*}
    where $\nu(\tau)$ is the reference measure based on which the belief distribution $P_T^N$ is defined, and the equation $(*)$ is obtained by exchanging the orders of integration. The above equation can rewritten as
    \begin{align*}
        \mathbb{E}_{\mathbb{P}_T^N}\Big[\mink{\tau\in\mathcal{T}} \mathbb{E}_{\tau, \pi}\left[ Q(s',a')\right]\Big] = \mathbb{E}_{\widetilde{\mathbb{P}}_T}\mathbb{E}_{\tau,\pi}\left[Q(s',a')\right].
    \end{align*}
    
    Taking this into consideration, the pessimistic Bellman backup operator in \eqref{Eq:Bellman} is exactly the vanilla Bellman backup operator for the MDP with transition probability $\widetilde{T}(s'|s,a)=\mathbb{E}_{\widetilde{\mathbb{P}}_T}\left[\tau(s')\right]$. Then, evaluating/optimizing policy in the AMG is equivalent to evaluating/optimizing in this MDP.
    
    To prove the property of $w$, we treat it as a composite function with form of $w\big(F(x)\big)$. Then, the derivative of $w$ over $F$ is
    \begin{align}
        \frac{\delta w}{\delta F} = F^{k-2}(1-F)^{N-k-1}\left[(k-1)-(N-1)F\right].
    \end{align}
    It is easy to check that $\frac{\delta w}{\delta F}\geq 0$ for $F\leq \frac{k-1}{N-1}$ and $\frac{\delta w}{\delta F}\leq 0$ for $F \geq \frac{k-1}{N-1}$. Thus, $w(F)$ reaches the maximum at $F=\frac{k-1}{N-1}$. Besides, as $F(\cdot)$ is the PDF of $x$, it monotonically increases with $x$. Put the monotonicity of $w$ and $F$ together, we know $w(F(x))$ first increases, then decreases with $x$ and achieves the maximimum at $x^*=F^{-1}\left(\frac{k-1}{N-1}\right)$.
\end{proof}

\begin{lemma}[Monotonicity of Pessimistic Bellman Backup Operator] \label{lemma:Mono}
    Assume that $Q_1\geq Q_2$ holds element-wisely, then $\mathcal{B}^\pi Q_1 \geq \mathcal{B}^\pi Q_2$ element-wisely.
\end{lemma}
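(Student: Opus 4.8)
The plan is to show that each operation composing $\mathcal{B}^\pi$ preserves element-wise inequalities, and then chain these observations together. Reading off the definition \eqref{Eq:Bellman}, $\mathcal{B}^\pi$ applies, from the inside out: (i) the inner expectation $\mathbb{E}_{\tau,\pi}[\,\cdot\,]$ for each fixed transition $\tau$; (ii) the $k$-minimum $\mink{\tau\in\mathcal{T}}$ over the candidate set; (iii) the outer expectation $\mathbb{E}_{\mathbb{P}_T^N}$ over $\mathcal{T}$; and finally (iv) the affine map $x\mapsto r(s,a)+\gamma x$ with $\gamma\geq0$. Steps (i), (iii), and (iv) are plainly monotone, since expectation and nonnegative affine maps preserve pointwise order, so the entire burden falls on establishing that the $k$-minimum operator in step (ii) is monotone.

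For step (ii), I would fix $(s,a)$ together with the candidate set $\mathcal{T}$ and abbreviate $x_\tau := \mathbb{E}_{\tau,\pi}\big[Q_1(s',a')\big]$ and $y_\tau := \mathbb{E}_{\tau,\pi}\big[Q_2(s',a')\big]$ for $\tau\in\mathcal{T}$. From $Q_1\geq Q_2$ element-wise and the monotonicity of the inner expectation in step (i), we get $x_\tau\geq y_\tau$ for every $\tau$, hence $\min_{\tau}(x_\tau-y_\tau)\geq0$. The left inequality of Lemma \ref{lemma} then yields
\begin{equation*}
\mink{\tau\in\mathcal{T}} x_\tau - \mink{\tau\in\mathcal{T}} y_\tau \;\geq\; \min_{\tau}\left(x_\tau-y_\tau\right) \;\geq\; 0,
\end{equation*}
so the $k$-minimum of the $Q_1$-values dominates that of the $Q_2$-values on every realized candidate set. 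This is the only genuinely nontrivial point, and it is exactly what Lemma \ref{lemma} was set up to deliver.

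Chaining the four monotone steps then gives $\mathcal{B}^\pi Q_1(s,a)\geq\mathcal{B}^\pi Q_2(s,a)$ for every $(s,a)$, which is the claim. I expect no real obstacle here: the argument is a routine verification that each constituent operation preserves order, with the order-statistic monotonicity in step (ii) supplied directly by the already-proven Lemma \ref{lemma}. The result will in turn serve as the workhorse for the ordering assertions of Theorem \ref{Theo:Monotonicity}, since a monotone contraction preserves pointwise inequalities between iterates and passes the ordering to the limit.
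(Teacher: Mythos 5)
Your proposal is correct and follows essentially the same route as the paper's proof: both reduce the claim to the first inequality of Lemma \ref{lemma}, applied with $x_\tau=\mathbb{E}_{\tau,\pi}[Q_1(s',a')]$ and $y_\tau=\mathbb{E}_{\tau,\pi}[Q_2(s',a')]$, and then conclude via monotonicity of the outer expectation and the nonnegative factor $\gamma$. The paper merely writes this as a single chain of inequalities on the difference $\mathcal{B}^\pi Q_1(s,a)-\mathcal{B}^\pi Q_2(s,a)$, whereas you present it as a decomposition into order-preserving operations; the mathematical content is identical.
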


\begin{proof}[Proof of Lemma \ref{lemma:Mono}]
    \begin{align*}
        & \mathcal{B}^\pi Q_1(s,a) - \mathcal{B}^\pi Q_2(s,a) \\
        & = \gamma\mathbb{E}_{\mathbb{P}^N_T} \bigg[\mink{\tau\in\mathcal{T}} \mathbb{E}_{\tau, \pi}\big[ Q_1(s',a')\big] - \mink{\tau\in\mathcal{T}} \mathbb{E}_{\tau, \pi}\big[ Q_2(s',a')\big]\bigg] \\
        & \geq \gamma\mathbb{E}_{\mathbb{P}_T^N} \bigg[\min_{\tau\in\mathcal{T}} \mathbb{E}_{\tau, \pi} \big[Q_1(s',a') - Q_2(s',a')\big] \bigg] \\
        & \geq 0, \qquad \forall s,a,
    \end{align*}
    where the first inequality is due to Lemma \ref{lemma}.
\end{proof}

\begin{proof}[Proof of Theorem \ref{Theo:Monotonicity}]
    It is sufficient to prove $Q^\pi_{N, k+1}\geq Q^\pi_{N,k}, Q^\pi_{N+1, k}\leq Q^\pi_{N,k}$ and $Q^\pi_{N+1, N+1}\geq Q^\pi_{N,N}$ element-wisely. The idea is to first show $\mathcal{B}_{N,k+1}^\pi Q^\pi_{N,k}\geq Q^\pi_{N,k}, \mathcal{B}_{N+1,k}^\pi Q^\pi_{N,k}\leq Q^\pi_{N,k}$ and $\mathcal{B}_{N+1,N+1}^\pi Q^\pi_{N,N}\geq Q^\pi_{N,N}$. Then, the proof can be finished by recursively applying Lemma \ref{lemma:Mono}, for example:
    \begin{align*}
        Q^\pi_{N,k+1}=\lim_{n\rightarrow\infty} \left(\mathcal{B}_{N,k+1}^\pi\right)^n Q^\pi_{N,k} \geq \cdots \geq \mathcal{B}_{N,k+1}^\pi Q^\pi_{N,k}\geq Q^\pi_{N,k}.
    \end{align*}
    
    Next, we prove the three inequalities in sequence.
    
    \fbox{$\mathcal{B}_{N,k+1}^\pi Q^\pi_{N,k}\geq Q^\pi_{N,k}$}
    \begin{align*}
        &\mathcal{B}_{N,k+1}^\pi Q^\pi_{N,k}(s,a) \\
        & = r(s,a) + \gamma \mathbb{E}_{\mathbb{P}^n_T} \bigg[\mink[k+1]{\tau\in\mathcal{T}} \mathbb{E}_{\tau, \pi}\big[Q^\pi_{N,k}(s',a')\big]\bigg] \\
        & \geq r(s,a) + \gamma \mathbb{E}_{\mathbb{P}^n_T} \bigg[\mink[k]{\tau\in\mathcal{T}} \mathbb{E}_{\tau, \pi}\big[Q^\pi_{N,k}(s',a')\big]\bigg] \\
        & = \mathcal{B}_{N,k}^\pi Q^\pi_{N,k}(s,a) \\
        & = Q^\pi_{N,k}(s,a), \qquad \forall s, a,
    \end{align*}
    
    \fbox{$\mathcal{B}_{N+1,k}^\pi Q^\pi_{N,k}\leq Q^\pi_{N,k}$}
    \begin{align*}
        &\mathcal{B}_{N+1,k}^\pi Q^\pi_{N,k}(s,a) \\
        & = r(s,a) + \gamma\mathbb{E}_{\mathcal{T}\sim\mathbb{P}_T^{N+1}} \bigg[\mink{\tau\in\mathcal{T}} \mathbb{E}_{\tau, \pi}\big[ Q^\pi_{N,k}(s',a')\big]\bigg] \\
        & = r(s,a) + \gamma\mathbb{E}_{\mathcal{T}'\sim\mathbb{P}_T^{N}}\Bigg[\mathbb{E}_{\tau'\sim\mathbb{P}_T} \bigg[\mink{\tau\in\mathcal{T'\cup\{\tau'\}}} \mathbb{E}_{\tau,\pi}\big[Q^\pi_{N,k}(s',a')\big]\bigg]\Bigg] \\
        & \leq r(s,a) + \gamma\mathbb{E}_{\mathcal{T}'\sim\mathbb{P}_T^N} \bigg[\mink{\tau\in\mathcal{T}'} \mathbb{E}_{\tau, \pi}\big[ Q^\pi_{N,k}(s',a')\big]\bigg] \\
        & = \mathcal{B}_{N,k}^\pi Q^\pi_{N,k}(s,a) \\
        & = Q^\pi_{N,k}(s,a),  \qquad \forall s, a,
    \end{align*}
    where we divide the $(N+1)$-size set $\mathcal{T}$ into $\mathcal{T}'$ and $\{\tau'\}$, $\mathcal{T}'$ contains the first $N$ elements and $\tau'$ is the last element. The second equality is due to the independence among the set elements.

    \fbox{$\mathcal{B}_{N+1,N+1}^\pi Q^\pi_{N,N}\geq Q^\pi_{N,N}$}
    \begin{align*}
        &\mathcal{B}_{N+1,N+1}^\pi Q^\pi_{N,N}(s,a) \\
        & = r(s,a) + \gamma\mathbb{E}_{\mathcal{T}\sim\mathbb{P}_T^{N+1}} \bigg[\max_{\tau\in\mathcal{T}} \mathbb{E}_{\tau, \pi}\big[ Q^\pi_{N,k}(s',a')\big]\bigg] \\
        & = r(s,a) + \gamma\mathbb{E}_{\mathcal{T}'\sim\mathbb{P}_T^{N}}\Bigg[\mathbb{E}_{\tau'\sim\mathbb{P}_T} \bigg[\max_{\tau\in\mathcal{T'\cup\{\tau'\}}} \mathbb{E}_{\tau,\pi}\big[Q^\pi_{N,k}(s',a')\big]\bigg]\Bigg] \\
        & \geq r(s,a) + \gamma\mathbb{E}_{\mathcal{T}'\sim\mathbb{P}_T^N} \bigg[\max_{\tau\in\mathcal{T}'} \mathbb{E}_{\tau, \pi}\big[ Q^\pi_{N,k}(s',a')\big]\bigg] \\
        & = \mathcal{B}_{N,k}^\pi Q^\pi_{N,k}(s,a) \\
        & = Q^\pi_{N,k}(s,a),  \qquad \forall s, a,
    \end{align*}
\end{proof}

\subsection{Proofs for Section \ref{Section:PO_PMDB}}
Analogous to the policy evaluation for non-regularized case, we define the KL-regularized Bellman update operator for a given policy $\pi$ by
\begin{align}
    \Bar{\mathcal{B}}^\pi_{N,k} Q(s,a) = r(s,a) + \gamma \mathbb{E}_{\mathbb{P}_T^N}\bigg[\mink{\tau\in\mathcal{T}} \mathbb{E}_{\tau, \pi}\Big[ Q(s',a')-\alpha\kld{\big.\pi(\cdot|s)}{\mu(\cdot|s)}\Big]\bigg].
\end{align}
It is easy to check all proofs in last subsection adapt well for the KL-regularized case. We state the corresponding theorems and lemma as below, and apply them to prove the theorems in Section \ref{Section:PO_PMDB}.

\begin{theorem}[Policy Evaluation for KL-Regularized AMG] \label{Theo:Regularized_PE}
The regularized $(N,k)$-pessimistic Bellman backup operator $\Bar{\mathcal{B}}^\pi_{N,k}$ is a contraction mapping. By starting from any function $Q: \mathcal{S}\times\mathcal{A}\rightarrow \mathbb{R}$ and repeatedly applying $\Bar{\mathcal{B}}^\pi_{N,k}$, the sequence converges to $\Bar{Q}^\pi_{N,k}$, with which we have $\Bar{J}(\pi;\mu)=\mathbb{E}_{\rho_0, \pi}\Big[\Bar{Q}^\pi_{N,k}(s_0,a_0)-\alpha\kld{\big.\pi(\cdot|s_0)}{\mu(\cdot|s_0)}\Big]$.
\end{theorem}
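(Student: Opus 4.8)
The plan is to mirror the proof of Theorem \ref{Theo:PE} essentially verbatim, carrying the KL penalty along as an additive term that is independent of $Q$ and therefore vanishes from the contraction estimate. The one point requiring attention is bookkeeping of where the per-step penalty lands: although the operator writes it compactly, the regularizer $\alpha\kld{\pi(\cdot|s')}{\mu(\cdot|s')}$ is attached to the successor state $s'$ drawn inside $\mathbb{E}_{\tau,\pi}$ (consistent with the soft value $\alpha\log\mathbb{E}_\mu\exp(\tfrac1\alpha Q(s',a'))$ appearing in the optimal operator \eqref{Eq:Optimal_Bellman}), which is exactly why the $t=0$ term will surface outside $\Bar{Q}^\pi_{N,k}$ in the final identity.

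First I would establish the contraction. Fix two bounded functions $Q_1,Q_2$ and, for each sampled candidate set $\mathcal{T}$ and each $\tau\in\mathcal{T}$, abbreviate $f^{(i)}_\tau=\mathbb{E}_{\tau,\pi}[Q_i(s',a')-\alpha\kld{\pi(\cdot|s')}{\mu(\cdot|s')}]$. Since the KL term does not depend on the index $i$, it cancels pointwise in the difference, giving $f^{(1)}_\tau-f^{(2)}_\tau=\mathbb{E}_{\tau,\pi}[Q_1(s',a')-Q_2(s',a')]$. The remaining expression is then identical to the one in Theorem \ref{Theo:PE}, so I would apply Corollary \ref{corollary:basic} to obtain $|\mink{\tau\in\mathcal{T}} f^{(1)}_\tau-\mink{\tau\in\mathcal{T}} f^{(2)}_\tau|\leq\max_\tau|\mathbb{E}_{\tau,\pi}[Q_1-Q_2]|\leq\|Q_1-Q_2\|_\infty$, and then pass through the outer expectation $\mathbb{E}_{\mathbb{P}_T^N}$ and the factor $\gamma$ unchanged. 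This yields $\|\Bar{\mathcal{B}}^\pi_{N,k}Q_1-\Bar{\mathcal{B}}^\pi_{N,k}Q_2\|_\infty\leq\gamma\|Q_1-Q_2\|_\infty$, so $\Bar{\mathcal{B}}^\pi_{N,k}$ is a $\gamma$-contraction under the sup norm and the Banach fixed-point theorem supplies a unique limit $\Bar{Q}^\pi_{N,k}$ to which the iterates converge from any initialization.

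For the value identity I would unfold the fixed-point equation $\Bar{Q}^\pi_{N,k}(s,a)=r(s,a)+\gamma\mathbb{E}_{\mathbb{P}_T^N}[\mink{\tau\in\mathcal{T}}\mathbb{E}_{\tau,\pi}[\Bar{Q}^\pi_{N,k}(s',a')-\alpha\kld{\pi(\cdot|s')}{\mu(\cdot|s')}]]$ recursively, exactly as in the non-regularized case, except that each expansion now contributes the augmented per-step payoff $r(s_t,a_t)-\alpha\kld{\pi(\cdot|s_t)}{\mu(\cdot|s_t)}$ for every $t\geq 1$. Taking $\mathbb{E}_{\rho_0,\pi}$ of both sides and adding the missing initial penalty $-\alpha\kld{\pi(\cdot|s_0)}{\mu(\cdot|s_0)}$ outside the Q-function reconstructs the nested $\mink{}$-expectation structure of $\Bar{J}(\pi;\mu)$ in \eqref{Eq:PKLR} term by term, giving $\Bar{J}(\pi;\mu)=\mathbb{E}_{\rho_0,\pi}[\Bar{Q}^\pi_{N,k}(s_0,a_0)-\alpha\kld{\pi(\cdot|s_0)}{\mu(\cdot|s_0)}]$.

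The genuinely new step relative to Theorem \ref{Theo:PE} — and the one I expect to demand the most care — is precisely this tracking of the penalty terms through the recursion: because the operator injects the KL at the successor state, the unfolding automatically covers all $t\geq 1$, leaving the $t=0$ penalty to be supplied by hand, which accounts for its explicit appearance in the identity. A minor regularity caveat is that the value identity presumes the penalties are finite and summable along trajectories (e.g.\ $\pi(\cdot|s)\ll\mu(\cdot|s)$); this is irrelevant to the contraction argument, where the KL cancels regardless of its magnitude.
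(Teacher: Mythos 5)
Your proposal is correct and takes essentially the same route as the paper: the paper gives no separate proof here, stating only that the non-regularized arguments "adapt well," and your write-up is exactly that adaptation — the KL term is independent of $Q$ so it cancels in the difference, Corollary \ref{corollary:basic} then gives the $\gamma$-contraction as in Theorem \ref{Theo:PE}, and recursive unfolding of the fixed point plus the hand-added $t=0$ penalty yields the identity for $\Bar{J}(\pi;\mu)$. Your reading that the operator's KL penalty sits at the successor state $s'$ (the paper's display writes $s$, which is a typo, as confirmed by its use in the proofs of Theorems \ref{Theo:RPO} and \ref{Theo:IRPO}) is also correct and is precisely the bookkeeping that makes the initial-state penalty appear outside $\Bar{Q}^\pi_{N,k}$ in the final identity.
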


\begin{theorem} [Equivalent KL-Regularized MDP with Pessimism-Modulated Dynamics Belief] \label{Theo:Regularized_Equivalence}
    The KL-regularized alternating Markov game in \eqref{Eq:PKLR} is equivalent to the KL-regularized MDP with tuple $(\mathcal{S}, \mathcal{A}, \widetilde{T}, r, \rho_0, \gamma)$, where the transition probability $\widetilde{T}(s'|s,a)=\mathbb{E}_{\widetilde{\mathbb{P}}_T^{sa}}\left[\tau^{sa}(s')\right]$ is defined with the reweighted belief distribution $\widetilde{\mathbb{P}}_T^{sa}$:
    \begin{gather}
        \widetilde{\mathbb{P}}_T^{sa}(\tau^{sa}) \propto w\Big(\mathbb{E}_{\tau^{sa}, \pi}\big[ \Bar{Q}^\pi_{N,k}(s',a')\big]; k, N\Big)\mathbb{P}_T^{sa}(\tau^{sa}), \label{Eq:reweight_regularized}\\
        w(x; k, N)=\big[F(x)\big]^{k-1} \big[1-F(x)\big]^{N-k}, \label{Eq:w_regularized}
    \end{gather}
    and $F(\cdot)$ is cumulative density function. Furthermore, the value of $w(x; k, N)$ first increases and then decreases with $x$, and its maximum is obtained at the $\frac{k-1}{N-1}$ quantile, i.e., $x^*=F^{-1}\left(\frac{k-1}{N-1}\right)$. Similar to the non-regularized case, the reweighting factor $w$ reshapes the initial belief distribution towards being pessimistic in terms of $\mathbb{E}_{\tau, \pi}\big[ \Bar{Q}^\pi_{N,k}(s',a')\big]$.
\end{theorem}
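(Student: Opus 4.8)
The plan is to reuse the order-statistic machinery from the proof of Theorem \ref{Theo:Equivalence} almost verbatim, since $\Bar{\mathcal{B}}^\pi_{N,k}$ and $\mathcal{B}^\pi_{N,k}$ differ only in the per-transition quantity sitting inside the inner minimum. First I would fix an arbitrary pair $(s,a)$, drop the superscript $sa$, and introduce the scalar functional $g(\tau):=\mathbb{E}_{\tau,\pi}[\Bar{Q}^\pi_{N,k}(s',a')]$, i.e. the regularized per-transition value appearing inside the inner $\mink{\tau\in\mathcal{T}}$ of $\Bar{\mathcal{B}}^\pi_{N,k}$ at its fixed point $\Bar{Q}^\pi_{N,k}$ (whose existence is granted by Theorem \ref{Theo:Regularized_PE}). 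Because the $N$ transitions in $\mathcal{T}$ are i.i.d. draws from $\mathbb{P}_T$ and $g$ is a deterministic map, the set $\mathcal{G}=\{g(\tau):\tau\in\mathcal{T}\}$ is again a collection of $N$ i.i.d. real samples, so $\mink{\tau\in\mathcal{T}} g(\tau)$ is exactly the $k$th order statistic of $\mathcal{G}$, with density $C\,\mathbb{P}(g)[F(g)]^{k-1}[1-F(g)]^{N-k}$, where $F$ is the CDF of $g(\tau)$ induced by $\mathbb{P}_T$.

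With this in hand I would replay the exact chain of equalities ending at step $(*)$ in the proof of Theorem \ref{Theo:Equivalence}: express $\mathbb{E}_{\mathbb{P}_T^N}[\mink{\tau\in\mathcal{T}} g(\tau)]$ as an integral of $g$ against the order-statistic density, lift the $g$-integral to an integral over $\tau$ through the layer $\int_{\tau:g(\tau)=g}\mathbb{P}_T(\tau)\,d\nu(\tau)$, exchange the order of integration by Fubini, and collapse the inner integral to read off $\widetilde{\mathbb{P}}_T(\tau)\propto w(g(\tau))\,\mathbb{P}_T(\tau)$ with $w$ as in \eqref{Eq:w_regularized}. This establishes $\mathbb{E}_{\mathbb{P}_T^N}[\mink{\tau\in\mathcal{T}} g(\tau)]=\mathbb{E}_{\widetilde{\mathbb{P}}_T}[g(\tau)]$, whence $\Bar{\mathcal{B}}^\pi_{N,k}$ coincides with the vanilla KL-regularized Bellman operator of the single MDP with transition $\widetilde{T}(s'|s,a)=\mathbb{E}_{\widetilde{\mathbb{P}}_T}[\tau(s')]$. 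Applying Theorem \ref{Theo:Regularized_PE} to both games then yields equality of $\Bar{J}(\pi;\mu)$, which is the claimed equivalence.

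The shape statement for $w$ is then pure calculus and is identical to Theorem \ref{Theo:Equivalence}: treating $w$ as a function of $F$, I would compute $\tfrac{\delta w}{\delta F}=F^{k-2}(1-F)^{N-k-1}[(k-1)-(N-1)F]$, note that it is nonnegative for $F\le\tfrac{k-1}{N-1}$ and nonpositive for $F\ge\tfrac{k-1}{N-1}$, and combine this with the monotonicity of $F$ in $x$ to conclude that $w$ first rises and then falls in $x$, attaining its maximum at $x^*=F^{-1}(\tfrac{k-1}{N-1})$.

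The one genuinely new point, and the only place where the argument is not a literal transcription of Theorem \ref{Theo:Equivalence}, is the presence of the KL penalty. Since it enters the regularized backup through the law of $s'$, I must make sure it is carried through the inner minimum correctly and that doing so leaves the i.i.d./order-statistic structure of $\mathcal{G}$ intact, so that the order-statistic density and the Fubini exchange at step $(*)$ remain valid once the bounded KL integrand is added. I expect this to be the main obstacle only in a bookkeeping sense: once the per-transition value $g$ is defined to incorporate the regularized return consistently with $\Bar{\mathcal{B}}^\pi_{N,k}$, the reweighting is governed entirely by the CDF $F$ of $g$, and the remainder of the derivation is mechanical.
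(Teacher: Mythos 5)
Your proposal is correct and takes essentially the same route as the paper: the paper offers no separate proof of this theorem, remarking only that all proofs of the non-regularized case ``adapt well for the KL-regularized case,'' and that adaptation is exactly the order-statistic argument you replay from Theorem \ref{Theo:Equivalence}. Your closing caveat---that the per-transition functional $g$ must be the full regularized quantity $\mathbb{E}_{\tau,\pi}\big[\Bar{Q}^\pi_{N,k}(s',a')-\alpha\kld{\pi(\cdot|s')}{\mu(\cdot|s')}\big]$ actually sitting inside the inner $k$-minimum of $\Bar{\mathcal{B}}^\pi_{N,k}$, with $F$ the CDF of that quantity---is the correct resolution of the one delicate point, and is in fact slightly more careful than the theorem statement itself, which writes the reweighting indicator without the KL term.
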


\begin{lemma}[Monotonicity of Regularized Pessimistic Bellman Backup Operator] \label{lemma:Regularized_Mono}
    Assume that $Q_1\geq Q_2$ holds element-wisely, then $\Bar{\mathcal{B}}^\pi_{N,k}Q_1 \geq \Bar{\mathcal{B}}^\pi_{N,k}Q_2$ element-wisely.
\end{lemma}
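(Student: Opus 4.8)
The plan is to mirror the proof of Lemma \ref{lemma:Mono} almost verbatim, exploiting the one structural feature that the regularized operator adds: the penalty $\alpha\kld{\pi}{\mu}$ does not depend on the Q-function. First I would form the difference $\Bar{\mathcal{B}}^\pi_{N,k} Q_1(s,a) - \Bar{\mathcal{B}}^\pi_{N,k} Q_2(s,a)$. The reward $r(s,a)$ cancels immediately, leaving
\begin{align*}
    \gamma\,\mathbb{E}_{\mathbb{P}^N_T}\bigg[\mink{\tau\in\mathcal{T}}\mathbb{E}_{\tau,\pi}\big[Q_1(s',a')-\alpha\kld{\pi}{\mu}\big] - \mink{\tau\in\mathcal{T}}\mathbb{E}_{\tau,\pi}\big[Q_2(s',a')-\alpha\kld{\pi}{\mu}\big]\bigg].
\end{align*}

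Next I would apply the first inequality of Lemma \ref{lemma} to the two $k$-minima inside the outer expectation, identifying $x_\tau = \mathbb{E}_{\tau,\pi}[Q_1(s',a')-\alpha\kld{\pi}{\mu}]$ and $y_\tau = \mathbb{E}_{\tau,\pi}[Q_2(s',a')-\alpha\kld{\pi}{\mu}]$, which lower-bounds the difference of the $k$-minima by $\min_{\tau\in\mathcal{T}}(x_\tau - y_\tau)$. The key observation is then that in the difference $x_\tau - y_\tau$ the penalty is common to both terms and cancels, so $x_\tau - y_\tau = \mathbb{E}_{\tau,\pi}[Q_1(s',a')-Q_2(s',a')]\geq 0$ by the hypothesis $Q_1\geq Q_2$. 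Since both the inner minimum over $\tau$ and the outer expectation over $\mathbb{P}_T^N$ preserve nonnegativity, the entire difference is $\geq 0$, giving $\Bar{\mathcal{B}}^\pi_{N,k}Q_1\geq\Bar{\mathcal{B}}^\pi_{N,k}Q_2$ element-wise.

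There is essentially no hard step here; the only point requiring care is the correct handling of the KL penalty, and this is precisely why the argument is painless. Because the penalty is $Q$-independent, it may equivalently be viewed as a constant additive shift that commutes with the $k$-minimum operator, or as a term under $\mathbb{E}_{\tau,\pi}$ that cancels term-by-term in the difference; either way it drops out. This is the same structural reason that lets the contraction and equivalence results for the regularized case (Theorems \ref{Theo:Regularized_PE} and \ref{Theo:Regularized_Equivalence}) be inherited from their non-regularized counterparts, so the remark preceding these statements — that all proofs in the previous subsection adapt directly — is justified by exactly this cancellation.
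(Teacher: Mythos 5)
Your proposal is correct and matches the paper's intended argument: the paper does not write out a separate proof for this lemma but asserts that the proof of Lemma \ref{lemma:Mono} adapts directly, and your argument is precisely that adaptation — form the difference, apply the first inequality of Lemma \ref{lemma}, and note that the $Q$-independent KL penalty cancels inside $\mathbb{E}_{\tau,\pi}[\cdot]$, reducing everything to the non-regularized computation. Your explicit justification of why the adaptation is painless (the penalty drops out of the difference) is exactly the point the paper leaves implicit.
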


\begin{theorem}[Monotonicity in Regularized Alternating Markov Game] \label{Theo:Regularized_Monotonicity}
    The converged Q-function $\Bar{Q}^\pi_{N,k}$ are with the following properties:
    \begin{itemize}
        \vspace{-1.8mm}
        \item Given any $k$, the Q-function $\Bar{Q}^\pi_{N,k}$ element-wisely decreases with $N\in\{k, k+1, \cdots\}$.
        \vspace{-1.8mm}
        \item Given any $N$, the Q-function $\Bar{Q}^\pi_{N,k}$ element-wisely increases with $k\in\{1,2,\cdots,N\}$.
        \vspace{-1.8mm}
        \item The Q-function $\Bar{Q}^\pi_{N,N}$ element-wisely increases with $N$.
    \end{itemize}
\end{theorem}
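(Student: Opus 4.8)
The plan is to mirror the proof strategy of Theorem~\ref{Theo:Monotonicity} for the non-regularized case, leveraging the auxiliary results that the excerpt states carry over verbatim to the KL-regularized setting. Specifically, I would rely on Lemma~\ref{lemma:Regularized_Mono} (monotonicity of $\Bar{\mathcal{B}}^\pi_{N,k}$) as the workhorse, exactly as the proof of Theorem~\ref{Theo:Monotonicity} relies on Lemma~\ref{lemma:Mono}. The overall architecture is identical: it suffices to establish the three single-step operator inequalities
\begin{equation*}
    \Bar{\mathcal{B}}_{N,k+1}^\pi \Bar{Q}^\pi_{N,k}\geq \Bar{Q}^\pi_{N,k}, \quad \Bar{\mathcal{B}}_{N+1,k}^\pi \Bar{Q}^\pi_{N,k}\leq \Bar{Q}^\pi_{N,k}, \quad \Bar{\mathcal{B}}_{N+1,N+1}^\pi \Bar{Q}^\pi_{N,N}\geq \Bar{Q}^\pi_{N,N},
\end{equation*}
and then bootstrap each into a full element-wise comparison of converged Q-functions by repeatedly applying Lemma~\ref{lemma:Regularized_Mono}, using the convergence guaranteed by Theorem~\ref{Theo:Regularized_PE}, e.g. $\Bar{Q}^\pi_{N,k+1}=\lim_{n\to\infty}(\Bar{\mathcal{B}}_{N,k+1}^\pi)^n \Bar{Q}^\pi_{N,k}\geq \Bar{Q}^\pi_{N,k}$.

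For the three single-step inequalities I would argue exactly as in the non-regularized proof. For the first, I would use that increasing the order from $k$ to $k+1$ can only raise the $k$th-minimum value, so $\mink[k+1]{\tau\in\mathcal{T}}(\cdot)\geq\mink[k]{\tau\in\mathcal{T}}(\cdot)$ pointwise inside the expectation, giving $\Bar{\mathcal{B}}_{N,k+1}^\pi \Bar{Q}^\pi_{N,k}\geq\Bar{\mathcal{B}}_{N,k}^\pi \Bar{Q}^\pi_{N,k}=\Bar{Q}^\pi_{N,k}$. For the second, I would split the $(N+1)$-size candidate set as $\mathcal{T}'\cup\{\tau'\}$ with $\mathcal{T}'$ of size $N$ and $\tau'$ an independent extra draw; since adding an element to a set cannot decrease the $k$th minimum, the inner term is $\leq\mink[k]{\tau\in\mathcal{T}'}(\cdot)$ for every $\tau'$, and taking expectation over the independent $\tau'$ collapses to $\Bar{\mathcal{B}}_{N,k}^\pi \Bar{Q}^\pi_{N,k}$. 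For the third, the same set-augmentation argument applies to the maximum ($k=N$ is the maximum), which can only increase when an element is added. The i.i.d.\ structure of the candidate set under $\mathbb{P}_T^N$, which makes the marginalization over $\tau'$ legitimate, is identical here since neither the game transition nor the sampling mechanism is altered by KL regularization.

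The one place warranting a sentence of care is verifying that Lemma~\ref{lemma:Regularized_Mono} genuinely holds, i.e.\ that the extra $-\alpha\kld{\pi(\cdot|s)}{\mu(\cdot|s)}$ term does not disturb monotonicity. This is immediate because that term depends only on $s$ (through $\pi$ and $\mu$) and not on the Q-function being compared, so it cancels in the difference $\Bar{\mathcal{B}}^\pi_{N,k}Q_1-\Bar{\mathcal{B}}^\pi_{N,k}Q_2$ and the argument reduces to Lemma~\ref{lemma} applied to the $k$th-minimum of $\mathbb{E}_{\tau,\pi}[Q_1-Q_2]\geq 0$. The excerpt already asserts ``all proofs in last subsection adapt well for the KL-regularized case,'' so I would simply invoke the stated Lemma~\ref{lemma:Regularized_Mono} rather than re-deriving it.

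I do not anticipate a genuine obstacle: the proof is essentially a transcription of the non-regularized argument, and the only conceptual check is that the KL penalty term is $Q$-independent and hence inert for all three comparisons. The main thing to be careful about is bookkeeping the operator subscripts $N,k$ consistently across the single-step inequalities and the limiting iteration, and ensuring the monotonicity-in-order-statistic facts ($\mink[k+1]{}\geq\mink[k]{}$ and set-augmentation monotonicity of $k$th minimum and maximum) are stated cleanly, since these elementary facts carry the entire weight of the argument.
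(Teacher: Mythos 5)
Your proposal is correct and takes essentially the same approach as the paper: in fact the paper gives no separate proof of this theorem, asserting only that ``all proofs in last subsection adapt well for the KL-regularized case,'' which is precisely the transcription you carry out --- the three single-step operator inequalities ($k{+}1$st minimum dominates $k$th minimum, set augmentation for the $k$th minimum, set augmentation for the maximum) followed by recursive application of Lemma \ref{lemma:Regularized_Mono} and the convergence from Theorem \ref{Theo:Regularized_PE}, mirroring the proof of Theorem \ref{Theo:Monotonicity}. Your explicit check that the KL penalty is $Q$-independent (so it cancels in differences for Lemma \ref{lemma:Regularized_Mono} and rides along harmlessly inside the order-statistic comparisons) is exactly the observation that justifies the paper's unproved claim that the adaptation goes through.
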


\begin{proof}[Proof of Theorem \ref{Theo:RPO}]
The proof of contraction mapping basically follows the same steps in proof of Theorem \ref{Theo:PE}
    Let $Q_1$ and $Q_2$ be two arbitrary Q function.
\begin{align*}
    &\left\lVert\Bar{\mathcal{B}}^* Q_1 - \Bar{\mathcal{B}}^* Q_2\right\rVert_\infty \\ 
    & = \gamma\max_{s,a}\Bigg|\mathbb{E}_{\mathbb{P}_T^N}\bigg[\mink{\tau\in\mathcal{T}} \mathbb{E}_{\tau}\left[ \alpha\log\mathbb{E}_{\mu}\exp{\left(\frac{1}{\alpha}Q_1(s',a')\right)} \right] \nonumber \\ 
    & \qquad \qquad \qquad \ \ -\mink{\tau\in\mathcal{T}} \mathbb{E}_{\tau}\left[ \alpha\log\mathbb{E}_{\mu}\exp{\left(\frac{1}{\alpha}Q_2(s',a')\right)} \right]\bigg] \Bigg| \\
    & \leq \gamma\max_{s,a}\Bigg(\mathbb{E}_{\mathbb{P}_T^N}\Bigg|\mink{\tau\in\mathcal{T}} \mathbb{E}_{\tau}\left[ \alpha\log\mathbb{E}_{\mu}\exp{\left(\frac{1}{\alpha}Q_1(s',a')\right)} \right] \nonumber \\ 
    & \qquad \qquad \qquad \ \ -\mink{\tau\in\mathcal{T}} \mathbb{E}_{\tau}\left[ \alpha\log\mathbb{E}_{\mu}\exp{\left(\frac{1}{\alpha}Q_2(s',a')\right)} \right]\Bigg| \Bigg) \\
    & \leq \gamma\max_{s,a}\Bigg(\mathbb{E}_{\mathbb{P}_T^N}\Bigg[\max_{\tau\in\mathcal{T}} \Bigg| \mathbb{E}_{\tau}\left[ \alpha\log\mathbb{E}_{\mu}\exp{\left(\frac{1}{\alpha}Q_1(s',a')\right)} \right] - \mathbb{E}_{\tau}\left[ \alpha\log\mathbb{E}_{\mu}\exp{\left(\frac{1}{\alpha}Q_2(s',a')\right)} \right]\Bigg|\Bigg] \Bigg) \\
    & = \gamma\max_{s,a}\Bigg(\mathbb{E}_{\mathbb{P}_T^N}\Bigg[\max_{\tau\in\mathcal{T}} \Bigg| \mathbb{E}_{\tau}\left[ \alpha\log\mathbb{E}_{\mu}\exp{\left(\frac{1}{\alpha}Q_1(s',a')\right)} - \alpha\log\mathbb{E}_{\mu}\exp{\left(\frac{1}{\alpha}Q_2(s',a')\right)} \right]\Bigg|\Bigg] \Bigg) \\
    &\leq \gamma\max_{s,a} \left(\mathbb{E}_{\mathbb{P}_T^N} \left\lVert Q_1 - Q_2 \right\rVert_\infty\right) \\
    &=\gamma \left\lVert Q_1 - Q_2 \right\rVert_\infty,
\end{align*}
where the second inequality is obtained with Corollary \ref{corollary:basic}, and the last inequality is due to $\left\lVert\alpha\log\mathbb{E}_{\mu}\exp{\left(\frac{1}{\alpha}Q_1(s,a)\right)} - \alpha\log\mathbb{E}_{\mu}\exp{\left(\frac{1}{\alpha}Q_2(s,a)\right)}\right\rVert_\infty \leq \left\lVert Q_1-Q_2\right\rVert_\infty$. We present its proof by following \cite{SQL}:
    
    Suppose $\epsilon=\lVert Q_1 - Q_2 \rVert_\infty$, then
    \begin{align*}
        \alpha\log\mathbb{E}_{\mu}\exp{\left(\frac{1}{\alpha}Q_1(s,a)\right)} &\leq \alpha\log\mathbb{E}_{\mu}\exp{\left(\frac{1}{\alpha}Q_2(s,a)+\frac{\epsilon}{\alpha}\right)} \\
        &= \alpha\log\mathbb{E}_{\mu}\exp{\left(\frac{1}{\alpha}Q_2(s,a)\right)} + \epsilon.
    \end{align*}
    Similarly, $\alpha\log\mathbb{E}_{\mu}\exp{\left(\frac{1}{\alpha}Q_1(s,a)\right)}\geq \alpha\log\mathbb{E}_{\mu}\exp{\left(\frac{1}{\alpha}Q_2(s,a)\right)} - \epsilon$. The desired inequality is proved by putting them together.

Next, we prove $\bar{\pi}^*(a|s)\propto\mu(a|s)\exp{\left(\frac{1}{\alpha}\bar{Q}^*(s,a)\right)}$ is the optimal policy for $\Bar{J}(\pi;\mu)$.

First, for any policy $\pi'$,
\begin{align*}
    &\Bar{\mathcal{B}}^*\Bar{Q}^{\pi'}(s,a)  \\
    &= r(s,a) + \gamma\mathbb{E}_{\mathbb{P}_T^N}\Bigg[\mink{\tau\in\mathcal{T}} \mathbb{E}_{\tau}\bigg[ \alpha\log\mathbb{E}_{\mu}\exp{\left(\frac{1}{\alpha}\Bar{Q}^{\pi'}(s',a')\right)} \bigg]\Bigg]  \\
    &= r(s,a) + \gamma\mathbb{E}_{\mathbb{P}_T^N}\Bigg[\mink{\tau\in\mathcal{T}} \mathbb{E}_{\tau}\Bigg[ \alpha\log\mathbb{E}_{\mu}\exp{\left(\frac{1}{\alpha}\Bar{Q}^{\pi'}(s',a')\right)} \\
    & \qquad \qquad \qquad \qquad \qquad \qquad \qquad -\min_\pi \alpha\kld{\big.\pi(\cdot|s')}{\frac{\mu(\cdot|s')\exp{\frac{1}{\alpha}\bar{Q}^{\pi'}(s',\cdot)}}{\mathbb{E}_{\mu}\exp{\left(\frac{1}{\alpha}\Bar{Q}^{\pi'}(s',a')\right)}}}\Bigg]\Bigg] \\
    &=
    r(s,a) + \gamma\mathbb{E}_{\mathbb{P}_T^N}\Bigg[\mink{\tau\in\mathcal{T}} \mathbb{E}_{\tau}\bigg[ \max_\pi\Big(\mathbb{E}_\pi\big[\Bar{Q}^{\pi'}(s',a')\big]-\alpha\kld{\big.\pi(\cdot|s')}{\mu(\cdot|s')}\Big) \bigg]\Bigg] \\
    &\geq r(s,a) + \gamma\mathbb{E}_{\mathbb{P}_T^N}\Bigg[\mink{\tau\in\mathcal{T}} \mathbb{E}_{\tau}\bigg[ \mathbb{E}_{\pi'}\big[\Bar{Q}^{\pi'}(s',a')\big]-\alpha\kld{\big.\pi'(\cdot|s')}{\mu(\cdot|s')} \bigg]\Bigg] \\
    &=\Bar{\mathcal{B}}^{\pi'}\Bar{Q}^{\pi'}(s,a) \\
    &=\Bar{Q}^{\pi'}(s,a), \qquad \forall s, a.
\end{align*}
By applying Lemma \ref{lemma:Regularized_Mono} recursively, we obtain
\begin{align} \label{Eq:contradiction_1}
    \Bar{Q}^*(s,a) = \lim_{n\rightarrow\infty} \left(\Bar{\mathcal{B}}^*\right)^n \Bar{Q}^{\pi'}(s,a) \geq \cdots \geq \Bar{\mathcal{B}}^* \Bar{Q}^{\pi'}(s,a) \geq \Bar{Q}^{\pi'}(s,a), \quad \forall s, a.
\end{align}

Besides,
\begin{align*}
    &\Bar{\mathcal{B}}^{\bar{\pi}^*}\Bar{Q}^*(s,a) \\
    &= r(s,a) + \gamma \mathbb{E}_{\mathbb{P}_T^N}\bigg[\mink{\tau\in\mathcal{T}} \mathbb{E}_{\tau, \bar{\pi}^*}\Big[ \Bar{Q}^*(s',a')-\alpha\kld{\big.\bar{\pi}^*(\cdot|s)}{\mu(\cdot|s)}\Big]\bigg] \\
    &=r(s,a) + \gamma\mathbb{E}_{\mathbb{P}_T^N}\left[\mink{\tau\in\mathcal{T}} \mathbb{E}_{\tau}\left[ \alpha\log\mathbb{E}_{\mu}\exp{\left(\frac{1}{\alpha}\Bar{Q}^*(s',a')\right)} \right]\right] \\
    &=\Bar{Q}^*(s,a), \qquad \forall s, a.
\end{align*}
By repeatedly applying $\Bar{\mathcal{B}}^{\bar{\pi}^*}$ to the above equation, we obtain
\begin{align} \label{Eq:contradiction_2}
    \Bar{Q}^{\bar{\pi}^*}(s,a) = \lim_{n\rightarrow\infty} \left(\Bar{\mathcal{B}}^{\bar{\pi}^*}\right)^n \Bar{Q}^*(s,a) = \cdots = \Bar{\mathcal{B}}^{\bar{\pi}^*}\Bar{Q}^*(s,a) = \Bar{Q}^*(s,a), \quad \forall s, a.
\end{align}

By combining equations \eqref{Eq:contradiction_1} and \eqref{Eq:contradiction_2}, we have 
\begin{align} \label{Eq:contradiction_3}
    \Bar{Q}^{\bar{\pi}^*}(s,a) \geq \Bar{Q}^{\pi'}(s,a), \quad \forall \pi', \forall s, a.
\end{align}

Finally, by expanding $\Bar{J}$ as stated in Theorem \ref{Theo:Regularized_PE} and applying \eqref{Eq:contradiction_3}, the proof is completed
\begin{align*}
    \Bar{J}(\bar{\pi}^*;\mu)&=\mathbb{E}_{\rho_0, \bar{\pi}^*}\Big[\Bar{Q}^{\bar{\pi}^*}(s_0,a_0)-\alpha\kld{\big.\bar{\pi}^*(\cdot|s_0)}{\mu(\cdot|s_0)}\Big]  \\
    & \geq\mathbb{E}_{\rho_0, \bar{\pi}^*}\Big[\Bar{Q}^{\pi'}(s_0,a_0)-\alpha\kld{\big.\bar{\pi}^*(\cdot|s_0)}{\mu(\cdot|s_0)}\Big]  \\
    & \geq \mathbb{E}_{\rho_0, \pi'}\Big[\Bar{Q}^{\pi'}(s_0,a_0)-\alpha\kld{\big.\pi'(\cdot|s_0)}{\mu(\cdot|s_0)}\Big]  \\
    & = \Bar{J}(\pi';\mu), \quad \forall \pi'.
\end{align*}
\end{proof}

\begin{proof}[Proof of Theorem \ref{Theo:IRPO}]
We first prove $J(\pi_{i+1})>J(\pi_i)$. As the iteration requires $\Bar{J}(\pi_{i+1};\pi_i) > \Bar{J}(\pi_i;\pi_i) = J(\pi_i)$, it is sufficient to prove $J(\pi_{i+1})\geq \Bar{J}(\pi_{i+1};\pi_i)$. We do that by showing $Q^{\pi_{i+1}}\geq\Bar{Q}^{\pi_{i+1}}$ element-wisely.

First,
\begin{align}
    &\mathcal{B}^{\pi_{i+1}} \Bar{Q}^{\pi_{i+1}}(s,a) - \Bar{Q}^{\pi_{i+1}}(s,a) \nonumber \\
    &= \mathcal{B}^{\pi_{i+1}}\Bar{Q}^{\pi_{i+1}}(s,a) - \Bar{\mathcal{B}}^{\pi_{i+1}}\Bar{Q}^{\pi_{i+1}}(s,a) \nonumber \\
    &= \gamma\mathbb{E}_{\mathbb{P}_T^N} \bigg[\mink{\tau\in\mathcal{T}} \mathbb{E}_{\tau,\pi_{i+1}}\Big[\Bar{Q}^{\pi_{i+1}}(s',a')\Big] \bigg] \nonumber \\
    & \quad - \gamma\mathbb{E}_{\mathbb{P}_T^N} \bigg[\mink{\tau\in\mathcal{T}} \mathbb{E}_{\tau,\pi_{i+1}}\Big[\Bar{Q}^{\pi_{i+1}}(s',a')- \alpha\kld{\big.\pi_{i+1}(\cdot|s')}{\pi_i(\cdot|s')}\Big] \bigg] \nonumber \\
    & \geq \gamma\mathbb{E}_{\mathbb{P}_T^N}\bigg[\min_{\tau}\mathbb{E}_{\tau}\Big[\alpha\kld{\big.\pi_{i+1}(\cdot|s')}{\pi_i(\cdot|s')}\Big]\bigg] \nonumber \\
    &\geq 0, \qquad \forall s, a, \label{Eq:proof5_1}
\end{align}
where the first inequality is due to Lemma \ref{lemma}, the second inequality is due to the non-negativity of KL-divergence.

Then, by recursively applying Lemma \ref{lemma:Mono} we obtain
\begin{align}
    Q^{\pi_{i+1}}(s,a) = \lim_{n\rightarrow\infty} \left(\mathcal{B}^{\pi_{i+1}}\right)^n \Bar{Q}^{\pi_{i+1}}(s,a) \geq \cdots \geq \mathcal{B}^{\pi_{i+1}} \Bar{Q}^{\pi_{i+1}}(s,a) \geq \Bar{Q}^{\pi_{i+1}}(s,a), \quad \forall s, a.    \label{Eq:proof5_2}
\end{align}
By substituting into $J(\pi_{i+1})$ and $\Bar{J}(\pi_{i+1}; \pi_i)$, we have
\begin{align}
    J(\pi_{i+1}) &=\mathbb{E}_{\rho_0,\pi_{i+1}}Q^{\pi_{i+1}}(s_0,a_0) \nonumber \\
    &\geq\mathbb{E}_{\rho_0,\pi_{i+1}}\Bar{Q}^{\pi_{i+1}}(s_0,a_0) \nonumber \\
    &\geq \mathbb{E}_{\rho_0,\pi_{i+1}}\left[\Bar{Q}^{\pi_{i+1}}(s_0,a_0) - \alpha\kld{\pi_{i+1}(\cdot|s_0)}{\pi_i(\cdot|s_0)}\right] \nonumber \\
    &=\Bar{J}(\pi_{i+1};\pi_i). \label{Eq:proof5_3}
\end{align}
To summarize, the proof is done via $J(\pi_{i+1})\geq \Bar{J}(\pi_{i+1};\pi_i) > \Bar{J}(\pi_i;\pi_i) = J(\pi_i)$. 

Next, we consider the special case where $\{\pi_i\}$ are obtained via regularized policy optimization in Theorem \ref{Theo:RPO}. For the $(i+1)$th step, $\pi_{i+1}$ is the optimal solution for the sub-problem of maximizing $J(\pi;\pi_i)$. Thus, according to \eqref{Eq:contradiction_3}, $\Bar{Q}^{\pi_{i+1}}(s,a) \geq \bar{Q}^{\pi'}(s,a), \forall \pi', \forall s,a$. For $\pi'=\pi_i$, the KL term in Q-value vanishes and we have $\Bar{Q}^{\pi_{i+1}}(s,a) \geq Q^{\pi_i}(s,a)$. By combining it with \eqref{Eq:proof5_2}, we obtain 
\begin{align}
    Q^{\pi_{i+1}}(s,a)\geq \Bar{Q}^{\pi_{i+1}}(s,a) \geq Q^{\pi_i}(s,a), \quad \forall s,a.
\end{align}
Then, the boundness of $Q$ indicates the existence of $\lim_{i\rightarrow\infty}Q^{\pi_i}(s,a)$ and also $\lim_{i\rightarrow\infty}Q^{\pi_i}(s,a)=\lim_{i\rightarrow\infty}\Bar{Q}^{\pi_i}(s,a), \forall s,a$. 

For any $s, a, a'$ satisfying $\lim_{i\rightarrow\infty}Q^{\pi_i}(s,a)>\lim_{i\rightarrow\infty}Q^{\pi_i}(s, a')$, it satisfies $\lim_{i\rightarrow\infty}\Bar{Q}^{\pi_i}(s,a)>\lim_{i\rightarrow\infty}\Bar{Q}^{\pi_i}(s, a')$. Thus,
\begin{align}
    \exists N, \epsilon>0 \quad \forall j\geq N:\Bar{Q}^{\pi_j}(s,a)-\Bar{Q}^{\pi_j}(s, a')\geq \epsilon.
\end{align}
According to Theorem \ref{Theo:RPO}, the updated policy is with form of\footnote{Strictly speaking, Theorem \ref{Theo:RPO} shows $\bar{\pi}^*(a|s)\propto\mu(a|s)\exp{\left(\frac{1}{\alpha}\bar{Q}^*_{N,k}(s,a)\right)}$. Besides, we have shown $\Bar{Q}_{N,k}^{\bar{\pi}^*}(s,a) = \Bar{Q}_{N,k}^*(s,a)$ in \eqref{Eq:contradiction_2}. Thus, $\bar{\pi}^*(a|s)\propto\mu(a|s)\exp{\left(\frac{1}{\alpha}\bar{Q}^{\pi^*}_{N,k}(s,a)\right)}$.}
\begin{align*}
    \pi_i(a|s)\propto\pi_{i-1}(a|s)\exp{\left(\frac{1}{\alpha}\Bar{Q}^{\pi_i}(s,a)\right)}.
\end{align*}  Then, the policy ratio can be rewritten and bounded as
\begin{align} \label{Eq:proof_IRPO}
    \frac{\pi_i(a|s)}{\pi_i(a'|s)} = \frac{\pi_N(a|s)}{\pi_N(a'|s)}\exp{\left(\sum_{j=N}^{i}\frac{\Bar{Q}^{\pi_j}(s,a)-\Bar{Q}^{\pi_j}(s,a')}{\alpha}\right)} \geq \frac{\pi_N(a|s)}{\pi_N(a'|s)} \exp{\left(\frac{i-N}{\alpha}\epsilon\right)}, \ \  \forall i\geq N.
\end{align}
With the prerequisite of $\pi_0(a|s)>0, \forall s,a$ and the form of policy update, we know $\pi_N(a|s)>0, \forall s,a$, and further $\frac{\pi_N(a|s)}{\pi_N(a'|s)}>0$. Then, as $i$ approaches infinity in \eqref{Eq:proof_IRPO}, we obtain $\frac{\pi_i(a|s)}{\pi_i(a'|s)}\rightarrow\infty$.
\end{proof}

\section{Iterative Regularized Policy Optimization as Expectation–Maximization with Structured Variational Posterior}\label{Appendix:EM}
This section recasts the iterative regularized policy optimization as an Expectation-Maximization algorithm for policy optimization, where the Expectation step corresponds to a structured variational inference procedure for dynamics. To simplify the presentation, we consider the $L$-length horizon and let $\gamma=1$ (thus omitted in the derivation). For infinite horizon $L\to \infty$, the discounted factor $\gamma$ can be readily recovered by modifying the transition dynamics, such that any action produces a transition into an terminal state with probability $1-\gamma$.

\subsection{Review of RL as Probabilistic Inference}
\begin{wrapfigure}{r}{0.3\textwidth}
    \vspace{-12pt}
    \centering
    \includegraphics[width=0.25\textwidth]{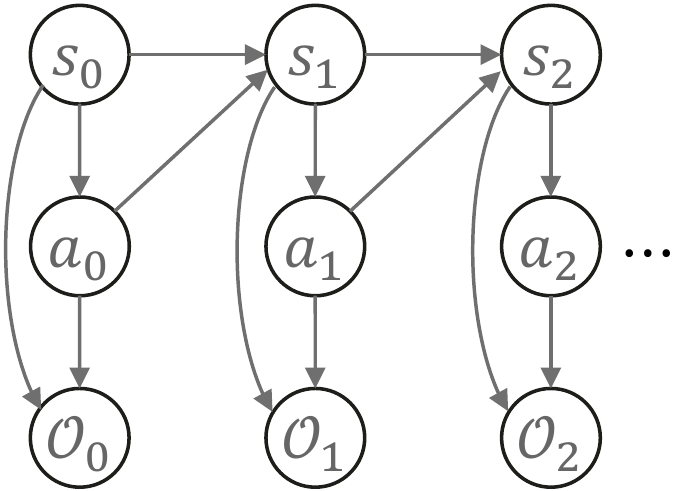}
    \caption{\small Probabilistic graphical model for RL as inference.}
    \label{Fig:MDP_PGM}
    \vspace{-15pt}
\end{wrapfigure}
We first review the general framework of casting RL as probabilistic inference \cite{RLasInference}. It starts by embedding the MDP into a probabilistic graphical model, as shown in Figure \ref{Fig:MDP_PGM}. Apart from the basic elements in MDP, an additional binary random variable $\mathcal{O}_t$ is introduced, where $\mathcal{O}_t=1$ denotes that the action at time step $t$ is optimal, and $\mathcal{O}_t=0$ denotes the suboptimality. Its distribution is defined as\footnote{Assume the reward function is non-positive such that the probability is not larger than one. If the assumption is  unsatisfied, we can subtract the reward function by its maximum, without changing the optimal policy.}
\begin{align}
p(\mathcal{O}_t=1|s_t,a_t)=\exp{\left(\frac{r(s_t,a_t)}{\alpha} \right)},
\end{align}
where $\alpha$ is the hyperparameter. As we focus on the optimality, in the following we drop $=1$ and use $\mathcal{O}_t$ to denote $\mathcal{O}_t=1$ for conciseness. The remaining random variables in the probabilistic graphical model are $s_t$ and $a_t$, whose distributions are defined by the system dynamics $\rho_0(s)$ and $T(s'|s,a)$ as well as a reference policy $\mu(a|s)$. Then, the joint distribution over all random variables for $t\in\{1,2,\cdots,L\}$ can be written as 
\begin{align}\label{Eq:full_dist}
    \mathbb{P}\left(s_{0:L},a_{0:L},\mathcal{O}_{0:L}\right) = \rho_0(s_0)\cdot\prod_{t=0}^{L-1}T(s_{t+1}|s_t,a_t)\mu(a_t|s_t)\cdot \mu(a_L|s_L) \exp{\left(\sum_{t=0}^L \frac{r(s_t,a_t)}{\alpha}\right)}.
\end{align}

Regarding optimal control, a natural question to ask is what the trajectory should be like given the optimality over all time steps. This raises the posterior inference of $\mathbb{P}\left(s_{0:L},a_{0:L}| \mathcal{O}_{0:L}\right)$. According to d-separation, the exact posterior follows the form of 
\begin{align} \label{Eq:posterior}
    \mathbb{P}\left(s_{0:L},a_{0:L}| \mathcal{O}_{0:L}\right) = \mathbb{P}(s_0|\mathcal{O}_{0:L})\cdot\prod_{t=0}^{L-1}\mathbb{P}(s_{t+1}|s_t,a_t,\mathcal{O}_{0:L})\mathbb{P}(a_t|s_t,\mathcal{O}_{0:L})\cdot \mathbb{P}(a_L|s_L,\mathcal{O}_{0:L}).
\end{align}
Notice that the dynamics posterior $\mathbb{P}(s_0|\mathcal{O}_{0:L})$ and $\mathbb{P}(s_{t+1}|s_t,a_t,\mathcal{O}_{0:L})$ depends on $\mathcal{O}_{0:L}$, and in fact their concrete mathematical expressions are inconsistent with those of the system dynamics $\rho_0(s_0)$ and $T(s_{t+1}|s_t,a_t)$ \cite{RLasInference}. This essentially poses the assumption that the dynamics itself can be controlled when referring to the optimality, unpractical in general. 

Variational inference can be applied to correct this issue. Concretely, define the variational approximation to the exact posterior by
\begin{align}\label{Eq:variational}
    \widehat{\mathbb{P}}\left(s_{0:L},a_{0:L}\right) = \rho_0(s_0)\cdot\prod_{t=0}^{L-1}T(s_{t+1}|s_t,a_t)\pi(a_t|s_t)\cdot \pi(a_L|s_L).
\end{align}
Its difference to \eqref{Eq:posterior} is enforcing the dynamics posterior to match the practical one. Under this structure, the variational posterior can be adjusted by optimizing $\pi$ to best approximate the exact posterior. The optimization is executed under measure of KL divergence, i.e.,
\begin{align} \label{Eq:VI}
    &\kld{\widehat{\mathbb{P}}\left(s_{0:L},a_{0:L}\right)}{\mathbb{P}\left(s_{0:L},a_{0:L} | \mathcal{O}_{0:L}\right)} \! = \!\! \bigintssss \widehat{\mathbb{P}}\left(s_{0:L},a_{0:L}\right) \log\frac{\widehat{\mathbb{P}}\left(s_{0:L},a_{0:L}\right)}{\mathbb{P}\left(s_{0:L},a_{0:L} | \mathcal{O}_{0:L}\right)}ds_{0:L}da_{0:L} \nonumber \\
    & = \bigintssss \widehat{\mathbb{P}}\left(s_{0:L},a_{0:L}\right) \log\frac{\widehat{\mathbb{P}}\left(s_{0:L},a_{0:L}\right)}{\mathbb{P}\left(s_{0:L},a_{0:L}, \mathcal{O}_{0:L}\right)}ds_{0:L}da_{0:L} + \log \mathbb{P}(\mathcal{O}_{0:L}) \nonumber \\
    & = \mathbb{E}_{\rho_0,T,\pi}\left[\sum_{t=0}^{L}\left(-\frac{r(s_t,a_t)}{\alpha}+\log\frac{\pi(a_t|s_t)}{\mu(a_t|s_t)}\right)\right] + \log \mathbb{P}(\mathcal{O}_{0:L}) \nonumber \\
    & = \frac{1}{\alpha} \mathbb{E}_{\rho_0,T,\pi}\left[\sum_{t=0}^{L}\bigg(-r(s_t,a_t)+\alpha\kld{\Big.\pi(\cdot|s_t)}{\Big.\mu(\cdot|s_t)}\bigg)\right] + \log \mathbb{P}(\mathcal{O}_{0:L}),
\end{align}
where the third equation is obtained by substituting \eqref{Eq:full_dist} and \eqref{Eq:variational}. As the second term in \eqref{Eq:VI} is constant, minimizing the above KL divergence is equivalent to maximize the cumulative reward with policy regularizer. Several fascinating online RL methods can be treated as algorithmic instances based on this framework \cite{SQL, SAC}. 

To summarize, the structured variational posterior with form \eqref{Eq:variational} is vital to ensure the inferred policy meaningful in the actual environment.

\subsection{Pessimism-Modulated Dynamics Belief as Structured Variational Posterior}
\begin{wrapfigure}{r}{0.38\textwidth}
    \vspace{-12pt}
    \centering
    \includegraphics[width=0.348\textwidth]{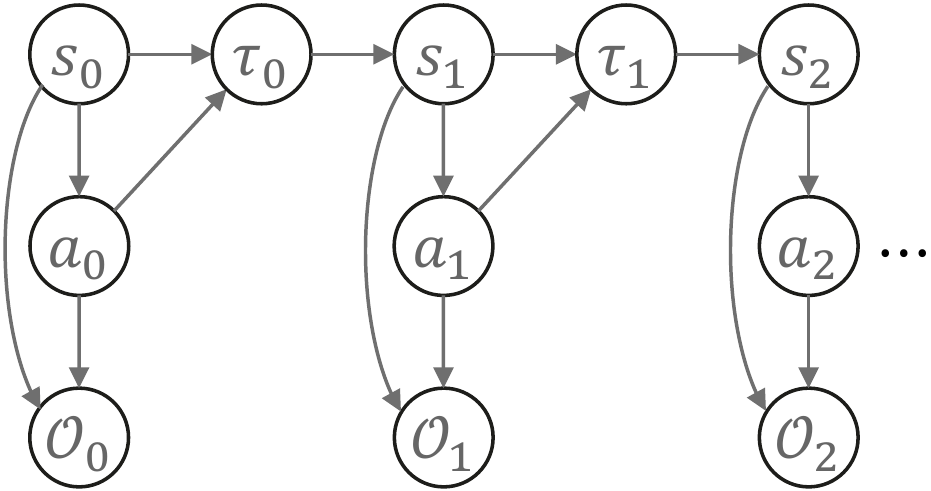}
    \caption{\small Probabilistic graphical model for offline RL as inference.}
    \label{Fig:AMG_PGM}
    \vspace{-10pt}
\end{wrapfigure}
The probabilistic graphical model is previously devised for online RL. In offline setting, the environment can not be interacted to minimize \eqref{Eq:VI}. A straightforward modification to reflect this is to add the transition dynamics as a random variable in the graph, as shown in Figure \ref{Fig:AMG_PGM}. We assume the transition follows a predefined belief distribution, i.e., $\mathbb{P}_T^{sa}(\tau^{sa})$ introduced in Subsection \ref{Section:AMG}. To make its dependence on $(s,a)$ explicit, let $\mathbb{P}_T(\tau^{sa}|s,a)$ redenote $\mathbb{P}_T^{sa}(\tau^{sa})$. For conciseness, we drop the superscript $sa$ in $\tau^{sa}$ in the remainder. 

The joint distribution over all random variables in Figure \ref{Fig:AMG_PGM} for $t\in\{1,2,\cdots,L\}$ can be written as
\begin{align}\label{Eq:offline_full_dist}
    \mathbb{P}\left(s_{0:L},a_{0:L},\tau_{0:L-1}, \mathcal{O}_{0:L}\right) =  \rho_0(s_0)&\cdot\prod_{t=0}^{L-1}\mathbb{P}_T(\tau_t|s_t,a_t)\tau_t(s_{t+1})\mu(a_t|s_t) \nonumber \\
    &\cdot \mu(a_L|s_L) \exp{\left(\sum_{t=0}^L \frac{r(s_t,a_t)}{\alpha}\right)}.
\end{align}
Similar to online setting, we wonder what the trajectory should be like given the optimality over all time steps. By examining the conditional independence in the probabilistic graphical model, the exact posterior follows the form of
\begin{align} \label{Eq:offline_posterior}
    \mathbb{P}\left(s_{0:L},a_{0:L},\tau_{0:L-1}| \mathcal{O}_{0:L}\right) = \mathbb{P}(s_0|\mathcal{O}_{0:L})&\cdot\prod_{t=0}^{L-1}\mathbb{P}(\tau_t|s_t,a_t,\mathcal{O}_{0:L})\mathbb{P}(s_{t+1}|\tau_t,\mathcal{O}_{0:L})\mathbb{P}(a_t|s_t,\mathcal{O}_{0:L}) \nonumber \\
    &\cdot \mathbb{P}(a_L|s_L,\mathcal{O}_{0:L}).
\end{align}
Unsurprisingly, $s_{0:T}$ and $\tau_{0:T}$ again depend on $\mathcal{O}_{0:L}$, indicating that the system transition and its belief can be controlled when referring to optimality. In other words, it leads to over-optimistic inference. 

To emphasize pessimism, we define a novel structured variational posterior:
\begin{align}\label{Eq:offline_variational}
    \widehat{\mathbb{P}}\left(s_{0:L},a_{0:L},\tau_{0:L-1}\right) = \rho_0(s_0)\cdot\prod_{t=0}^{L-1}\widetilde{\mathbb{P}}_T(\tau_t|s_t,a_t)\tau_t(s_{t+1})\pi(a_t|s_t)\cdot \pi(a_L|s_L),
\end{align}
with $\widetilde{\mathbb{P}}_T$ being the \emph{Pessimism-Modulated Dynamics Belief (PMDB)} constructed via the KL-regularized AMG (see Theorem \ref{Theo:Regularized_Equivalence}):
\begin{gather}
    \widetilde{\mathbb{P}}_T(\tau|s,a) \propto w\Big(\mathbb{E}_{\tau, \pi}\big[ \Bar{Q}^\pi_{N,k}(s',a')\big]; k, N\Big)\mathbb{P}_T(\tau|s,a), \label{Eq:reweight_recall}\\
    w(x; k, N)=\big[F(x)\big]^{k-1} \big[1-F(x)\big]^{N-k}, \label{Eq:w_recall}
\end{gather}
$F(\cdot)$ is cumulative density function and $\Bar{Q}^\pi_{N,k}$ is the Q-value for the KL-regularized AMG. As discussed, $w$ reshapes the initial belief distribution towards being pessimistic in terms of $\mathbb{E}_{\tau, \pi}\big[ \Bar{Q}^\pi_{N,k}(s',a')\big]$. 

It seems that we need to solve the AMG to obtain $\Bar{Q}^\pi_{N,k}$ and further define $\widetilde{\mathbb{P}}_T$. In fact, $\Bar{Q}^\pi_{N,k}$ is also the Q-value for the MDP considered in \eqref{Eq:offline_variational}. This can be verified by checking Theorem \ref{Theo:Regularized_Equivalence}: the KL-regularized AMG is equivalent to the MDP with transition $\widetilde{T}(s'|s,a)=\mathbb{E}_{\widetilde{\mathbb{P}}_T}\left[\tau(s')\right]$, which can be implemented by sampling first $\tau\sim\widetilde{\mathbb{P}}_T$ and then $s'\sim \tau$, right the procedure in \eqref{Eq:offline_variational}.

To best approximate the exact posterior, we optimize the variational posterior by minimizing
\begin{align} \label{Eq:offline_VI}
    & \kld{\widehat{\mathbb{P}}\left(s_{0:L},a_{0:L},\tau_{0:L-1}\right)}{\mathbb{P}\left(s_{0:L},a_{0:L},\tau_{0:L-1}| \mathcal{O}_{0:L}\right)}  \nonumber \\
    &=  \bigintssss \widehat{\mathbb{P}}\left(s_{0:L},a_{0:L},\tau_{0:L-1}\right) \log\frac{\widehat{\mathbb{P}}\left(s_{0:L},a_{0:L},\tau_{0:L-1}\right)}{\mathbb{P}\left(s_{0:L},a_{0:L},\tau_{0:L-1}| \mathcal{O}_{0:L}\right)}ds_{0:L}da_{0:L} \nonumber \\
    &= \bigintssss \widehat{\mathbb{P}}\left(s_{0:L},a_{0:L},\tau_{0:L-1}\right) \log\frac{\widehat{\mathbb{P}}\left(s_{0:L},a_{0:L},\tau_{0:L-1}\right)}{\mathbb{P}\left(s_{0:L},a_{0:L},\tau_{0:L-1}, \mathcal{O}_{0:L}\right)}ds_{0:L}da_{0:L} + \log\mathbb{P}(\mathcal{O}_{0:L}) \nonumber \\
    &=\underbrace{\mathbb{E}_{\rho_0, \widetilde{\mathbb{P}}_T, \tau_{0:L-1},\pi}\Bigg[\sum_{t=0}^L\left(-\frac{r(s_t,a_t)}{\alpha}+\kld{\Big.\pi(\cdot|s_t)}{\mu(\cdot|s_t)}\right)\Bigg]}_{M(\pi;\mu)}  \nonumber \\
    & \qquad +\mathbb{E}_{\rho_0, \widetilde{\mathbb{P}}_T, \tau_{0:L-1},\pi}\Bigg[\sum_{t=0}^{L-1}\log w\Big(\mathbb{E}_{\tau_t, \pi}\big[ \Bar{Q}^\pi_{N,k}(s_{t+1},a_{t+1})\big]; k, N\Big) \Bigg] + (L-1)\cdot\log C \nonumber \\
    & \overset{(*)}{=} M(\pi;\mu) + (L-1)\cdot\log C\nonumber \\
    & \qquad +\sum_{t=0}^{L-1}\mathbb{E}_{\rho_0,\pi, \widehat{\mathbb{P}}_T}\left[\mathbb{E}_{\tau_0,\pi, \widehat{\mathbb{P}}_T}\cdots\left[\textcolor{blue}{\mathbb{E}_{\tau_{t-1}, \pi,\widetilde{\mathbb{P}}_T} \left[\log w\Big(\mathbb{E}_{\tau_t, \pi}\big[ \Bar{Q}^\pi_{N,k}(s_{t+1},a_{t+1})\big]; k, N\Big)\vphantom{\sum_1^2}\right]}\right]\right] \nonumber \\
    &\approx M(\pi;\mu) + (L-1) \cdot (\textcolor{blue}{\log C'}+ \log C),
\end{align}
where the equation $(*)$ is by unfolding the expectation sequentially over each step, $C=\frac{N!}{(k-1)!(N-k)!}$ is the normalization constant in \eqref{Eq:reweight_recall},
and $C'=\frac{(k-1)^{k-1}(N-k)^{N-k}}{(N-1)^{N-1}}$ is used to approximate $w$. To clarify the approximation, recall Theorem \ref{Theo:Regularized_Equivalence} stating that a sample $\tau_t\sim\widetilde{\mathbb{P}}_T$ can be equivalently drawn by finding $\tau_t=\arg\mink{\tau\in\mathcal{T}_t}\mathbb{E}_{\tau,\pi}\left[\Bar{Q}^\pi_{N,k}(s_{t+1},a_{t+1})\right]$ based on another sampling procedure $\mathcal{T}_t=\{\tau\}^N\sim\mathbb{P}^N _{T}$. Then, given $\mathcal{T}_t$, we observe that $\mathbb{E}_{\textcolor{magenta}{\tau_t}, \pi}\big[ \Bar{Q}^\pi_{N,k}(s_{t+1},a_{t+1})\big]$ is the empirical $\frac{k-1}{N-1}$ quantile of the random variable $\mathbb{E}_{\textcolor{magenta}{\tau}, \pi}\big[ \Bar{Q}^\pi_{N,k}(s_{t+1},a_{t+1})\big]$, i.e., $F\left(\mathbb{E}_{\tau_t, \pi}\big[ \Bar{Q}^\pi_{N,k}(s_{t+1},a_{t+1})\big]\right)\approx\frac{k-1}{N-1}$. By substituting into $w$, we obtain $w\approx C'$.

Note that $-\alpha M(\pi;\mu)$ is exactly the return of KL-regularized MDP in Theorem \ref{Theo:Regularized_Equivalence}. By the equivalence of this KL-regularized MDP and the KL-regularized AMG in \eqref{Eq:PKLR}, we have $M(\pi;\mu)=-\frac{\Bar{J}(\pi;\mu)}{\alpha}$. Thus, minimization of \eqref{Eq:offline_VI} is equivalent to maximization of $\Bar{J}(\pi;\mu)$.

\subsection{Full Expectation-Maximization Algorithm}
In previous subsection, the reference policy $\mu$ is assumed as a prior, and the optimized policy would be constrained close to it through KL divergence. In practice, the prior of optimal policy can not easily obtained, and a popular methodology to handle this is to learn the prior itself in the data-driven way, i.e., the principle of empirical Bayes.

The prior learning is done by maximizing the log-marginal likelihood:
\begin{align}
    L(\mu) =\log\mathbb{P}\left(\mathcal{O}_{0:L}\right) = \log \int\mathbb{P}\left(s_{0:L},a_{0:L},\tau_{0:L-1}, \mathcal{O}_{0:L}\right)d{s_{0:L}}d{a_{0:L}}d{\tau_{0:L-1}},
\end{align}
where $\mathbb{P}\left(s_{0:L},a_{0:L},\tau_{0:L-1}, \mathcal{O}_{0:L}\right)$ is given in \eqref{Eq:offline_full_dist}.
As the log function includes a high-dimensional integration, evaluating $L(\mu)$ incurs intensive computation. Expectation-Maximization algorithm instead considers a lower bound of $L(\mu)$ to make the evaluation/optimization tractable:
\begin{align}\label{Eq:ELBO}
    L(\mu) \geq& \log\mathbb{P}\left(\mathcal{O}_{0:L}\right) - \kld{\widehat{\mathbb{P}}\left(s_{0:L},a_{0:L},\tau_{0:L-1}\right)}{\mathbb{P}\left(s_{0:L},a_{0:L},\tau_{0:L-1}| \mathcal{O}_{0:L}\right)} \nonumber \\
    = & \bigintssss \widehat{\mathbb{P}}\left(s_{0:L},a_{0:L},\tau_{0:L-1}\right) \log\mathbb{P}\left(s_{0:L},a_{0:L},\tau_{0:L-1}, \mathcal{O}_{0:L}\right)d{s_{0:L}}d{a_{0:L}}d{\tau_{0:L-1}} \nonumber \\
    & -\mathcal{H}\left[\widehat{\mathbb{P}}\left(s_{0:L},a_{0:L},\tau_{0:L-1}\right)\right],
\end{align}
where the inequality is due to the non-negativity of KL divergence, and $\widehat{\mathbb{P}}\left(s_{0:L},a_{0:L},\tau_{0:L-1}\right)$ is an approximation to the exact posterior $\mathbb{P}\left(s_{0:L},a_{0:L},\tau_{0:L-1}| \mathcal{O}_{0:L}\right)$. The lower bound is tighter with the more exact approximation for the posterior. In previous subsection, we introduce the structured  variational approximation with form of \eqref{Eq:offline_variational} to emphasize pessimism on the transition dynamics. Although this variational posterior would lead to non-zero KL term, it promotes learning robust policy as we discussed in previous subsection. Since that the variational posterior is with an adjustable policy $\pi$, we denote the lower bound by $\Bar{L}(\mu;\pi)$.

By substituting \eqref{Eq:offline_variational} into \eqref{Eq:ELBO}, it follows
\begin{align} \label{Eq:ELBO_2}
    \Bar{L}(\mu;\pi) =& \mathbb{E}_{\rho_0, \widetilde{\mathbb{P}}_T, \tau_{0:L-1},\pi}\left[\sum_{t=0}^L \log\mu(a_t|s_t)\right] + C'' \nonumber \\
    =& \mathbb{E}_{\rho_0, \widetilde{\mathbb{P}}_T, \tau_{0:L-1},\pi}\left[\sum_{t=0}^L \log\mu(a_t|s_t) - \log\pi(a_t|s_t) + \log\pi(a_t|s_t)\right] + C'' \nonumber \\
    = &\mathbb{E}_{\rho_0, \widetilde{\mathbb{P}}_T, \tau_{0:L-1},\pi}\left[\sum_{t=0}^L -\kld{\pi(\cdot|s_t)}{\mu(\cdot|s_t)}\right] + C''',
\end{align}
where $C''$ and $C'''$ includes the constant terms irrelevant to $\mu$. According to the form of \eqref{Eq:ELBO_2}, given fixed $\pi$, the optimal prior policy to maximize $\Bar{L}(\mu;\pi)$ is obtained as $\mu=\pi$. Maximizing the lower bound is known as Maximization step.

Recall $\pi$ in the variational posterior is adjustable, we can optimize it by minimizing $\kld{\widehat{\mathbb{P}}\left(s_{0:L},a_{0:L},\tau_{0:L-1}\right)}{\mathbb{P}\left(s_{0:L},a_{0:L},\tau_{0:L-1}| \mathcal{O}_{0:L}\right)}$ to tighten the bound. The minimization procedure is known as Expectation step. In our case, the minimization problem is exactly the one discussed in previous subsection. 

When repeatedly and alternately applying the Expectation and Maximization steps, the iterative regularized policy optimization algorithm is recovered. According to Theorem \ref{Theo:IRPO}, both $\pi$ and $\mu$ continuously improve regarding the objective function.

\section{Algorithm and Implementation Details for Model-Based Offline RL with PMDB}\label{Appendix:Algorithm}
The pseudocode for model-based offline RL with PMDB is presented in Algorithm \ref{Alg:PMDB}. As $\rho_0$ is unknown in practice, we uniformly sample states from $\mathcal{D}$ as the initial $\{s_0\}$. In Step 4, the primary players act according to the non-parametric policy $\pi$, rather than its approximated policy $\pi_\phi$. This is because during learning process $\pi_\phi$ is not always trained adequately to approximate $\pi$, then following $\pi_\phi$ will visit unexpected states. In Step 11, the reference policy $\pi_{\phi'}$ is returned as the final policy, considering that it is more stable than $\pi_{\phi}$. 

\begin{algorithm}
  \caption{Model-Based Offline RL with PMDB}
  \label{Alg:PMDB}
  \textbf{Require}: $\mathcal{D}, \mathbb{P}_{T}, N, k, M$.
  \begin{algorithmic}[1]
    \STATE \textbf{Approximator initialization:} Randomly initialize Q-function $Q_\theta(s,a)$ and policy $\pi_\phi(a|s)$; Initialize target Q-function $Q_{\theta'}(s,a)$ and reference policy $\pi_{\phi'}(a|s)$ with $\theta'\leftarrow\theta, \phi'\leftarrow\phi$.
    \STATE \textbf{Game initialization:} Randomly sample $C$ states from $\mathcal{D}$, as the initial states for $C$ paralleled games $\{s\}$.
    \FOR{step $t=1,2,\cdots,M$}
        \STATE \textbf{Primary players:} Sample actions according to
        \begin{align*}
            \pi(a|s)\propto\pi_{\phi'}(a|s)\exp\left(\frac{1}{\alpha}Q_\theta(s,a)\right).
        \end{align*}
        \STATE \textbf{Game transitions:} Sample candidate sets $\{\mathcal{T}\}$ according to \eqref{Eq:candidate_set}.
        \STATE \textbf{Update:} Sample a batch of transitions from $\mathcal{D}$, together with the $C$-size game transitions $\{(s,a,\mathcal{T})\}$, to update $\theta$ and $\phi$ via one-step gradient descent regarding \eqref{Eq:Q_loss} and \eqref{Eq:P_loss}.
        \STATE \textbf{Secondary players:} Determine whether to exploit or explore: with probability of $(1-\epsilon)$,
        \begin{align*}
            \bar{\tau} = \arg\mink{\tau\in\mathcal{T}} \mathbb{E}_{\tau}\left[ \alpha\log\mathbb{E}_{\pi_{\phi'}}\exp{\left(\frac{1}{\alpha}Q_{\theta}(s',a')\right)} \right],
        \end{align*}
        otherwise randomly choose $\bar{\tau}$ from $\mathcal{T}$.
        \STATE \textbf{Game transitions:} Sample states following $\{\bar{\tau}\}$ to update $\{s\}$. For terminal states in $\{s\}$, use random samples from $\mathcal{D}$ to replace them.
        \STATE \textbf{Moving-average update:} Update reference policy and target Q-function with
        \begin{align*}
            \phi'   &\leftarrow \omega_1\phi + (1-\omega_1)\phi', \\
            \theta' &\leftarrow \omega_2\theta + (1-\omega_2)\theta'.
        \end{align*}
    \ENDFOR
    \RETURN $\pi_{\phi'}$.
  \end{algorithmic}
\end{algorithm}

\paragraph{Computing expectation} Algorithm \ref{Alg:PMDB} involves the computation of expectation. In discrete domains, the expectation can be computed exactly. In continuous domains, we use Monte Carlo methods to approximate it. Concretely, for the expectation over states we apply vanilla Monte Carlo sampling, while for the expectation over actions we apply importance sampling. To elaborate, the expectation over actions can be written as
\begin{align*}
    \mathbb{E}_{\mu}\exp\left(\frac{1}{\alpha}Q(s,a)\right)&=\frac{1}{2}\left[\mathbb{E}_{\mu}\exp\left(\frac{1}{\alpha}Q(s,a)\right) + \mathbb{E}_{q}\frac{\mu(a|s)\exp\left(\frac{1}{\alpha}Q(s,a)\right)}{q(a|s)}\right] \nonumber \\
    & \approx \frac{1}{2n}\left[\sum_{a_i\sim\mu(\cdot|s)}^n \exp{\left(\frac{1}{\alpha}Q(s,a_i)\right)} + \sum_{a_i\sim q(\cdot|s)}^n \frac{\mu(a_i|s)\exp\left(\frac{1}{\alpha}Q(s,a_i)\right)}{q(a_i|s)} \right],
\end{align*}
where $q$ is the proposal distribution. 

In Algorithm \ref{Alg:PMDB}, the above expectation is computed for both $s\in\mathcal{D}$ and $s\in\mathcal{D}'$. For $s\in\mathcal{D}$, we choose 
\begin{align*}
    q(\cdot|s)=\mathcal{N}(\ \cdot \ ; a,\sigma^2 I), \quad \text{where } a|s\sim \mathcal{D},
\end{align*}
i.e., the samples are drawn close to the data points, and $\sigma^2$ determines how much they keep close. For example, in Step 6 a batch of $\{(s,a,s')\}$ are sampled from $\mathcal{D}$ to calculate \eqref{Eq:P_loss}, then we construct the proposal distribution as above for each  $(s,a)$ in the batch. The motivation of drawing actions near the data samples is to enhance learning in the multi-modal scenario, where the offline dataset $\mathcal{D}$ is collected by mixture of multiple policies. If $\mu$ is single-modal (say the widely adopted Gaussian policy) and we solely draw samples from it to approximate the expectation, these samples will be locally clustered. Then, applying them to update $\pi_\theta$ in \eqref{Eq:P_loss} can be easily get stuck at local optimum. 

For $s\in\mathcal{D'}$, we choose 
\begin{align*}
    q(\cdot|s)=\pi_\theta(\cdot|s).
\end{align*}
The reason is that $\pi_\theta$ is an approximator to the improved policy with higher Q-value, and sampling from it hopefully reduces variance of the Monte Carlo estimator.

Although applying Monte Carlo methods to approximate the expectation incurs extra computation, all the operators can be executed in parallel. In the experiments, we use $10$ and $20$ samples respectively for the expectations over state and action, and the algorithm is run on a single machine with one Quadro RTX 6000 GPU. The results show that in average it takes 73.4 s to finish 1k training steps, and the GPU memory cost is 2.5 GB.

Several future directions regarding the Monte Carlo method are worthy to explore. For example, by reducing the sample size for the expectation over state, the optimized policy additionally tends to avoid the risk due to aleatoric uncertainty (while in this work we focus on epistemic uncertainty).  Besides, the computational cost can be reduced by more aggressive Monte Carlo approximation, for example only using mean action to compute the expectation in terms of policy. We leave these as future work.

\section{Choice of Initial Dynamics Belief}
In offline setting, extra knowledge is strongly desired to aggressively optimize policy. The initial dynamics belief provides an interface to absorb the aforehand knowledge of system transition. In what follows, we illustrate several potential usecases:
\begin{itemize}
    \item Consider the physical system where the dynamics can be described as mathematical expression but with uncertain parameter. If we have a narrow distribution over the parameter (according to expert knowledge or inferred from data), the system is almost known for certain. Here, both the mathematical expression and narrow distribution provide more information.
    \item Consider the case where we know the dynamics is smooth with probability of 0.7 and periodic with probability of 0.3. Gaussian processes (GPs) with RBF kernel and periodic kernel can well encode these prior knowledge. Then, the 0.7-0.3 mixture of the two GPs trained with offline data can act as the dynamics belief to provide more information.
    \item In the case where multi-task datasets are available, we can train dynamics models using each of the datasets and assign likelihood ratios to these models. If the likelihood ratio well reflects the similarity between the concerned task and the offline tasks, the multi-task datasets promote knowledge.
\end{itemize}
The performance gain is expected to monotonously increase with the amount of correct knowledge. As an impractical but intuitive example, with the exact knowledge of system transition (the initial belief is a delta function), the proposed approach is actually optimizing policy as in real system.

In practice, the expert knowledge is not available everywhere. When unavailable, the best we can hope for is that the final policy stays close to the dataset, but unnecessary to be fully covered (as we want to utilize the generalization ability of dynamics model at least around the data). To that end, the dynamics belief is desired to be certain at the region in distribution of dataset, and turns more and more uncertain when departing. It has been reported that the simple model ensemble leads to such a behavior \cite{Ensemble}. In this sense, the uniform distribution over learned dynamics ensemble can act as a quite common belief. In the experiments, we apply it for fair comparison with baseline methods.

\section{Automatically Adjusting KL Coefficient}
In Section \ref{Section:PO_PMDB}, the KL regularizer is introduced to restrict $\pi_\phi$ in a small region near $\pi_{\phi'}$, such that the Q-value can be evaluated sufficiently before policy improvement. Apart from fixing the KL coefficient $\alpha$ throughout, we provide a strategy to automatically adjust it.

Note that the optimal policy to minimize $L_P$ in \eqref{Eq:P_loss} is $\frac{\pi_{\phi'}(\cdot|s)\exp\left(\frac{1}{\alpha}Q_\theta(s,\cdot)\right)}{\mathbb{E}_{\pi_{\phi'}}\left[\exp\left(\frac{1}{\alpha}Q_\theta(s,a)\right)\right]}$. The criterion of choosing $\alpha$ is to constrain the KL divergence between this policy and $\pi_{\phi'}$ smaller than a specified constant, i.e.,
\begin{align} \label{Eq:KL_bound}
    \kld{\frac{\pi_{\phi'}(\cdot|s)\exp\left(\frac{1}{\alpha}Q_\theta(s,\cdot)\right)}{\mathbb{E}_{\pi_{\phi'}}\left[\exp\left(\frac{1}{\alpha}Q_\theta(s,a)\right)\right]}}{\pi_{\phi'}(\cdot|s)} \leq d.
\end{align}
Finding $\alpha$ to satisfy the above inequation is intractable, instead we consider a surrogate of the KL divergence:
\begin{align*}
    &\kld{\frac{\pi_{\phi'}(\cdot|s)\exp\left(\frac{1}{\alpha}Q_\theta(s,\cdot)\right)}{\mathbb{E}_{\pi_{\phi'}}\left[\exp\left(\frac{1}{\alpha}Q_\theta(s,a)\right)\right]}}{\pi_{\phi'}(\cdot|s)}  \\
    &= \mathbb{E}_{\pi_{\phi'}}\left[\frac{\exp\left(\frac{1}{\alpha}Q_\theta(s,a)\right)}{\mathbb{E}_{\pi_{\phi'}}\left[\exp\left(\frac{1}{\alpha}Q_\theta(s,a)\right)\right]} \cdot \frac{1}{\alpha}Q_\theta(s,a)  \right] - \log \mathbb{E}_{\pi_{\phi'}}\left[\exp\left(\frac{1}{\alpha}Q_\theta(s,a)\right)\right] \\
    &\leq \frac{1}{\alpha}\left(\mathbb{E}_{\pi_{\phi'}}\left[\frac{\exp\left(\frac{1}{\alpha_0}Q_\theta(s,a)\right)}{\mathbb{E}_{\pi_{\phi'}}\left[\exp\left(\frac{1}{\alpha_0}Q_\theta(s,a)\right)\right]} \cdot Q_\theta(s,a)  \right] - \mathbb{E}_{\pi_{\phi'}}\left[Q_\theta (s,a)\right]\right),
\end{align*}
where $\alpha_0$ is a predefined lower bound of $\alpha$.

Then, \eqref{Eq:KL_bound} can be satisfied by setting 
\begin{align*}
    \alpha \geq \frac{1}{d}\left(\mathbb{E}_{\pi_{\phi'}}\left[\frac{\exp\left(\frac{1}{\alpha_0}Q_\theta(s,a)\right)}{\mathbb{E}_{\pi_{\phi'}}\left[\exp\left(\frac{1}{\alpha_0}Q_\theta(s,a)\right)\right]} \cdot Q_\theta(s,a)  \right] - \mathbb{E}_{\pi_{\phi'}}\left[Q_\theta (s,a)\right]\right).
\end{align*}
Combining with the predefined lower bound, we choose $\alpha$ as
\begin{align*}
    \alpha = \max\left(\frac{1}{d}\left(\mathbb{E}_{\pi_{\phi'}}\left[\frac{\exp\left(\frac{1}{\alpha_0}Q_\theta(s,a)\right)}{\mathbb{E}_{\pi_{\phi'}}\left[\exp\left(\frac{1}{\alpha_0}Q_\theta(s,a)\right)\right]} \cdot Q_\theta(s,a)  \right] - \mathbb{E}_{\pi_{\phi'}}\left[Q_\theta (s,a)\right]\right), \alpha_0\right).
\end{align*}
In practice, the expectation can be estimated over Monte Carlo samples.
Note that the coefficient can be computed individually for each state,  picking $d$ is hopefully easier than picking $\alpha$ suitable for all states. 

\section{Additional Experimental Setup} \label{Appendix:setup}
\paragraph{Task Domains} We evaluate the proposed methods and the baselines on eighteen domains involving three environments (hopper, walker2d, halfcheetah), each with six dataset types. The dataset types are collected by different policies, denoted by \emph{random}: a randomly initialized policy, \emph{expert}: a policy trained to completion with SAC, \emph{medium}: a policy trained to approximately 1/3 the performance of the expert, \emph{medium-expert}: 50-50 mixture of medium and expert data, \emph{medium-replay}: the replay buffer of a policy trained up to the performance of the medium agent, \emph{full-replay}: the replay buffer of a policy trained up to the performance of the expert agent. 

\paragraph{Dynamics Belief} We adopt an uniform distribution over dynamics model ensemble as the initial belief. The ensemble contains $100$ neural networks, each is with $4$ hidden layers and $256$ hidden units per layer. All the neural networks are trained independently with the sample dataset $\mathcal{D}$ and in parallel. The training process stops after the average training loss does not change obviously. Specifically, the number of epochs for hopper-random and walker2d-medium are $2000$, and those for other tasks are $1000$. Note that the level of pessimism depends on the candidate size $N\ (=10$ by default), rather than the ensemble size. 

\paragraph{Policy Network and Q Network} The policy network is with $3$ hidden layers and $256$ hidden units per layer. It outputs the mean and the diagonal variance for a Gaussian distribution, which is then transformed via tanh function to generate the policy. When evaluating our approach, we apply the deterministic policy, where the action is the tanh transformation of the Gaussian mean. The Q network is with the same architecture as the policy network except the output layer. Similar to existing RL approaches \cite{SAC}, we make use of two Q networks and apply the minimum of them for calculation in Algorithm \ref{Alg:PMDB}, in order to mitigate over-estimation when learning in the AMG. The policy learning stops after the performance in AMG does not change obviously. Specifically, the gradient steps for walker2d-random, halfcheetah-random and hopper with all dataset types are $1$ million, and those for other tasks are $2$ millon. 

\paragraph{Hyperparameters} We list the detailed hyperparameters in Table \ref{Table:hyperparas}.
\begin{longtable}[c]{l|l} 
  \toprule
  \textbf{Parameter} & \textbf{Value}  \\
  \midrule
  dynamics learning rate      & $10^{-4}$    \\
  policy learning rate        & $3\cdot 10^{-5}$    \\
  Q-value learning rate         & $3\cdot 10^{-4}$   \\
  discounted factor ($\gamma$)        & $0.99$  \\
  smoothing coefficient for policy ($\omega_1$) & $10^{-5}$ \\
  smoothing coefficient for Q-value ($\omega_2$) & $5\cdot 10^{-3}$ \\
  Exploration ratio for secondary player ($\epsilon$) & $0.1$ \\
  KL coefficient ($\alpha$) & $0.1$   \\
  variance for important sampling ($\sigma^2$) & 0.01 \\
  Batch size for dynamics learning & $256$ \\
  Batch size for AMG and MDP & $128$ \\
  Maximal horizon of AMG & $1000$     \\
  \bottomrule
  \caption{Hyperparameters} \label{Table:hyperparas}
\end{longtable}

\section{Practical Impact of $N$} \label{Section:impact}
Table \ref{Table:hyperpara_N} lists the impact of $N$. The performance in the AMGs improve when decreasing $k$. Regarding the performance in true MDPs, we notice that $N=15$ corresponds to the best performance for hopper, but for the others $N=5$ is better.
\begin{table}[h]\centering
\small
\begin{tabular}{ccccccc}
      \toprule
      &\multicolumn{2}{c}{hopper-medium}
      &\multicolumn{2}{c}{walker2d-medium} &\multicolumn{2}{c}{halfcheetah-medium} \\
      \cmidrule(lr){2-3}\cmidrule(lr){4-5}\cmidrule(lr){6-7}
      $N$ & MDP & AMG & MDP & AMG & MDP & AMG \\
      \midrule
      5 & 90.2$\pm$25.4   & 108.6$\pm$2.2   & 112.7$\pm$0.9   & 101.7$\pm$5.7   & 79.8$\pm$0.4 & 69.5$\pm$1.6 \\
      10 & 106.8$\pm$0.2   & 105.2$\pm$1.6   & 94.2$\pm$1.1   & 77.2$\pm$3.7   & 75.6$\pm$1.3 & 67.3$\pm$1.1 \\
      15 & 107.3$\pm$0.2   & 103.1$\pm$1.8   & 92.1$\pm$0.3   & 68.3$\pm$6.7    & 75.4$\pm$0.4 & 63.2$\pm$2.3 \\
      \bottomrule
\end{tabular}%
\caption{\small Impact of $N$, with $k=2$.}
\label{Table:hyperpara_N}
\vspace{-2mm}
\end{table}
\normalsize

\section{Ablation of Randomness of $\mathcal{T}$}
Compared to the standard Bellman backup operator in Q-learning, the proposed one additionally includes the expectation over $\mathcal{T}\sim\mathcal{P}_T^N$ and the $k$-minimum operator over $\tau\in\mathcal{T}$. We report the impact of choosing different $k$ in Table 2, and present the impact of the randomness of $\mathcal{T}$ as below. Fixed $\mathcal{T}$ denotes that after sampling once $\mathcal{T}$ from the belief distribution we keep it fixed during policy optimization.

\begin{longtable}[c]{l|r|r} 
  \toprule
  \textbf{Task Name} & \textbf{Stochastic $\mathcal{T}$} & \textbf{Fixed $\mathcal{T}$} \\
  \midrule
  hopper-medium      & 106.8 $\pm$ 0.2  & 106.2 $\pm$ 0.3  \\
  walker2d-medium        & 94.2 $\pm$ 1.1   & 90.1 $\pm$ 4.3  \\
  halfcheetah-medium         & 75.6 $\pm$ 1.3 & 73.1 $\pm$ 2.8   \\
  \bottomrule
  \caption{Impact of randomness of $\mathcal{T}$}
\end{longtable}

We observe that the randomness of $\mathcal{T}$ has a mild effect on the performance in average. The reason can be that we apply the uniform distribution over dynamics ensemble as initial belief (without additional knowledge to insert). The model ensemble is reported to produce low uncertainty estimation in distribution of data coverage and high estimation when departing the dataset \cite{Ensemble}. This property makes the optimized policy keep close to the dataset, and it does not rely on the randomness of ensemble elements. However, involving the randomness can lead to more smooth variation of the estimated uncertainty, which benefits the training process and results in better performance. Apart from these empirical results, we highlight that in cases with more informative dynamics belief, only picking several fixed samples from the belief distribution as $\mathcal{T}$ will result in the loss of knowledge.

\section{Weighting AMG Loss and MDP Loss in \eqref{Eq:Q_loss}}
In \eqref{Eq:Q_loss}, the Q-function is trained to minimize the Bellman residuals of both the AMG and the empirical MDP, equipped with the same weight (both are 1). In the following table, we show experiment results to check the impact of different weights. 

\begin{longtable}[c]{l|r|r|r} 
  \toprule
  \textbf{Task Name} & \textbf{0.5:1.5} & \textbf{1.0:1.0} & \textbf{1.5:0.5} \\
  \midrule
  hopper-medium     &  106.6 $\pm$ 0.3 & 106.8 $\pm$ 0.2  & 106.5 $\pm$ 0.3  \\
  walker2d-medium   &  93.8 $\pm$ 1.5     & 94.2 $\pm$ 1.1   & 93.1 $\pm$ 1.3  \\
  halfcheetah-medium    & 75.2 $\pm$ 0.8     & 75.6 $\pm$ 1.3 & 76.1 $\pm$ 1.0   \\
  \bottomrule
  \caption{Impact of weights in \eqref{Eq:Q_loss}}
\end{longtable}

The results suggests that the performance does not obviously depend on the weights. But in cases with available expert knowledge about dynamics, the weights can be adjusted to match our confidence on the knowledge, i.e., the less confidence, the smaller weight for AMG.

\section{Comparison with RAMBO}
We additionally compared the proposed approach with RAMBO \cite{RAMBO}, a concurrent work that also formulates offline RL as a two-player zero-sum game. The results of RAMBO for random, medium, medium-expert and medium-replay are taken from \cite{RAMBO}. For the other two dataset types, we run the official code and follow the hyperparameter search procedure reported in its paper.

\tiny
\begin{longtable}[c]{l|rrrrrrrrrr} 
  \toprule
  \textbf{Task Name} & \textbf{BC} & \textbf{BEAR} & \textbf{BRAC} &  \textbf{CQL} & \textbf{MOReL}  & \textbf{EDAC} & \textbf{RAMBO} & \textbf{PMDB} \\
  \midrule
  hopper-random        & 3.7$\pm$0.6   & 3.6$\pm$3.6   & 8.1$\pm$0.6   & 5.3$\pm$0.6   & \textbf{38.1$\pm$10.1} & 25.3$\pm$10.4 & 25.4$\pm$7.5 & 32.7$\pm$0.1  \\
  hopper-medium        & 54.1$\pm$3.8  & 55.3$\pm$3.2  & 77.8$\pm$6.1  & 61.9$\pm$6.4  & 84.0$\pm$17.0 & 101.6$\pm$0.6 & 87.0$\pm$15.4 & \textbf{106.8$\pm$0.2} \\
  hopper-expert        & 107.7$\pm$9.7 & 39.4$\pm$20.5 & 78.1$\pm$52.6 & 106.5$\pm$9.1 & 80.4$\pm$34.9 & \textbf{110.1$\pm$0.1} & 50.0$\pm$8.1 & \textbf{111.7$\pm$0.3} \\
  hopper-medium-expert & 53.9$\pm$4.7  & 66.2$\pm$8.5  & 81.3$\pm$8.0  & 96.9$\pm$15.1 & 105.6$\pm$8.2 & \textbf{110.7$\pm$0.1} & 88.2$\pm$20.5 & \textbf{111.8$\pm$0.6} \\
  hopper-medium-replay & 16.6$\pm$4.8  & 57.7$\pm$16.5 & 62.7$\pm$30.4 & 86.3$\pm$7.3  & 81.8$\pm$17.0 & 101.0$\pm$0.5 & 99.5$\pm$4.8 & \textbf{106.2$\pm$0.6} \\
  hopper-full-replay   & 19.9$\pm$12.9 & 54.0$\pm$24.0 & 107.4$\pm$0.5 & 101.9$\pm$0.6 & 94.4$\pm$20.5 & 105.4$\pm$0.7 & 105.2 $\pm$2.1 & \textbf{109.1$\pm$0.2} \\
  \midrule
  walker2d-random        & 1.3$\pm$0.1   & 4.3$\pm$1.2    & 1.3$\pm$1.4   & 5.4$\pm$1.7   & 16.0$\pm$7.7  & 16.6$\pm$7.0 & 0.0$\pm$0.3 & \textbf{21.8$\pm$0.1} \\
  walker2d-medium        & 70.9$\pm$11.0 & 59.8$\pm$40.0  & 59.7$\pm$39.9 & 79.5$\pm$3.2  & 72.8$\pm$11.9 & 92.5$\pm$0.8 & 84.9 $\pm$2.6 & \textbf{94.2$\pm$1.1} \\
  walker2d-expert        & 108.7$\pm$0.2 & 110.1$\pm$0.6  & 55.2$\pm$62.2 & 109.3$\pm$0.1 & 62.6$\pm$29.9 & \textbf{115.1$\pm$1.9} & 1.6$\pm$2.3 & \textbf{115.9$\pm$1.9} \\
  walker2d-medium-expert & 90.1$\pm$13.2 & 107.0$\pm$2.9  & 9.3$\pm$18.9  & 109.1$\pm$0.2 & 107.5$\pm$5.6 & \textbf{114.7$\pm$0.9} & 56.7$\pm$39.0 & 111.9$\pm$0.2 \\
  walker2d-medium-replay & 20.3$\pm$9.8  & 12.2$\pm$4.7   & 40.1$\pm$47.9 & 76.8$\pm$10.0 & 40.8$\pm$20.4 & 87.1$\pm$2.3 & \textbf{89.2$\pm$6.7} & 79.9$\pm$0.2 \\
  walker2d-full-replay   & 68.8$\pm$17.7 & 79.6$\pm$15.6  & 96.9$\pm$2.2  & 94.2$\pm$1.9  & 84.8$\pm$13.1 & \textbf{99.8$\pm$0.7} & 88.3$\pm$4.9 & 95.4$\pm$0.7\\
  \midrule
  halfcheetah-random        & 2.2$\pm$0.0  & 12.6$\pm$1.0 & 24.3$\pm$0.7  & 31.3$\pm$3.5 & \textbf{38.9$\pm$1.8}  & 28.4$\pm$1.0 & \textbf{39.5$\pm$3.5} & \textbf{37.8 $\pm$ 0.2} \\
  halfcheetah-medium        & 43.2$\pm$0.6 & 42.8$\pm$0.1 & 51.9$\pm$0.3  & 46.9$\pm$0.4 & 60.7$\pm$4.4  & 65.9$\pm$0.6 & \textbf{77.9 $\pm$4.0} & 75.6$\pm$ 1.3 \\
  halfcheetah-expert        & 91.8$\pm$1.5 & 92.6$\pm$0.6 & 39.0$\pm$13.8 & 97.3$\pm$1.1 & 8.4$\pm$11.8  & \textbf{106.8$\pm$3.4} & 79.3$\pm$15.1 &\textbf{105.7$\pm$ 1.0} \\
  halfcheetah-medium-expert & 44.0$\pm$1.6 & 45.7$\pm$4.2 & 52.3$\pm$0.1  & 95.0$\pm$1.4 & 80.4$\pm$11.7 & 106.3$\pm$1.9 & 95.4 $\pm$5.4 &\textbf{108.5$\pm$0.5} \\
  halfcheetah-medium-replay & 37.6$\pm$2.1 & 39.4$\pm$0.8 & 48.6$\pm$0.4  & 45.3$\pm$0.3 & 44.5$\pm$5.6  & 61.3$\pm$1.9 & 68.7$\pm$ 5.3 &\textbf{71.7$\pm$1.1} \\
  halfcheetah-full-replay   & 62.9$\pm$0.8 & 60.1$\pm$3.2 & 78.0$\pm$0.7  & 76.9$\pm$0.9 & 70.1$\pm$5.1  & 84.6$\pm$0.9 & 87.0$\pm$3.2 & \textbf{90.0$\pm$0.8} \\
  \midrule
  Average                   &49.9          & 52.4         &54.0           &73.7          & 65.1       & 85.2  & 68.0 & \textbf{88.2} \\
  \bottomrule
  \caption{\small Extended Results for D4RL datasets.}
\end{longtable}
\normalsize

The results show our approach outperforms RAMBO on most of considered tasks. One reason can be that the problem formulation of RAMBO is based on robust MDP, whose defects are discussed in Section \ref{section:preliminaries} and Appendix \ref{Section:related}.

\end{document}